%% file: main.tex
\documentclass[10pt,twocolumn,letterpaper]{article}

\usepackage[pagenumbers]{style}     
\definecolor{styleblue}{rgb}{0.21,0.49,0.74}
\usepackage[pagebackref,breaklinks,colorlinks,allcolors=styleblue]{hyperref}

\input{preamble}

\title{\textit{Off the Rails}: Hijacking the Scoring Head in Generative End-to-End Driving Planners with Safety-Violating Adversarial Perturbations}

\author{Halima Bouzidi\textsuperscript{*} \;\; Mboutidem Mkpong \;\; Haoyu Liu \;\; Mohammad Al Faruque \\
University of California, Irvine, CA, USA \\
{\tt\small \{hbouzidi, mkpongm, hliu32, alfaruqu\}@uci.edu}
}

\begin{document}
\maketitle
\renewcommand{\thefootnote}{\fnsymbol{footnote}}
\footnotetext[1]{Corresponding Author.}

\input{sec/0_abstract}    
\input{sec/1_intro}
\input{sec/2_related_work}

\input{sec/3_approach}
\input{sec/4_evaluation}

\input{sec/5_conclusion}

{
    \small
    \bibliographystyle{ieeenat_fullname}
    \bibliography{main}
}

\newpage

\input{sec/X_supp}

\end{document}

%% file: preamble.tex
\usepackage{graphicx}
\usepackage{booktabs}

\usepackage{multirow}      
\usepackage[table]{xcolor} 
\usepackage{colortbl}      
\usepackage{caption}       
\usepackage{titletoc} 
\usepackage{cuted} 

\usepackage{amsmath, amsthm, amssymb}

\newtheorem{theorem}{Theorem}
\newtheorem{corollary}{Corollary}
\newtheorem{proposition}{Proposition}
\newtheorem{definition}{Definition}  
\newtheorem{remark}{Remark}          

\usepackage[accsupp]{axessibility}

\usepackage{orcidlink}

\definecolor{HeaderGray}{RGB}{235,235,235}
\definecolor{CleanGreen}{RGB}{213,233,213}
\definecolor{DeepRed}{RGB}{160,30,30}
\definecolor{DarkOrange}{RGB}{210,110,0}

\definecolor{better}{RGB}{34,139,34}   
\definecolor{worse}{RGB}{160,30,30}   

%% file: sec/0_abstract.tex
\begin{abstract}

Generative models have recently seen rapid adoption in End-to-End (E2E) autonomous driving (AD), with diffusion-based denoising and vocabulary-based retrieval becoming the dominant trajectory-decoding paradigms. Despite their architectural diversity, current generative AD planners share a common inference pattern: a fixed set of candidate trajectories (anchors, vocabulary entries, or proposal queries) is scored by one or more learned heads conditioned on the Bird's-Eye-View (BEV) features, and the highest-scored candidate is returned as the final trajectory. Under this design, the scoring head is the only barrier between perception and the motion command, and its decision margins between competing candidates are often small. We introduce \textsc{Derail}, an adversarial framework that exploits this scoring-head attack surface. Evaluated on various generative planners, \textsc{Derail} flips the trajectory selection from a safe to an unsafe candidate, with score drops of $39$--$80\%$ and collision rates of up to $50\%$, consistently outperforming generic loss-maximization and feature-divergence attacks. Our analysis suggests that safety-violating objectives govern attack effectiveness against generative AD planners, and that the scoring-head inference pattern itself is a recurring attack surface worth explicit defensive consideration.

\end{abstract}

%% file: sec/1_intro.tex
\section{Introduction}

Conditional generative modeling \cite{sohn2015learning, li2019conditional, ajayconditional, songscore, he2023diffusion} has reshaped policy learning, offering a principled framework to capture the inherent multi-modality of complex autonomous applications \cite{ivanovic2020multimodal, gomez2020real, wangrobogen, zhang2025generative, wang2025generative}. In Autonomous Driving (AD), generative planners, spanning diffusion-based denoising and vocabulary-based retrieval, have become dominant trajectory-decoding architectures due to their capacity to model non-Gaussian action distributions and outperform regression baselines \cite{ho2020denoising, janner2022planning, mishra2023generative, li2024crossway, luo2024pot, chi2025diffusion, arnelid2019recurrent, zheng2024genad, wang2025generative}. Conditioned on multimodal sensor data such as multi-camera imagery and derived Bird's-Eye-View (BEV) features, these systems generate trajectories that account for both road topology and the non-linear interactions of surrounding agents \cite{zhengdiffusion, liao2025diffusiondrive, wang2025diffad, li2025generalized, zheng2026unleashing, yao2026reflexdiffusion, reddy2026rapid, liu2024ddm}.

\noindent With the increasing adoption of generative planners in commercial AD systems~\cite{jiang2023motiondiffuser}, their adversarial robustness has become a first-order safety question. Because generative policies are strictly context-dependent, they inherit the well-known vulnerabilities of deep visual encoders \cite{szegedy2013intriguing, pattanaik2018robust, gleave2019adversarial, sun2020stealthy, mo2022attacking, bouzidi2025see, bouzidi2026out}. Existing adversarial work on generative policies, however, has been developed primarily for robotic manipulation, where attacks such as DP-Attacker~\cite{chen2024diffusion} target the noise-prediction loss inside long denoising chains. This methodology is misaligned with AD planners, which run under tight real-time budgets, use very short denoising or single-step decoding schedules, and, more importantly, do not actually output a freely refined continuous sample: they ultimately \emph{select} a discrete trajectory from a fixed candidate set via learned scoring. The relevant attack surface in AD is therefore not the denoising chain itself, but the scoring rule that turns visual features into a trajectory selection.

\noindent The architectures of prominent recent generative AD planners~\cite{liao2025diffusiondrive, li2025generalized} reveal a consistent inference pattern that we abstract into five stages, regardless of whether the decoder is described as \textit{diffusion} \cite{liao2025diffusiondrive} or \textit{vocabulary} based \cite{li2025generalized}: (1) the multi-view camera image is encoded into a BEV feature map; (2) a \emph{fixed} set of candidate trajectories is instantiated, either as a small anchor set or as a dense trajectory vocabulary; (3) one or more learned heads compute per-candidate scores conditioned on the BEV features; (4) the scoring head selects the highest-scored candidate; (5) no post-hoc geometric or kinematic safety filter is applied. Stages (1) and (3) are fully visual-feature-dependent and differentiable, and the scoring head's final selection (stage 4) is preceded by score margins that, in practice, are small enough that input-space perturbations within the limited budget can flip the selection from a safe candidate to an unsafe one. The number of denoising steps, the size of the candidate set, and the number of scoring heads change the decision \emph{margin} but not the underlying surface: perception flows into a learned score, and the score directly decides the final trajectory in every planner \cite{liao2025diffusiondrive, li2025generalized}.

\noindent \textbf{Scientific Contributions.} We present a vulnerability study of the scoring-head attack surface in generative E2E AD planners. (i) We identify the scoring-head pattern, a learned scoring head over a fixed candidate set with no post-hoc safety filter, as a recurring attack surface that is shared across diffusion- and vocabulary-based AD planners despite their reported architectural differences. (ii) We introduce \textsc{Derail}, a ground-truth-free, safety-violating objective composed of three behavior-level terms that are explicitly aligned with the safety axes of the NAVSIM \cite{dauner2024navsim} safety benchmark, and show that this safety-violating objective is significantly more effective than generic loss-maximization, feature-divergence, and native-objective-maximization~\cite{chen2024diffusion} baselines under the same threat model. (iii) We provide a unified, fully differentiable adversarial attack pipeline that supports both per-frame digital perturbations and simulated, object-aware adversarial patches, allowing the same objective to be deployed across all evaluated planners without per-model tuning. (iv) We evaluate \textsc{Derail} on four state-of-the-art generative AD planners~\cite{liao2025diffusiondrive, li2025generalized} spanning diffusion-based denoising and vocabulary-based retrieval, and show consistent collision rate inflation (up to $50\%$).

%% file: sec/2_related_work.tex
\section{Related Work}

\textbf{Generative Models for Policy Generation.}
Generative modeling has emerged as a principled framework for policy learning in complex autonomous systems \cite{sohn2015learning, ajayconditional, songscore}. While classical imitation learning \cite{schaal1999imitation, pan2018agile, osa2018algorithmic} relies on deterministic regression that suffers from mode averaging \cite{ivanovic2020multimodal, gomez2020real, wangrobogen}, generative models (e.g., diffusion, flow matching) have become standard for behavior cloning and trajectory generation \cite{ho2020denoising, janner2022planning, mishra2023generative, li2024crossway, luo2024pot, chi2025diffusion}. Modern AD planning models \cite{zhengdiffusion, liao2025diffusiondrive, wang2025diffad, li2025generalized, zheng2026unleashing, yao2026reflexdiffusion} incorporate high-level semantic priors such as motion anchors, trajectory vocabularies, and BEV spatial features to meet real-time constraints. We focus on the robustness of generative AD planners, a technology already integrated into commercial-grade systems such as \textit{Waymo}'s MotionDiffuser \cite{jiang2023motiondiffuser} and \textit{Nvidia}'s Cosmos \cite{ren2025cosmos}.

\noindent \textbf{Adversarial Robustness of Generative Planning.}
The vulnerability of deep neural networks to adversarial attacks is well documented \cite{szegedy2013intriguing, gleave2019adversarial, mo2022attacking, pattanaik2018robust, sun2020stealthy}. Attacking generative policies, however, has been studied almost exclusively in robotic manipulation: the recent DP-Attacker \cite{chen2024diffusion} targets camera inputs by maximizing the noise-prediction loss of long denoising chains. Such designs are misaligned with AD planning models, which run very short denoising or single-step decoding schedules and ultimately select a discrete trajectory from a fixed candidate set via learned scoring rather than refining a continuous noise sample. As a result, attacks tuned to maximize per-step noise prediction error often fail to flip the discrete selection that determines the actual planner final motion command.

\noindent \textbf{Vulnerabilities in E2E Autonomous Driving.}
Adversarial research on E2E AD has historically targeted regression-based pipelines such as \textit{UniAD} \cite{hu2023planning} or vectorized planners such as \textit{VAD} \cite{jiang2023vad}. These studies expose weaknesses in perception encoders \cite{wang2024attack, wu2023adversarial, zhang2024uniada, chahe2023dynamic} but stop at perceptual degradation (e.g., detection drops) without examining how downstream trajectory selection amplifies such corruptions. Generative planners differ along this dimension: their inference reduces to scoring a fixed candidate set and selecting via the scoring head, so small perceptual perturbations can trigger discrete mode flips that turn a minor feature corruption into a full trajectory reassignment toward an unsafe candidate. In the physical domain, prior work has demonstrated adversarial patches against object detectors \cite{brown2017adversarial, liu2018dpatch}, but adversarial attacks specifically targeting the scoring-head attack surface of generative AD planners have, to our knowledge, not been studied before. \textsc{Derail} provides a first study of how this scoring-head inference pattern, shared across diffusion-based and vocabulary-based AD planners, can be exploited by safety-violating objectives to induce unsafe trajectories that perception-only attacks cannot reach.

%% file: sec/3_approach.tex
\begin{figure*}[t]
\centering
\includegraphics[width=1.\textwidth]{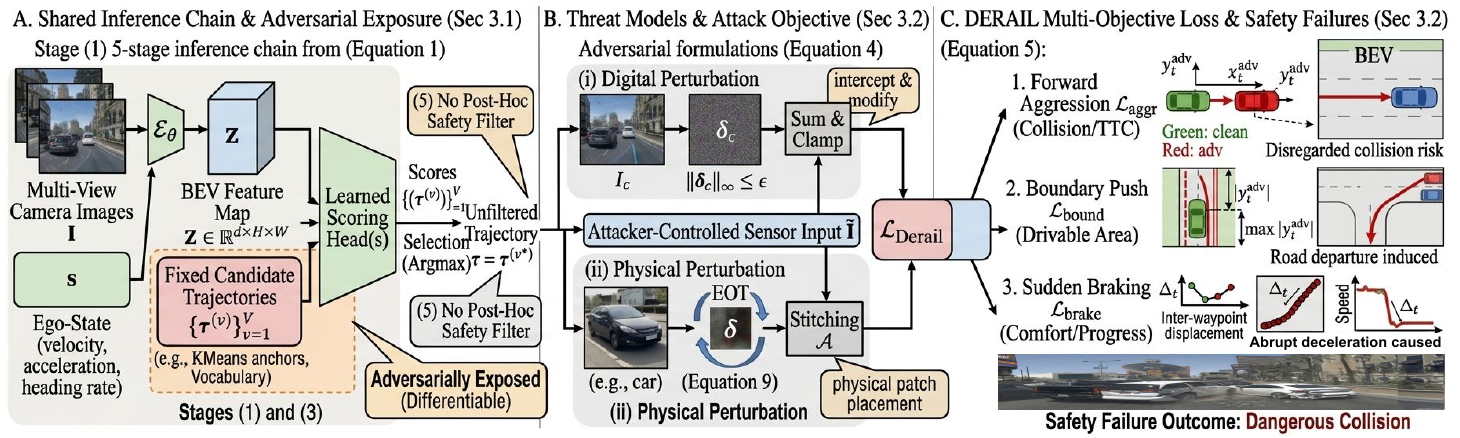}
\vspace{-0.5cm}
\caption{Overview of the \textsc{Derail} framework, targeting generative E2E AD systems.}
\vspace{-0.4cm}
\label{fig:framework}
\end{figure*}

\section{\textsc{Derail}: Framework Design}

\subsection{Preliminaries: Generative Trajectory Planning}
\label{sec:preliminaries}
We consider end-to-end generative AD systems that map raw sensor observations to planned trajectories via learned neural networks. Let $\mathbf{I} = \{I_c\}_{c=1}^{C}$ denote the set of multi-view camera images from $C$ surrounding cameras, and let $\mathbf{s} \in \mathbb{R}^{d_s}$ denote the ego-vehicle state (velocity, acceleration, heading rate). A generative trajectory planning model $f_\theta$ parameterized by weights $\theta$ produces a planned trajectory $\boldsymbol{\tau} = [\mathbf{p}_1, \mathbf{p}_2, \ldots, \mathbf{p}_T] \in \mathbb{R}^{T \times 3}$, where each waypoint $\mathbf{p}_t = (x_t, y_t, \psi_t)$ specifies the planned position and heading at future timestep $t$ over a horizon of $T$ steps.

\vspace{4pt}
\noindent \textbf{The Shared Inference Chain.}
Recent generative planners like DiffusionDrive~\cite{liao2025diffusiondrive} and GTRS~\cite{li2025generalized}, although designed under different headings (diffusion-based denoising for DiffusionDrive, vocabulary-based retrieval for the GTRS variants), instantiate a single five-stage inference chain:
\begin{equation}
    \mathbf{I} \xrightarrow{\mathcal{E}_\theta} \mathbf{Z} \xrightarrow{\mathrm{decoder}} \{(\boldsymbol{\tau}^{(v)}, \ell_v)\}_{v=1}^{V} \xrightarrow{\text{scoring head}} \boldsymbol{\tau}.
    \label{eq:chain}
\end{equation}
\noindent (1) A \emph{visual encoder} $\mathcal{E}_\theta$ maps multi-view images $\mathbf{I}$ to a BEV feature map $\mathbf{Z}\in\mathbb{R}^{d \times H \times W}$. (2) A \emph{fixed} candidate set $\{\boldsymbol{\tau}^{(v)}\}_{v=1}^{V}$ is materialized at inference time: KMeans anchors for DiffusionDrive, and a vocabulary trajectories for the GTRS variants. The candidate set is independent of $\mathbf{Z}$. (3) One or more \emph{learned scoring heads} compute a per-candidate score $\ell_v$ conditioned on $\mathbf{Z}$. In DiffusionDrive this is a single classification head on top of DDIM-refined anchor features; in GTRS-DP it is a single imitation head; in GTRS-Aug and GTRS-Dense it is a multi-head aggregate combining imitation with predictions of collision, drivable-area compliance, time-to-collision, ego progress, and lane keeping, each implemented as a head on $\mathbf{Z}$ rather than a closed-form geometric check. (4) The scoring head selects the highest-scored candidate $v^\star$, $\boldsymbol{\tau} = \boldsymbol{\tau}^{(v^\star)}$. (5) No post-hoc geometric or kinematic safety filter at the planner level is applied after generating the final trajectory.

\vspace{4pt}
\noindent \textbf{Why This Chain is Adversarially Exposed?}
Stages~(1) and~(3) are end-to-end differentiable in $\mathbf{I}$, and stage~(5) is empty. The scoring head's final selection (stage~4) is non-differentiable, and standard practice handles this with a softmax relaxation $\hat{\boldsymbol{\tau}}_\beta = \sum_v \mathrm{softmax}(\beta \ell_v)\,\boldsymbol{\tau}^{(v)}$ so that adversarial gradients can be propagated end-to-end without changing the inference path used at evaluation time. As a result, perturbations to $\mathbf{I}$ propagate into shifts of the entire score vector $\boldsymbol{\ell}$, and a switch of the scoring head's selection from a safe candidate $v^\star$ to an unsafe candidate $v_{\mathrm{adv}}$ only requires shifting the relative score $\ell_{v^\star} - \ell_{v_{\mathrm{adv}}}$ across zero. In planners like \cite{liao2025diffusiondrive, li2025generalized}, this margin is small relative to the perturbation that can induce in $\mathbf{Z}$. We make this argument explicit and bound it in the Appendix; here we use it only to motivate the attack objective. We emphasize that this is a \emph{design-enabled} attack surface rather than a property of any individual decoder family: any planner that selects from a fixed candidate set via a single neural scoring head and applies no post-hoc safety filter inherits the same surface.

\subsection{Adversarial Attack Formulation}
\label{sec:attack}

We now formalize the \textsc{Derail} adversarial attack on generative trajectory planning models. We consider two complementary threat models, \emph{digital} and \emph{physical} that together expose the vulnerability surface of these systems.

\vspace{4pt}
\noindent \textbf{Threat Model.}
We assume a white-box attacker with full access to the planning model weights $\theta$ and architecture; black-box transferability is reported in the Appendix. The white-box assumption is chosen deliberately to characterize the attack surface that the architectural chain in Eq.~\eqref{eq:chain} \emph{permits} under the strongest gradient access a defender should plan against. The attacker crafts adversarial perturbations to the camera inputs with the goal of inducing dangerous driving behavior, and we formalize two attack modalities:

\vspace{4pt}
\noindent (\emph{i}) \textit{Digital perturbation.} Attacker applies an additive, pixel-wise perturbation $\boldsymbol{\delta}_c \in \mathbb{R}^{3 \times H_I \times W_I}$ to a target camera view, bounded by an $\ell_\infty$-norm constraint:
\begin{equation}
    \tilde{I}_c = \text{clamp}\!\left(I_c + \boldsymbol{\delta}_c,\; 0,\; 1\right), \quad \|\boldsymbol{\delta}_c\|_\infty \leq \epsilon,
    \label{eq:digital_attack}
\end{equation}
where $\epsilon$ controls the imperceptibility budget. This models a cyber-attacker who can intercept and modify the camera feed before it reaches the AD system.

\vspace{4pt}
\noindent (\emph{ii}) \textit{Physical perturbation.} Attacker places an adversarial patch $\boldsymbol{\delta} \in [0,1]^{3 \times h_p \times w_p}$ on a physical surface visible to the ego-vehicle. A stitching operator $\mathcal{A}$ composites the patch onto the image at the appropriate location and scale:
\begin{equation}
    \tilde{I}_c = \mathcal{A}(\boldsymbol{\delta},\, I_c,\, \mathbf{b}_c),
    \label{eq:physical_attack}
\end{equation}
where $\mathbf{b}_c$ encodes the placement geometry (position, scale) on camera $c$, modeling a real-world adversary who attaches a patch to a nearby object (e.g., vehicle). 

\vspace{4pt}
\noindent \textbf{\textsc{Derail} Attack Objective.}
Let $\boldsymbol{\tau}_{\text{clean}} = f_\theta(\mathbf{I}, \mathbf{s})$ denote the trajectory produced from clean inputs, and $\boldsymbol{\tau}_{\text{adv}} = f_\theta(\tilde{\mathbf{I}}, \mathbf{s})$ the trajectory under the adversarial perturbation. The attacker's goal under Eq.~\eqref{eq:chain} is to make the scoring head select a candidate that is unsafe under the NAVSIM PDM Score. Rather than expressing this as a discrete combinatorial objective over the candidate set, we use three differentiable, behavior-level surrogates that act on the softmax-relaxed trajectory $\hat{\boldsymbol{\tau}}_\beta$ and that are explicitly aligned with an unsafe behavior (collision, off-road, driver discomfort). Regardless of the attack modality, the attacker solves:
\begin{equation}
    \max_{\boldsymbol{\delta} \in \mathcal{P}} \; \mathcal{L}_{\text{Derail}}(\boldsymbol{\delta}),
    \label{eq:attack_obj}
\end{equation}
where $\mathcal{P}$ denotes the feasible perturbation set: $\mathcal{P} = \{\boldsymbol{\delta} : \|\boldsymbol{\delta}\|_\infty \leq \epsilon\}$ for digital attacks, or $\mathcal{P} = [0,1]^{3 \times h_p \times w_p}$ for physical patches. The \textsc{Derail} loss is a weighted combination of three safety-violating components, each chosen to push the relaxed trajectory toward a different failure axis:
\begin{equation}
    \mathcal{L}_{\text{Derail}}(\boldsymbol{\delta}) = w_{\text{aggr}} \cdot \mathcal{L}_{\text{aggr}} + w_{\text{bound}} \cdot \mathcal{L}_{\text{bound}} +  w_{\text{brake}} \cdot \mathcal{L}_{\text{brake}}
    \label{eq:Derail_loss}
\end{equation}

\vspace{3pt}
\noindent \textit{(i) Forward Aggression Loss} $\mathcal{L}_{\text{aggr}}$ (targets NAVSIM Collision and TTC sub-metrics). This component pushes the planning model toward aggressive forward motion that ignores surrounding agents. It maximizes longitudinal displacement while suppressing lateral corrections:
\begin{equation}
    \mathcal{L}_{\text{aggr}} = \frac{1}{T}\sum_{t=1}^{T} x_t^{\text{adv}} - \gamma \sum_{t=1}^{T} |y_t^{\text{adv}}|,
    \label{eq:aggression_loss}
\end{equation}
where $x_t^{\text{adv}}$ and $y_t^{\text{adv}}$ are the longitudinal and lateral coordinates, and $\gamma$ controls the lateral suppression, inducing reckless driving that disregards collision risks.

\vspace{3pt}
\noindent \textit{(ii) Boundary Push Loss} $\mathcal{L}_{\text{bound}}$ (targets NAVSIM Drivable-Area and Driving-Direction sub-metrics). This component steers the trajectory toward the edges of the drivable area by maximizing lateral deviation:
\begin{equation}
    \mathcal{L}_{\text{bound}} = \frac{1}{T}\sum_{t=1}^{T} |y_t^{\text{adv}}| + \mu \cdot \max_t\, |y_t^{\text{adv}}|,
    \label{eq:boundary_loss}
\end{equation}
where the first term encourages sustained lateral drift and the second amplifies the peak excursion, inducing road departure or wrong-lane driving.

\vspace{3pt}
\noindent \textit{(iii) Sudden Braking Loss} $\mathcal{L}_{\text{brake}}$ (targets NAVSIM Comfort and Ego-Progress sub-metrics). Induces abrupt deceleration by penalizing forward velocity in the adversarial trajectory. Given the planned waypoints $\{\mathbf{p}_t^{\text{adv}}\}_{t=1}^{T}$, we compute the inter-waypoint displacements $\Delta_t = \|\mathbf{p}_{t+1}^{\text{adv}} - \mathbf{p}_t^{\text{adv}}\|_2$ and minimize forward progress while amplifying speed discontinuities:
\begin{equation}
    \mathcal{L}_{\text{brake}} = -\frac{1}{T\!-\!1}\sum_{t=1}^{T-1} \Delta_t + \lambda_{\text{jerk}} \sum_{t=2}^{T-1} \left(\Delta_t - \Delta_{t-1}\right)^2,
    \label{eq:brake_loss}
\end{equation}

\noindent \textbf{Expectation over Transformations.}
For physical patch attacks, robustness to variation in viewpoint and placement geometry is critical. We optimize under an Expectation over Transformations (EOT)~\cite{athalye2018synthesizing} framework, applying stochastic geometric augmentations $\mathbf{t} \sim \mathcal{T}$ (rotation, translation, scaling, shearing) 
\begin{equation}
    \mathcal{L}_{\text{Derail}}^{\text{EOT}}(\boldsymbol{\delta}) = \mathbb{E}_{\mathbf{t} \sim \mathcal{T}}\!\left[\, \mathcal{L}_{\text{Derail}}\!\left(\mathbf{t}(\boldsymbol{\delta})\right)\right],
    \label{eq:eot}
\end{equation}
approximated via a single stochastic sample per step. For digital attacks, the perturbation is applied directly without EOT augmentation.

\noindent \textbf{Optimization.}
The optimization is adapted to each attack modality. For \emph{digital attacks}, we employ Projected Gradient Descent (PGD)~\cite{madry2018towards} with step size $\alpha$:
\begin{equation*}
    \boldsymbol{\delta}^{(t+1)} = \Pi_{\|\cdot\|_\infty \leq \epsilon}\!\left[\boldsymbol{\delta}^{(t)} + \alpha \cdot \text{sign}\!\left(\nabla_{\boldsymbol{\delta}}\, \mathcal{L}_{\text{Derail}}\right)\right],
    \label{eq:pgd}
\end{equation*}
where $\Pi$ projects the perturbation back onto the $\ell_\infty$-ball.

\noindent For \emph{physical patch attacks}, we use Adam~\cite{kingma2014adam} with gradient accumulation across all $N$ scene tokens in a driving scenario.
\begin{equation*}
    \boldsymbol{\delta}^{(e+1)} = \Pi_{[0,1]} \left[ \boldsymbol{\delta}^{(e)} + \alpha \cdot \textsc{Adam} \left( \frac{1}{N}\sum_{n=1}^{N} \nabla_{\boldsymbol{\delta}}\, \mathcal{L}_{\text{Derail}}^{\text{EOT}} \right) \right],
    \label{eq:adam}
\end{equation*}
$\Pi_{[0,1]}$ projects patch pixels to valid ranges. The patch is initialized from $\boldsymbol{\delta}^{(0)} \sim \mathcal{U}(0,1)$ and optimized over $E$ epochs.

%% file: sec/4_evaluation.tex
\section{Evaluation}

\subsection{Experimental Setup}

\noindent \textbf{Generative Planning Models and Datasets.}
We evaluate the \textsc{Derail} attack against four state-of-the-art generative planning models on the NAVSIM benchmark~\cite{dauner2024navsim}. The target models are: (\emph{i}) DiffusionDrive~\cite{liao2025diffusiondrive}, a truncated denoising diffusion planner that scores KMeans anchor trajectories with a single classification head after $K{=}2$ DDIM refinement steps. (\emph{ii}) GTRS-DP~\cite{li2025generalized}, a vocabulary-based retrieval planner that scores a $4{,}096$-trajectory vocabulary with a single imitation head. (\emph{iii}) GTRS-Aug~\cite{li2025generalized}, an augmented variant of GTRS that combines the imitation head with multiple safety-related scoring heads (collision, drivable-area, time-to-collision, ego progress) over an $8{,}192$-trajectory vocabulary. (\emph{iv}) GTRS-Dense~\cite{li2025generalized}, a dense prediction variant that applies the same multi-head safety scorer over $16{,}384$ vocabulary candidates without pruning. All four planners follow the scoring-head inference chain of Eq.~\eqref{eq:chain}: a fixed candidate set is scored by one or more learned heads conditioned on the BEV features, and the final trajectory is selected by the scoring head with no post-hoc geometric or kinematic safety filter. Each model operates on a multi-view camera setup with left, front, and right cameras, which are cropped and stitched into a panoramic image before being fed to the visual encoder.

\vspace{4pt}
\noindent \textbf{Metrics.}
We adopt the Planning Decision-Making Score (PDMS)~\cite{dauner2023parting} as our evaluation metric. PDMS provides a comprehensive assessment of planning quality through a combination of safety and comfort sub-scores:  (\emph{i}) \textit{Collision Rate (CR)}: the fraction of scenarios with at-fault collisions.  (\emph{ii}) \textit{Off-Road Rate (OR)}: the fraction of scenarios violating drivable area compliance. (\emph{iii}) \textit{Wrong-Way Rate (WW)}: the fraction of scenarios violating driving direction compliance. (\emph{iv}) \textit{Unsafe Time-To-Collision rate (TTC)}: the fraction of scenarios where the ego vehicle's time-to-collision falls below a safe threshold. (\emph{v}) \textit{Comfort (CM)}: fraction of scenarios with jerk and acceleration within acceptable bounds. 

\vspace{4pt}
\noindent \textbf{Baselines.}
We compare against three adversarial baselines under the same threat model on a single camera view: \textit{Untargeted PGD}~\cite{madry2018towards} applies standard Projected Gradient Descent to maximize the model's evaluation-mode training loss via gradient ascent on camera inputs. \textit{Encoder Attack} maximizes L2 and cosine distance between clean and adversarial vision encoder features. \textit{DP-Attacke}r~\cite{chen2024diffusion} maximizes the training-mode denoising loss at randomly sampled generative timestamps. 
Unlike these baselines, which rely on the model's own training objective or intermediate features, \textsc{Derail} uses a ground-truth-free, multi-objective loss designed to induce specific unsafe driving behaviors

\vspace{4pt}
\noindent \textbf{Digital Perturbation Setting.}
For digital attacks, we apply PGD~\cite{madry2018towards} directly on the raw camera pixel values. The perturbation is constrained to an $\ell_\infty$-ball with radius $\epsilon = 8/255$ and optimized with step size $\alpha = 2/255$ over $K{=}20$ iterations. The attack targets a single camera view (front camera by default) while leaving the remaining views unperturbed (See Appendix for multi-camera analysis). The perturbation is updated via signed gradient ascent on $\mathcal{L}_{\text{Derail}}$.

\vspace{4pt}
\noindent \textbf{Physical Perturbation Setting.}
For physical attacks, we optimize a universal adversarial patch $\boldsymbol{\delta} \in [0,1]^{3 \times 256 \times 256}$ that is composited onto the camera image via differentiable stitching. The patch is placed using \emph{object-aware placement}: the 3D bounding boxes of surrounding vehicles are projected onto the camera plane. The patch is resized to cover $\rho{=}0.5$ of the target vehicle's bounding box. For physical robustness, the patch undergoes stochastic geometric augmentations at each training step (Details in Appendix).

\vspace{4pt}
\noindent \textbf{Optimization Details.}
We optimize a single universal patch across all scenes using Adam~\cite{kingma2014adam}. The same \textsc{Derail} loss weights ($w_{\text{brake}}$, $w_{\text{aggr}}$, $w_{\text{bound}}$) are used across all models and both attack modalities. All experiments are conducted on a single NVIDIA RTX 3060 GPU. Crafting digital perturbation takes 1-2 sec. Training a single patch takes roughly 10-15 minutes depending on the target model.

\subsection{Experimental Results}

\subsection{Digital Attack Evaluation}
\noindent Table~\ref{tab:digital_results} summarizes digital attack results across the four planning models. \textsc{Derail} is the only method achieving $100\%$ Attack Success Rate (ASR) on every model, with score drops of $39.8$--$80.1\%$ and collision rates of $25.8$--$50.7\%$, consistently outperforming all baselines. The most informative comparison is with DP-Attacker~\cite{chen2024diffusion}, which uses the same PGD optimization loop and the same $\epsilon{=}8/255$ perturbation budget yet only reaches $85$--$100\%$ ASR with significantly smaller score drops ($16$--$34\%$). The two attacks therefore differ only in their objective: DP-Attacker maximizes the model's own training-mode loss (e.g., diffusion noise prediction), which is well aligned with degrading per-step denoising quality but has no relationship to whether the scoring head in Eq.~\eqref{eq:chain} switches to an unsafe candidate. \textsc{Derail}, by contrast, operates directly in the trajectory-output space through behavior-level surrogates (forward aggression, boundary push, sudden braking) that are aligned with the safety axes of the PDM Score, so its gradients consistently push the relaxed trajectory in directions that correlate with flipping the scoring head's selection from a safe to an unsafe candidate. The Encoder Attack is the most direct comparison for a model-agnostic, encoder-targeting baseline: it underperforms \textsc{Derail} on every model and is nearly ineffective on DiffusionDrive ($\text{Drop}{=}-0.12\%$), where the very short ($K{=}2$) DDIM chain still gives the scoring head enough margin to recover from generic feature corruption. GTRS-Dense exhibits the highest resilience among baselines, consistent with its multi-head safety scorer aggregating evidence across several auxiliary predictions; even so, \textsc{Derail} induces a $25.8\%$ collision rate and the highest $43.3\%$ discomfort rate observed in our experiments.

\input{tables/digital_results}

\begin{figure*}[t]
\centering
\includegraphics[width=1.\textwidth]{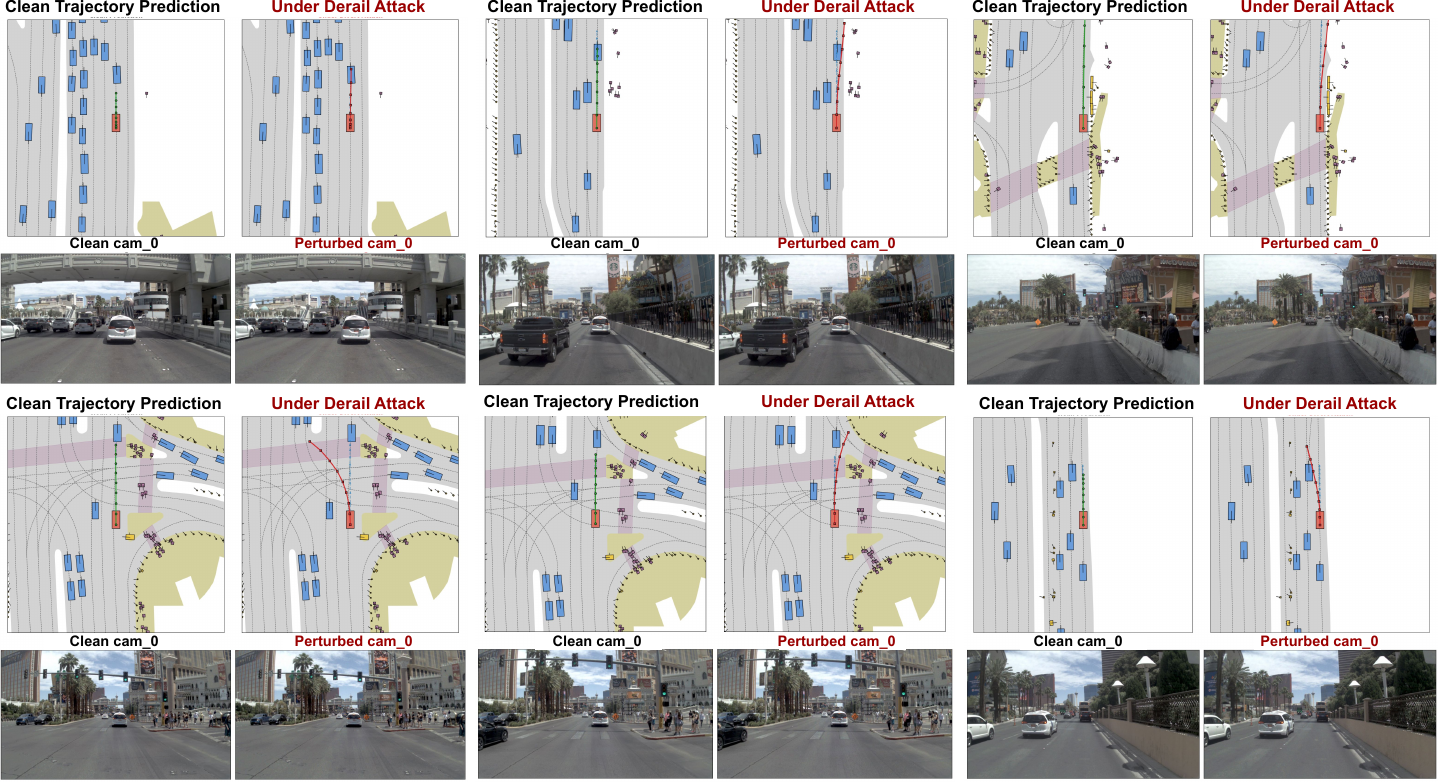}
\vspace{-0.6cm}
\caption{Qualitative visualization of \textsc{Derail} digital attacks across six NAVSIM scenarios. Each scenario shows the BEV trajectory prediction and front-camera view under clean (left) and adversarially digital perturbed (right) conditions. Green trajectories indicate clean predictions; red  trajectories indicate collision-inducing predictions under \textsc{Derail}.}
\label{fig:qualitative}
\vspace{-0.5cm}
\end{figure*}

\subsection{Qualitative Analysis}
Figures~\ref{fig:qualitative} presents scenarios comparing clean and adversarially perturbed trajectory predictions alongside their corresponding front-camera views. Under clean conditions, the planning model (DiffusionDrive~\cite{dauner2024navsim}) consistently produces safe, lane-centered trajectories (green) that maintain appropriate following distance and respect road boundaries. Under \textsc{Derail}, the predicted trajectories (red) exhibit three qualitatively distinct failure modes, each corresponding to the scoring head in Eq.~\eqref{eq:chain} switching to a different family of unsafe candidates in the model's fixed trajectory set. First, in highway and urban following scenarios (top row, columns 1--2), $\mathcal{L}_{\text{aggr}}$ drives the scoring head toward high-speed forward anchors/vocabulary entries that ignore the leading agent, steering the ego vehicle directly into a collision. Second, in scenarios involving curved roads and intersections (bottom row), $\mathcal{L}_{\text{bound}}$ shifts the scoring head toward laterally displaced candidates, producing off-road or wrong-lane trajectories. Third, across multiple scenarios, $\mathcal{L}_{\text{brake}}$ selects candidates with abrupt velocity profiles relative to the smooth clean baseline. Critically, the front-camera views reveal that the adversarial perturbations are visually subtle, the perturbed images appear nearly identical to their clean counterparts, confirming that \textsc{Derail} achieves safety-critical trajectory switching through imperceptible pixel-space modifications without altering the semantic content of the scene.

\input{tables/one_log_patch}

\input{tables/all_log_patch}

\begin{figure*}[h]
\centering
\includegraphics[width=1.\textwidth]{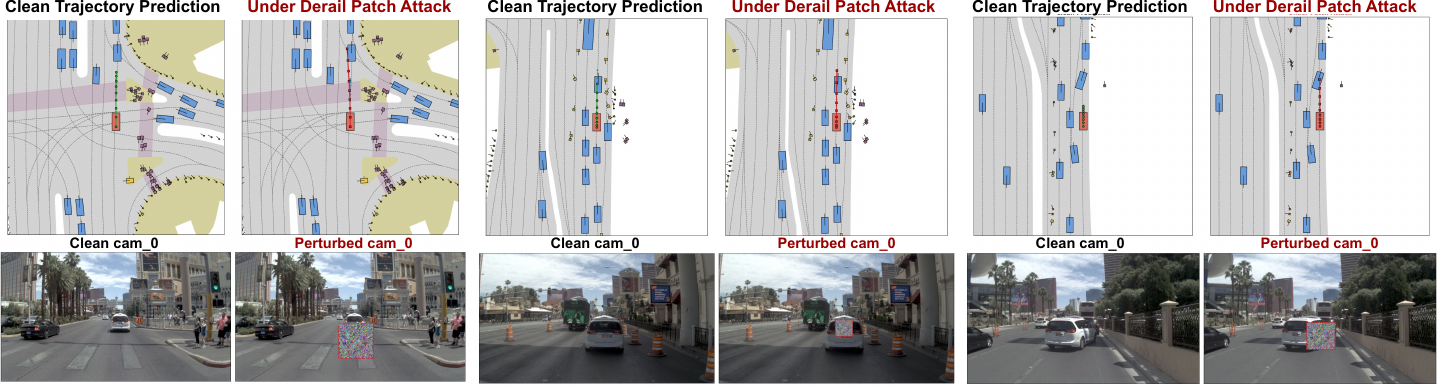}
\vspace{-0.5cm}
\caption{Qualitative visualization of \textsc{Derail} physical patch-based attacks across three NAVSIM scenarios. Each scenario shows the BEV trajectory prediction and front-camera view under clean (left) and adversarially perturbed via physical patch (right) conditions. }
\label{fig:qualitative_patch}
\vspace{-0.3cm}
\end{figure*}

\subsection{Universal Physical Patch Evaluation}

\noindent \textbf{Single Driving Log.} Table~\ref{tab:one_log_patch_results} presents physical patch results on a representative driving log. The collision-focused patch achieves $100\%$ ASR across all four models, while the random-texture patch produces no systematic degradation, despite a spurious $100\%$ ASR on GTRS-AUG, collision rates remain unchanged across all models, confirming that adversarial effectiveness requires targeted optimization and cannot arise from arbitrary visual perturbations. DiffusionDrive and GTRS-Dense suffer the largest collision rate increases ($16.7\%\to33.3\%$ each), with DiffusionDrive additionally exhibiting the largest score drop ($18.5\%$), consistent with its sensitivity to input-space perturbations observed in the digital setting. GTRS-DP also shows meaningful degradation, with its collision rate rising from $8.3\%$ to $25.0\%$ ($\text{Drop}{:}\;15.2\%$). GTRS-AUG proves the most resilient ($\text{Drop}{:}\;10.2\%$), suggesting that augmentation-hardened encoders partially mitigate patch-induced perturbations in the physical setting, though this resilience does not generalize to digital attacks, where the model remains highly vulnerable (see Table~\ref{tab:digital_results}). Notably, off-road and wrong-way rates remain at $0\%$ across all conditions, revealing that physical patches primarily induce frontal collisions without broader trajectory failures, a narrower but safety-critical failure mode that demonstrates the practical danger of physically deployable adversarial patches against generative end-to-end AD systems.

\vspace{4pt}
\noindent \textbf{All Driving Logs.} Table~\ref{tab:all_logs_patch_results} reports results averaged across all $50\%$ driving logs from NAVSIM. DiffusionDrive remains the most vulnerable ($\text{Drop}{:}\;13.4\%$, $\text{CR}{:}\;1.3\%\to13.6\%$), with GTRS-AUG also exhibiting meaningful degradation ($\text{CR}{:}\;2.6\%\to9.1\%$). While GTRS-Dense shows lower sensitivity to patch-based perturbations ($\text{Drop}\leq3.3\%$, $\text{CR}\leq3.7\%$), this is consistent with the constrained nature of universal patch optimization rather than general adversarial robustness, as the model suffers substantial degradation under digital attacks in Table~\ref{tab:digital_results}. The reduced effectiveness relative to pixel-space attacks (collision rates of $13.6\%$ versus $38.4\%$ under \textsc{Derail}) is expected given that a single frame-independent patch must generalize across diverse visual contexts without per-frame adaptation, a significantly harder optimization problem. Importantly, even under this constrained threat model, DiffusionDrive's unsafe TTC rate rises from $4.9\%$ to $22.8\%$, confirming that universal physical patches constitute a realistic and meaningful safety threat.

\subsection{Ablation Study}

\vspace{4pt}
\noindent \textbf{Adversarial Loss Components.}
\noindent Table~\ref{tab:ablation_loss} reports the contribution of each loss component by ablating $\mathcal{L}_{\text{aggr}}$, $\mathcal{L}_{\text{bound}}$, and $\mathcal{L}_{\text{brake}}$ individually across the four planning models. Removing $\mathcal{L}_{\text{aggr}}$ produces the largest drop in collision rate across all models (e.g., DiffusionDrive: $63.29\%\to40.51\%$, GTRS-Dense: $48.73\%\to17.72\%$), confirming that the longitudinal axis is the dominant driver of frontal collisions. Removing $\mathcal{L}_{\text{bound}}$ substantially reduces off-road rate across all models (e.g., GTRS-DP: $82.28\%\to31.65\%$, DiffusionDrive: $40.51\%\to3.80\%$), demonstrating that the lateral term is what pushes the scoring head toward laterally displaced vocabulary/anchor candidates rather than collision-inducing ones. Removing $\mathcal{L}_{\text{brake}}$ occasionally yields marginally higher collision rates on individual models (e.g., GTRS-DP: $70.89\%$ vs.\ $66.46\%$ full), reflecting a mild tension between velocity-discontinuity induction and collision rate. The full loss consistently achieves the best \emph{joint} degradation across both CR and OR, confirming that the three task-directed terms address complementary axes of the PDM Score and that their combination produces the most comprehensive safety-critical degradation.

\input{tables/ablation_loss}

\input{tables/params}

\begin{figure*}[h]
\centering
\includegraphics[width=1.\textwidth]{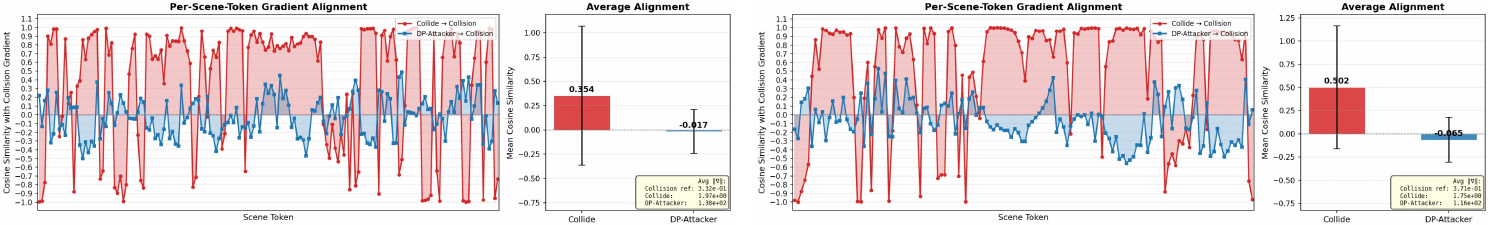}
\vspace{-0.5cm}
\caption{Gradient alignment with the collision objective across scene tokens (02 logs).}
\label{fig:gradient_alignment}
\vspace{-0.3cm}
\end{figure*}

\vspace{4pt}
\noindent \textbf{Parameters Sensitivity Analysis.}
Table~\ref{tab:pgd_sensitivity} reports the sensitivity of \textsc{Derail} to two key optimization parameters: the number of PGD iterations and the perturbation budget $\epsilon$, evaluated on GTRS-AUG. \textsc{Derail} exhibits consistent monotonic improvement with increasing iterations across all safety-critical metrics, with collision rate rising from 31.65\% at 5 iterations to 60.76\% at 50 iterations. However, gains exhibit clear diminishing returns beyond 20 iterations, the largest single improvement occurs between 5 and 10 iterations (+12.65\% CR), whereas subsequent increments of 5 iterations contribute progressively smaller gains, suggesting that the attack converges to a strong adversarial solution within a moderate optimization budget. Notably, off-road rate peaks at 45.57\% at 30 iterations before slightly declining at 50, indicating a mild tension between maximizing frontal collisions and lateral boundary violations at higher iteration counts. With respect to perturbation budget, attack effectiveness scales consistently from $\epsilon=2/255$ to $\epsilon=8/255$, with the largest gains concentrated in this range (CR: 30.38\%$\to$50.63\%, TTC: 45.57\%$\to$72.15\%). Beyond $\epsilon=8/255$, improvements plateau for CR and TTC (54.43\% at both $\epsilon=16/255$ and $\epsilon=32/255$), while OR and WW continue to grow marginally, suggesting that the collision-inducing objective saturates at moderate budgets while lateral and directional failures remain responsive to larger perturbations. Together, these results confirm that \textsc{Derail} achieves strong attack performance within a practical optimization budget of 20 iterations and $\epsilon=8/255$, balancing effectiveness with computational efficiency.

\vspace{4pt}
\noindent \textbf{Gradient Alignment Analysis.} The gap between \textsc{Derail} and the DP-Attacker baseline~\cite{chen2024diffusion} in Table~\ref{tab:digital_results} cannot be attributed to perturbation budget or optimization strength, since both attacks share the same PGD loop, same $\epsilon$, and same number of iterations. To isolate the role of the objective, we measure the per-scene-token cosine similarity between each attack's input gradient and a reference collision-inducing gradient (the gradient of a separately constructed collision-event loss) across two representative driving logs. As shown in Fig.~\ref{fig:gradient_alignment}, \textsc{Derail} achieves consistently positive gradient alignment with the collision reference (mean $0.354$ and $0.502$ across the two logs), with the red band remaining predominantly above zero throughout the scene-token sequence. In contrast, DP-Attacker's gradients are effectively orthogonal to the collision reference (mean $-0.017$ and $-0.065$), oscillating randomly around zero without any systematic alignment. We interpret this not as evidence that the diffusion decoder is intrinsically less vulnerable than vocabulary decoders, but as direct evidence that the choice of objective, task-directed (\textsc{Derail}) versus generic loss maximization (DP-Attacker), is what determines whether adversarial gradients steer the scoring head in Eq.~\eqref{eq:chain} toward unsafe candidates within a fixed perturbation budget.

%% file: tables/digital_results.tex
\begin{table}[t]
\centering
\caption{Adversarial attacks comparison on NAVSIM \cite{dauner2024navsim}.}
\label{tab:digital_results}
\vspace{-0.2cm}

\definecolor{oursaccent}{HTML}{8B2635}
\definecolor{oursrow}{HTML}{FDECEA}
\definecolor{modelband}{HTML}{FFFFFF}   
\definecolor{modelbandtext}{HTML}{000000} 

\setlength{\tabcolsep}{2.5pt}
\renewcommand{\arraystretch}{.60}

\newcommand{\modelband}[2]{%
  \multicolumn{9}{c}{\cellcolor{modelband}%
    \color{modelbandtext}\textit{Attacked Model:}~\textbf{#1}~\cite{#2}} \\
  \arrayrulecolor{modelbandtext}\midrule\arrayrulecolor{black}%
}

\resizebox{\columnwidth}{!}{%
\footnotesize
\begin{tabular}{@{}l*{8}{c}@{}}
\toprule
\textbf{Adv. Attack}
 & \textbf{PDM}$\downarrow$
 & \textbf{Drop}$\uparrow$
 & \textbf{ASR}$\uparrow$
 & \textbf{CR}$\uparrow$
 & \textbf{OR}$\uparrow$
 & \textbf{WW}$\uparrow$
 & \textbf{TTC}$\uparrow$
 & \textbf{CM}$\uparrow$ \\
\midrule

\modelband{DiffusionDrive}{dauner2024navsim}
Clean              & 0.91 & 0.00  & 0.00   & 0.99  & 2.01  & 2.24  & 4.33  & 0.01 \\
Untargeted PGD     & 0.77 & 15.86 & 95.71  & 9.14  & 6.74  & 4.30  & 21.50 & 0.01 \\
Encoder Attack     & 0.92 & -0.12 & 44.29  & 1.02  & 2.07  & 2.23  & 4.00  & 0.01 \\
DP Attacker~\cite{chen2024diffusion} & 0.60 & 34.19 & 100.00 & 21.01 & 10.14 & 8.21  & 31.35 & 0.23 \\
\rowcolor{oursrow}
\textbf{Derail (Ours)}
 & \textbf{\textcolor{oursaccent}{0.37}}
 & \textbf{\textcolor{oursaccent}{60.02}}
 & \textbf{\textcolor{oursaccent}{100.00}}
 & \textbf{\textcolor{oursaccent}{38.42}}
 & \textbf{\textcolor{oursaccent}{31.95}}
 & \textbf{\textcolor{oursaccent}{22.63}}
 & \textbf{\textcolor{oursaccent}{52.67}}
 & \textbf{\textcolor{oursaccent}{3.79}} \\
\cmidrule(l){1-9}

\modelband{GTRS-DP}{li2025generalized}
Clean              & 0.78 & 0.00  & 0.00   & 11.28 & 3.51  & 0.57  & 12.12 & 0.93 \\
Untargeted PGD     & 0.42 & 46.03 & 98.57  & 29.80 & 27.15 & 13.85 & 33.20 & 5.80 \\
Encoder Attack     & 0.62 & 20.44 & 90.00  & 15.98 & 15.86 & 4.81  & 17.68 & 0.91 \\
DP Attacker~\cite{chen2024diffusion} & 0.59 & 24.26 & 94.29  & 17.87 & 19.22 & 6.99  & 19.26 & 0.90 \\
\rowcolor{oursrow}
\textbf{Derail (Ours)}
 & \textbf{\textcolor{oursaccent}{0.16}}
 & \textbf{\textcolor{oursaccent}{80.10}}
 & \textbf{\textcolor{oursaccent}{100.00}}
 & \textbf{\textcolor{oursaccent}{50.65}}
 & \textbf{\textcolor{oursaccent}{60.38}}
 & \textbf{\textcolor{oursaccent}{42.24}}
 & \textbf{\textcolor{oursaccent}{58.18}}
 & \textbf{\textcolor{oursaccent}{18.90}} \\
\cmidrule(l){1-9}

\modelband{GTRS-AUG}{li2025generalized}
Clean              & 0.83 & 0.00  & 0.00   & 9.51  & 0.40  & 0.30  & 9.99  & 0.80 \\
Untargeted PGD     & 0.46 & 45.03 & 98.57  & 30.72 & 20.70 & 8.46  & 32.55 & 1.54 \\
Encoder Attack     & 0.64 & 23.09 & 92.86  & 11.19 & 17.01 & 6.15  & 13.47 & 3.89 \\
DP Attacker~\cite{chen2024diffusion} & 0.46 & 44.74 & 98.57  & 30.83 & 20.25 & 8.75  & 33.07 & 1.61 \\
\rowcolor{oursrow}
\textbf{Derail (Ours)}
 & \textbf{\textcolor{oursaccent}{0.22}}
 & \textbf{\textcolor{oursaccent}{72.85}}
 & \textbf{\textcolor{oursaccent}{100.00}}
 & \textbf{\textcolor{oursaccent}{45.04}}
 & \textbf{\textcolor{oursaccent}{47.92}}
 & \textbf{\textcolor{oursaccent}{33.66}}
 & \textbf{\textcolor{oursaccent}{51.41}}
 & \textbf{\textcolor{oursaccent}{8.19}} \\
\cmidrule(l){1-9}

\modelband{GTRS-Dense}{li2025generalized}
Clean              & 0.83 & 0.00  & 0.00   & 7.90  & 1.00  & 0.29  & 8.22  & 0.88 \\
Untargeted PGD     & 0.69 & 16.82 & 87.14  & 8.01  & 10.70 & 3.56  & 8.83  & 4.85 \\
Encoder Attack     & 0.67 & 19.07 & 90.00  & 5.66  & 10.04 & 2.31  & 7.04  & 7.52 \\
DP Attacker~\cite{chen2024diffusion} & 0.69 & 16.42 & 85.71  & 7.54  & 10.73 & 3.90  & 8.28  & 4.17 \\
\rowcolor{oursrow}
\textbf{Derail (Ours)}
 & \textbf{\textcolor{oursaccent}{0.50}}
 & \textbf{\textcolor{oursaccent}{39.78}}
 & \textbf{\textcolor{oursaccent}{100.00}}
 & \textbf{\textcolor{oursaccent}{25.82}}
 & \textbf{\textcolor{oursaccent}{10.49}}
 & \textbf{\textcolor{oursaccent}{4.10}}
 & \textbf{\textcolor{oursaccent}{27.45}}
 & \textbf{\textcolor{oursaccent}{43.25}} \\

\bottomrule
\end{tabular}}
\vspace{-0.5cm}
\end{table}

%% file: tables/one_log_patch.tex
\begin{table}[t]
\centering
\caption{Patch attack on one driving log from NAVSIM \cite{dauner2024navsim}.}
\label{tab:one_log_patch_results}
\vspace{-0.2cm}

\definecolor{oursaccent}{HTML}{8B2635}
\definecolor{oursrow}{HTML}{FDECEA}
\definecolor{modelband}{HTML}{FFFFFF}
\definecolor{modelbandtext}{HTML}{000000}

\setlength{\tabcolsep}{2.5pt}
\renewcommand{\arraystretch}{.60}

\newcommand{\modelband}[2]{%
  \multicolumn{8}{c}{\cellcolor{modelband}%
    \color{modelbandtext}\textit{Attacked Model:}~\textbf{#1}~\cite{#2}} \\
  \arrayrulecolor{modelbandtext}\midrule\arrayrulecolor{black}%
}

\resizebox{\columnwidth}{!}{%
\footnotesize
\begin{tabular}{@{}l*{7}{c}@{}}
\toprule
\textbf{Adv. Attack}
 & \textbf{PDM}$\downarrow$
 & \textbf{Drop}$\uparrow$
 & \textbf{ASR}$\uparrow$
 & \textbf{CR}$\uparrow$
 & \textbf{OR}$\uparrow$
 & \textbf{WW}$\uparrow$
 & \textbf{TTC}$\uparrow$ \\
\midrule

\modelband{DiffusionDrive}{dauner2024navsim}
Clean         & 0.81 & 0.00  & 0.00   & 16.67 & 0.00 & 0.00 & 16.67 \\
Random Patch  & 0.73 & 9.46  & 100.00 & 25.00 & 0.00 & 0.00 & 25.00 \\
\rowcolor{oursrow}
\textbf{Derail Patch (Ours)}
 & \textbf{\textcolor{oursaccent}{0.66}}
 & \textbf{\textcolor{oursaccent}{18.49}}
 & \textbf{\textcolor{oursaccent}{100.00}}
 & \textbf{\textcolor{oursaccent}{33.33}}
 & \textbf{\textcolor{oursaccent}{0.00}}
 & \textbf{\textcolor{oursaccent}{0.00}}
 & \textbf{\textcolor{oursaccent}{33.33}} \\
\cmidrule(l){1-8}

\modelband{GTRS-DP}{li2025generalized}
Clean         & 0.84 & 0.00  & 0.00   & 8.33  & 0.00 & 0.00 & 16.67 \\
Random Patch  & 0.87 & -3.51 & 0.00   & 8.33  & 0.00 & 0.00 & 8.33  \\
\rowcolor{oursrow}
\textbf{Derail Patch (Ours)}
 & \textbf{\textcolor{oursaccent}{0.71}}
 & \textbf{\textcolor{oursaccent}{15.22}}
 & \textbf{\textcolor{oursaccent}{100.00}}
 & \textbf{\textcolor{oursaccent}{25.00}}
 & \textbf{\textcolor{oursaccent}{0.00}}
 & \textbf{\textcolor{oursaccent}{0.00}}
 & \textbf{\textcolor{oursaccent}{25.00}} \\
\cmidrule(l){1-8}

\modelband{GTRS-AUG}{li2025generalized}
Clean         & 0.82 & 0.00  & 0.00   & 16.67 & 0.00 & 0.00 & 16.67 \\
Random Patch  & 0.82 & 0.01  & 100.00 & 16.67 & 0.00 & 0.00 & 16.67 \\
\rowcolor{oursrow}
\textbf{Derail Patch (Ours)}
 & \textbf{\textcolor{oursaccent}{0.74}}
 & \textbf{\textcolor{oursaccent}{10.17}}
 & \textbf{\textcolor{oursaccent}{100.00}}
 & \textbf{\textcolor{oursaccent}{25.00}}
 & \textbf{\textcolor{oursaccent}{0.00}}
 & \textbf{\textcolor{oursaccent}{0.00}}
 & \textbf{\textcolor{oursaccent}{25.00}} \\
\cmidrule(l){1-8}

\modelband{GTRS-Dense}{li2025generalized}
Clean         & 0.85 & 0.00  & 0.00   & 8.33  & 0.00 & 0.00 & 8.33 \\
Random Patch  & 0.85 & -1.11 & 0.00   & 8.33  & 0.00 & 0.00 & 8.33 \\
\rowcolor{oursrow}
\textbf{Derail Patch (Ours)}
 & \textbf{\textcolor{oursaccent}{0.62}}
 & \textbf{\textcolor{oursaccent}{26.18}}
 & \textbf{\textcolor{oursaccent}{100.00}}
 & \textbf{\textcolor{oursaccent}{33.33}}
 & \textbf{\textcolor{oursaccent}{0.00}}
 & \textbf{\textcolor{oursaccent}{0.00}}
 & \textbf{\textcolor{oursaccent}{33.33}} \\

\bottomrule
\end{tabular}}
\vspace{-0.5cm}
\end{table}

%% file: tables/all_log_patch.tex
\begin{table}[t]
\centering
\caption{Patch attack comparison on NAVSIM \cite{dauner2024navsim} (all-logs).}
\label{tab:all_logs_patch_results}
\vspace{-0.2cm}

\definecolor{oursaccent}{HTML}{8B2635}
\definecolor{oursrow}{HTML}{FDECEA}
\definecolor{modelband}{HTML}{FFFFFF}
\definecolor{modelbandtext}{HTML}{000000}

\setlength{\tabcolsep}{2.5pt}
\renewcommand{\arraystretch}{.60}

\newcommand{\modelband}[2]{%
  \multicolumn{8}{c}{\cellcolor{modelband}%
    \color{modelbandtext}\textit{Attacked Model:}~\textbf{#1}~\cite{#2}} \\
  \arrayrulecolor{modelbandtext}\midrule\arrayrulecolor{black}%
}

\resizebox{\columnwidth}{!}{%
\footnotesize
\begin{tabular}{@{}l*{7}{c}@{}}
\toprule
\textbf{Adv. Attack}
 & \textbf{PDM}$\downarrow$
 & \textbf{Drop}$\uparrow$
 & \textbf{ASR}$\uparrow$
 & \textbf{CR}$\uparrow$
 & \textbf{OR}$\uparrow$
 & \textbf{WW}$\uparrow$
 & \textbf{TTC}$\uparrow$ \\
\midrule

\modelband{DiffusionDrive}{dauner2024navsim}
Clean         & 0.91 & 0.00  & 0.00  & 1.30  & 1.86 & 1.96 & 4.89  \\
Random Patch  & 0.90 & 0.50  & 44.78 & 2.15  & 0.94 & 0.63 & 7.07  \\
\rowcolor{oursrow}
\textbf{Derail Patch (Ours)}
 & \textbf{\textcolor{oursaccent}{0.79}}
 & \textbf{\textcolor{oursaccent}{13.38}}
 & \textbf{\textcolor{oursaccent}{68.66}}
 & \textbf{\textcolor{oursaccent}{13.56}}
 & \textbf{\textcolor{oursaccent}{1.17}}
 & \textbf{\textcolor{oursaccent}{1.15}}
 & \textbf{\textcolor{oursaccent}{22.80}} \\
\cmidrule(l){1-8}

\modelband{GTRS-DP}{li2025generalized}
Clean         & 0.80 & 0.00  & 0.00  & 9.34  & 1.15 & 0.62 & 9.49  \\
Random Patch  & 0.81 & -0.09 & 55.22 & 9.24  & 1.13 & 0.65 & 9.26  \\
\rowcolor{oursrow}
\textbf{Derail Patch (Ours)}
 & \textbf{\textcolor{oursaccent}{0.78}}
 & \textbf{\textcolor{oursaccent}{2.91}}
 & \textbf{\textcolor{oursaccent}{61.19}}
 & \textbf{\textcolor{oursaccent}{10.68}}
 & \textbf{\textcolor{oursaccent}{2.11}}
 & \textbf{\textcolor{oursaccent}{0.64}}
 & \textbf{\textcolor{oursaccent}{10.86}} \\
\cmidrule(l){1-8}

\modelband{GTRS-AUG}{li2025generalized}
Clean         & 0.93 & 0.00  & 0.00  & 2.59  & 0.36 & 0.11 & 2.66  \\
Random Patch  & 0.93 & 0.39  & 47.76 & 2.60  & 1.00 & 0.10 & 2.56  \\
\rowcolor{oursrow}
\textbf{Derail Patch (Ours)}
 & \textbf{\textcolor{oursaccent}{0.86}}
 & \textbf{\textcolor{oursaccent}{7.37}}
 & \textbf{\textcolor{oursaccent}{50.75}}
 & \textbf{\textcolor{oursaccent}{9.12}}
 & \textbf{\textcolor{oursaccent}{1.47}}
 & \textbf{\textcolor{oursaccent}{0.28}}
 & \textbf{\textcolor{oursaccent}{8.50}} \\
\cmidrule(l){1-8}

\modelband{GTRS-Dense}{li2025generalized}
Clean         & 0.91 & 0.00  & 0.00  & 0.69  & 1.22 & 0.33 & 0.83  \\
Random Patch  & 0.91 & 0.05  & 41.79 & 0.73  & 1.20 & 0.33 & 1.01  \\
\rowcolor{oursrow}
\textbf{Derail Patch (Ours)}
 & \textbf{\textcolor{oursaccent}{0.88}}
 & \textbf{\textcolor{oursaccent}{3.29}}
 & \textbf{\textcolor{oursaccent}{52.24}}
 & \textbf{\textcolor{oursaccent}{2.90}}
 & \textbf{\textcolor{oursaccent}{1.50}}
 & \textbf{\textcolor{oursaccent}{0.35}}
 & \textbf{\textcolor{oursaccent}{2.64}} \\

\bottomrule
\end{tabular}}
\vspace{-0.6cm}
\end{table}

%% file: tables/ablation_loss.tex
\begin{table}[t]
\centering
\caption{Ablation analysis on the \textsc{Derail} loss components.}
\label{tab:ablation_loss}
\vspace{-0.2cm}

\definecolor{oursaccent}{HTML}{8B2635}
\definecolor{oursrow}{HTML}{FDECEA}

\setlength{\tabcolsep}{2.4pt}
\renewcommand{\arraystretch}{1.2}

\resizebox{\columnwidth}{!}{%
\footnotesize
\begin{tabular}{@{}l*{12}{c}@{}}
\toprule
\multirow{2}{*}{\textbf{Adv. Loss}}
 & \multicolumn{3}{c}{\textbf{DiffusionDrive}~\cite{liao2025diffusiondrive}}
 & \multicolumn{3}{c}{\textbf{GTRS-DP}~\cite{li2025generalized}}
 & \multicolumn{3}{c}{\textbf{GTRS-AUG}~\cite{li2025generalized}}
 & \multicolumn{3}{c}{\textbf{GTRS-Dense}~\cite{li2025generalized}} \\
\cmidrule(lr){2-4} \cmidrule(lr){5-7} \cmidrule(lr){8-10} \cmidrule(lr){11-13}
 & PDM$\downarrow$ & CR$\uparrow$ & OR$\uparrow$
 & PDM$\downarrow$ & CR$\uparrow$ & OR$\uparrow$
 & PDM$\downarrow$ & CR$\uparrow$ & OR$\uparrow$
 & PDM$\downarrow$ & CR$\uparrow$ & OR$\uparrow$ \\
\midrule

Clean
 & 0.92 & 2.53  & 0.00
 & 0.87 & 5.06  & 0.00
 & 0.84 & 11.39 & 0.00
 & 0.86 & 6.33  & 0.00 \\

 w/o $\mathcal{L}_{\text{aggr}}$
 & 0.31 & 40.51 & 37.97
 & 0.32 & 53.16 & 53.16
 & 0.38 & 44.30 & 32.91
 & 0.60 & 17.72 & 13.92 \\

 w/o $\mathcal{L}_{\text{bound}}$
 & 0.42 & 46.84 & 3.80
 & 0.22 & 59.49 & 31.65
 & 0.32 & 58.86 & 15.19
 & 0.42 & 43.04 & 3.80 \\

 w/o $\mathcal{L}_{\text{brake}}$
 & 0.15 & 63.92 & 36.71
 & 0.05 & 70.89 & 77.22
 & 0.23 & 55.70 & 27.85
 & 0.32 & 56.96 & 3.80 \\

\rowcolor{oursrow}
\textbf{Full $\mathcal{L}_{\text{Derail}}$}
 & \textbf{\textcolor{oursaccent}{0.19}}
 & \textbf{\textcolor{oursaccent}{63.29}}
 & \textbf{\textcolor{oursaccent}{40.51}}
 & \textbf{\textcolor{oursaccent}{0.05}}
 & \textbf{\textcolor{oursaccent}{66.46}}
 & \textbf{\textcolor{oursaccent}{82.28}}
 & \textbf{\textcolor{oursaccent}{0.22}}
 & \textbf{\textcolor{oursaccent}{60.76}}
 & \textbf{\textcolor{oursaccent}{29.11}}
 & \textbf{\textcolor{oursaccent}{0.38}}
 & \textbf{\textcolor{oursaccent}{48.73}}
 & \textbf{\textcolor{oursaccent}{5.06}} \\

\bottomrule
\end{tabular}}
\vspace{-0.5cm}
\end{table}

%% file: tables/params.tex
\begin{table*}[t]
\centering
\caption{Sensitivity analysis of \textsc{derail} attack strength to PGD optimization iterations or perturbation budget.}
\label{tab:pgd_sensitivity}
\vspace{-0.2cm}

\definecolor{oursaccent}{HTML}{8B2635}

\setlength{\tabcolsep}{2.5pt}
\renewcommand{\arraystretch}{.5}

\footnotesize
\begin{tabular}{@{}l*{7}{c}@{\hspace{18pt}}l*{5}{c}@{}}
\toprule
\multirow{2}{*}{\textbf{Metric}}
 & \multicolumn{7}{c}{\textcolor{oursaccent}{\textbf{PGD Iterations}}~($\epsilon = 8/255$)}
 & \multirow{2}{*}{\textbf{Metric}}
 & \multicolumn{5}{c}{\textcolor{oursaccent}{\textbf{Perturbation Budget}}~($\alpha = 2/255$)} \\
\cmidrule(lr){2-8} \cmidrule(lr){10-14}
 & 5 & 10 & 15 & 20 & 25 & 30 & 50
 & & $\epsilon = 2/255$ & $\epsilon = 4/255$ & $\epsilon = 8/255$ & $\epsilon = 16/255$ & $\epsilon = 32/255$ \\
\midrule
PDM$\downarrow$ & 0.45  & 0.30  & 0.29  & 0.25  & 0.23  & 0.21  & 0.20
              & PDM$\downarrow$ & 0.48 & 0.31 & 0.25 & 0.23 & 0.22 \\
CR$\uparrow$    & 31.65 & 44.30 & 48.10 & 50.63 & 51.90 & 56.96 & 60.76
              & CR$\uparrow$    & 30.38 & 44.94 & 50.63 & 54.43 & 54.43 \\
OR$\uparrow$    & 29.11 & 40.51 & 41.77 & 43.04 & 44.30 & 45.57 & 44.30
              & OR$\uparrow$    & 25.32 & 39.24 & 43.04 & 45.57 & 46.84 \\
WW$\uparrow$    & 4.43  & 13.29 & 16.46 & 17.09 & 23.42 & 23.42 & 26.58
              & WW$\uparrow$    & 3.16  & 10.13 & 17.09 & 21.52 & 23.42 \\
TTC$\uparrow$   & 51.90 & 64.56 & 68.35 & 72.15 & 75.95 & 74.68 & 82.28
              & TTC$\uparrow$   & 45.57 & 65.82 & 72.15 & 78.48 & 78.48 \\
\bottomrule
\end{tabular}
\end{table*}

%% file: sec/5_conclusion.tex
\section{Discussion}

\textsc{Derail} demonstrated that current generative end-to-end autonomous driving planners share a design-enabled adversarial attack surface that follows directly from how they assemble their final motion command. Across diffusion-based (DiffusionDrive~\cite{liao2025diffusiondrive}) and vocabulary-based (GTRS variants~\cite{li2025generalized}) planners, inference reduces to the same five-stage chain of Eq.~\eqref{eq:chain}: a learned scoring head over a fixed candidate set, with no post-hoc safety filter at the model level. Because the scoring head is the only barrier between the BEV features and the selected trajectory, adversarial perturbations to the camera input can switch the scoring head's selection from a safe to an unsafe trajectory.

\section{Conclusion}
This paper presents \textsc{Derail}, an adversarial framework that targets a recurring scoring-head attack surface in current generative E2E AD planners. Across four state-of-the-art models spanning diffusion-based and vocabulary-based trajectory decoding on NAVSIM, \textsc{Derail} achieves $100\%$ ASR on every model with collision rates of $25.8$--$60.4\%$. Our analysis suggests that safety-violating objectives govern attack effectiveness against this class of planners and that explicit downstream safety filtering between the scoring head and the executed trajectory deserves attention as a defensive direction for next-generation of generative end-to-end autonomous driving systems.

%% file: sec/X_supp.tex
\maketitlesupplementary
\setcounter{page}{1}

\startcontents

\begin{strip}
    \vspace{-3em} 
    \centering
    \section*{Supplementary Material Overview}
    \vspace{-1em} 
    
    \begin{minipage}{.95\textwidth} 
        \printcontents{}{1}{\textbf{Contents}\vskip3pt\hrule\vskip5pt}
        \vskip3pt\hrule\vskip5pt
    \end{minipage}
    \vspace{1.5em} 
\end{strip}

\input{appendix/rationale}

\input{appendix/details}

\input{appendix/additional_results}

%% file: appendix/rationale.tex
\section{Why \textsc{Derail} can Break Generative AD?}
\label{sec:vulnerability_modeling}

This appendix provides the analytic framework that supports the architectural argument of the main paper: that current generative AD planners share a scoring-head inference chain (Eq.~\eqref{eq:chain}, main paper) whose decision margins are small enough that pixel-space and patch-based perturbations can flip the selected trajectory from a safe to an unsafe candidate. We instantiate this argument for the two representative decoder families present in the planners we evaluate, namely diffusion-based refinement of a fixed anchor set (DiffusionDrive \cite{liao2025diffusiondrive}) and vocabulary-based scoring over a fixed trajectory set (GTRS variants \cite{li2025generalized}), and we show that, in both cases, the trajectory deviation under an adversarial perturbation can be bounded by a product of an encoder sensitivity term and a decoder amplification term that arises directly from the iterative structure of the chain.

\noindent\textbf{Scope of the claim.} The bounds below are intentionally architecture-specific: they apply to the decoder families we actually evaluate, and they should be read as describing a \emph{design-enabled} attack surface. Any planner whose decoder unrolls into $K{\geq}1$ feature-conditioned steps before the scoring head's final selection, and that does not apply a non-learned downstream safety filter, inherits the same multiplicative amplification structure; whether other decoder families exhibit it in practice is an empirical question we do not attempt to settle here.

\subsection{The Shared Inference Chain}
\label{sec:vulnerability_taxonomy}

We summarize three properties of the chain in Eq.~\eqref{eq:chain} of the main paper that expose the scoring-head attack surface:

\begin{enumerate}
    \item \textbf{Feature-conditioned scoring}: The per-candidate score $\ell_v$ at the head of the chain is computed by a learned neural network conditioned on the BEV features $\mathbf{Z} = \mathcal{E}_{\theta}(\mathbf{I})$. The chain therefore has a single differentiable bottleneck through which adversarial influence propagates.
    \item \textbf{Iterative amplification of the conditioning signal}: When the decoder applies the conditioning features $\mathbf{Z}$ across $K{\geq}1$ sequential refinement steps (denoising iterations in DiffusionDrive, denoising in the diffusion-policy head of GTRS-DP, or vocabulary scoring rounds in GTRS-Aug/Dense), small feature perturbations compound multiplicatively along the chain.
    \item \textbf{Small candidate-margin to dangerous trajectories}: The fixed candidate set contains many trajectories that lie close in score, including candidates that the PDM Score would label as unsafe. Switching the scoring head's selection from a safe to an unsafe candidate only requires shifting the relative score across a small margin.
\end{enumerate}

\noindent We formalize the first two properties as Lipschitz-style bounds on $\|\Delta \mathbf{Z}\|_F$ and on the trajectory deviation, and the third as a margin condition on the scoring head; the loss-alignment results in Sec.~\ref{sec:loss_alignment} then explain why the safety-violating \textsc{Derail} objectives is matched to these bounds.

\subsection{Encoder Sensitivity and Feature Perturbation}
\label{sec:encoder_sensitivity}

Let $\mathcal{E}_\theta : \mathbb{R}^{C \times 3 \times H_I \times W_I} \to \mathbb{R}^{d \times H \times W}$ denote the visual encoder. We begin by characterizing how input-space perturbations translate to feature-space deviations.

\begin{definition}[Encoder Lipschitz Constant]
\label{def:lipschitz}
The encoder $\mathcal{E}_\theta$ is \emph{Lipschitz continuous} with constant $L_{\mathcal{E}}$ if for all input pairs $\mathbf{I}, \tilde{\mathbf{I}}$:
\begin{equation}
    \|\mathcal{E}_\theta(\tilde{\mathbf{I}}) - \mathcal{E}_\theta(\mathbf{I})\|_F \leq L_{\mathcal{E}} \cdot \|\tilde{\mathbf{I}} - \mathbf{I}\|_\infty,
    \label{eq:encoder_lipschitz}
\end{equation}
where $\|\cdot\|_F$ denotes the Frobenius norm on the feature tensor and $\|\cdot\|_\infty$ is the pixel-space $\ell_\infty$ norm.
\end{definition}

\begin{proposition}[Feature Perturbation Bound]
\label{prop:feature_bound}
Under the digital attack model (Eq. 5 main paper) with budget $\epsilon$, the adversarial feature deviation satisfies:
\begin{equation}
    \|\Delta \mathbf{Z}\|_F \triangleq \|\mathcal{E}_\theta(\tilde{\mathbf{I}}) - \mathcal{E}_\theta(\mathbf{I})\|_F \leq L_{\mathcal{E}} \cdot \epsilon.
    \label{eq:feature_perturbation_bound}
\end{equation}
\end{proposition}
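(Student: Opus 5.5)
The bound follows by applying Definition~\ref{def:lipschitz} directly to the clean/adversarial input pair and then bounding the pixel-space distance by the threat-model budget. The plan has three steps.

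First, write $\tilde{\mathbf{I}}$ for the perturbed multi-view input. The adversarial camera view is $\tilde{I}_c = \mathrm{clamp}(I_c + \boldsymbol{\delta}_c, 0, 1)$ with $\|\boldsymbol{\delta}_c\|_\infty \le \epsilon$, and all other views are left unchanged, so $\tilde{I}_{c'} = I_{c'}$ for $c' \neq c$. Invoking Eq.~\eqref{eq:encoder_lipschitz} with the pair $(\mathbf{I}, \tilde{\mathbf{I}})$ gives
\begin{equation*}
\|\Delta \mathbf{Z}\|_F \le L_{\mathcal{E}} \, \|\tilde{\mathbf{I}} - \mathbf{I}\|_\infty .
\end{equation*}

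Second, show that $\|\tilde{\mathbf{I}} - \mathbf{I}\|_\infty \le \epsilon$. Since the $\ell_\infty$ norm of the stacked tensor is the maximum over all cameras and pixels, the unperturbed views contribute $0$. It therefore suffices to bound $|\tilde{I}_c - I_c|$ entrywise. The one point needing care is the clamp, which I expect to be the only real obstacle, though a mild one. Clamping is the Euclidean projection onto $[0,1]$, and $I_c$ already lies in $[0,1]$ entrywise. Since projection onto a convex set is nonexpansive and fixes points of the set, each pixel satisfies
\begin{equation*}
|\mathrm{clamp}(u + \delta, 0, 1) - u| = |\mathrm{clamp}(u+\delta,0,1) - \mathrm{clamp}(u,0,1)| \le |\delta| \le \epsilon .
\end{equation*}
Alternatively, one can check the three cases directly: no clipping, clipping at $0$ (where the change is $u \le |\delta|$), and clipping at $1$ (where the change is $1-u \le |\delta|$).

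Combining the two steps gives $\|\Delta\mathbf{Z}\|_F \le L_{\mathcal{E}}\epsilon$. I will also remark that the argument does not use the single-camera restriction, so the same bound holds when several views are perturbed, each within budget $\epsilon$. Finally, I will note that the substantive assumption is the existence of a finite $L_{\mathcal{E}}$ in Definition~\ref{def:lipschitz}. For a finite-dimensional network built from Lipschitz layers, such a constant exists, for example as a product of layer operator norms together with the norm-equivalence factor between $\ell_\infty$ and $\ell_2$ on the input space. The proposition itself, however, takes this assumption as given.
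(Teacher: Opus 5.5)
Your proposal is correct and is essentially the paper's own proof: a direct application of Definition~\ref{def:lipschitz} to the pair $(\mathbf{I}, \tilde{\mathbf{I}})$, together with $\|\tilde{\mathbf{I}} - \mathbf{I}\|_\infty \le \epsilon$. Your handling of the clamp is a bit more careful than the paper, which writes $\|\tilde{\mathbf{I}} - \mathbf{I}\|_\infty = \|\boldsymbol{\delta}\|_\infty$ even though clipping makes this only an inequality; the inequality is all the bound needs.
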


\begin{proof}
Direct application of Definition~\ref{def:lipschitz} with $\|\tilde{\mathbf{I}} - \mathbf{I}\|_\infty = \|\boldsymbol{\delta}\|_\infty \leq \epsilon$.
\end{proof}

\noindent \textbf{Remark.} While the bound~\eqref{eq:feature_perturbation_bound} is linear in $\epsilon$, the \emph{effective} Lipschitz constant $L_{\mathcal{E}}$ of modern convolutional and transformer-based encoders is typically very large due to the composition of many layers with ReLU or GELU activations. Empirically, $L_{\mathcal{E}}$ scales exponentially with network depth for unconstrained architectures~\cite{szegedy2013intriguing, virmaux2018lipschitz}. This means even a small $\epsilon$ (e.g., $8/255 \approx 0.031$) can induce feature deviations $\|\Delta \mathbf{Z}\|_F$ that are orders of magnitude larger than the pixel perturbation norm.

\noindent \textbf{Local Sensitivity via the Jacobian.}
For a more precise characterization, consider the encoder Jacobian $\mathbf{J}_{\mathcal{E}} = \frac{\partial \mathcal{E}_\theta(\mathbf{I})}{\partial \mathbf{I}} \in \mathbb{R}^{(dHW) \times (3CH_IW_I)}$. By first-order Taylor expansion:
\begin{equation}
    \Delta \mathbf{Z} \approx \mathbf{J}_{\mathcal{E}} \cdot \boldsymbol{\delta}, \quad \|\Delta \mathbf{Z}\|_F \approx \|\mathbf{J}_{\mathcal{E}} \cdot \boldsymbol{\delta}\|_2.
    \label{eq:jacobian_approx}
\end{equation}
The adversarial perturbation $\boldsymbol{\delta}^* = \epsilon \cdot \text{sign}(\mathbf{J}_{\mathcal{E}}^\top \mathbf{v})$, where $\mathbf{v}$ is aligned with the attack objective gradient in feature space, maximizes the projection of $\Delta \mathbf{Z}$ along the direction most harmful to the downstream trajectory decoder. This is precisely what PGD computes through end-to-end backpropagation.

\subsection{Perturbation Amplification in Decoders}
\label{sec:amplification}

The key insight enabling the \textsc{Derail} attack is that generative trajectory decoders do not merely propagate feature perturbations, they \emph{amplify} them through iterative generation process. We formalize this for each architectural paradigm.

\subsubsection{Diffusion-Based Trajectory Planning}
\label{sec:diffusion_vulnerability}

Diffusion-based planning models (e.g., DiffusionDrive~\cite{liao2025diffusiondrive}) decode trajectories through a learned reverse diffusion process. Starting from Gaussian noise $\boldsymbol{\tau}_K \sim \mathcal{N}(\mathbf{0}, \mathbf{I})$, the decoder iteratively denoises over $K$ steps:
\begin{equation}
    \boldsymbol{\tau}_{k-1} = \frac{1}{\sqrt{\alpha_k}}\left(\boldsymbol{\tau}_k - \frac{1 - \alpha_k}{\sqrt{1 - \bar{\alpha}_k}} \boldsymbol{\epsilon}_\theta(\boldsymbol{\tau}_k, k, \mathbf{Z})\right) + \sigma_k \mathbf{z}_k,
    \label{eq:ddpm_reverse}
\end{equation}
where $\boldsymbol{\epsilon}_\theta(\cdot, k, \mathbf{Z})$ is the learned noise predictor conditioned on features $\mathbf{Z}$, $\alpha_k, \bar{\alpha}_k$ are the noise schedule parameters, and $\mathbf{z}_k \sim \mathcal{N}(\mathbf{0}, \mathbf{I})$.

\begin{theorem}[Diffusion Perturbation Amplification]
\label{thm:diffusion_amplification}
Let $\boldsymbol{\tau}_0$ and $\tilde{\boldsymbol{\tau}}_0$ denote the final trajectories produced under clean features $\mathbf{Z}$ and adversarial features $\tilde{\mathbf{Z}} = \mathbf{Z} + \Delta \mathbf{Z}$, respectively, starting from the same noise realization $\boldsymbol{\tau}_K$. If the noise predictor satisfies $\|\boldsymbol{\epsilon}_\theta(\boldsymbol{\tau}, k, \tilde{\mathbf{Z}}) - \boldsymbol{\epsilon}_\theta(\boldsymbol{\tau}, k, \mathbf{Z})\|_2 \leq L_\epsilon \|\Delta \mathbf{Z}\|_F$ for all $k$ and $\boldsymbol{\tau}$, then the trajectory deviation satisfies:
\begin{equation}
\begin{aligned}
    \|\tilde{\boldsymbol{\tau}}_0 - \boldsymbol{\tau}_0\|_2 \leq{} & L_\epsilon \|\Delta \mathbf{Z}\|_F \cdot \sum_{k=1}^{K} \frac{1 - \alpha_k}{\sqrt{\alpha_k(1 - \bar{\alpha}_k)}} \\
    & \cdot \prod_{j=1}^{k-1} \left(\frac{1}{\sqrt{\alpha_j}} + \frac{(1 - \alpha_j) L_{\boldsymbol{\tau}}}{\sqrt{\alpha_j(1 - \bar{\alpha}_j)}}\right),
    \label{eq:diffusion_bound}
\end{aligned}
\end{equation}
where $L_{\boldsymbol{\tau}}$ is the Lipschitz constant of $\boldsymbol{\epsilon}_\theta$ with respect to $\boldsymbol{\tau}$.
\end{theorem}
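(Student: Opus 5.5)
We track the deviation $e_k \triangleq \|\tilde{\boldsymbol{\tau}}_k - \boldsymbol{\tau}_k\|_2$ backwards from $k=K$ to $k=0$ and derive a one-step linear recursion. Both chains start from the same noise realization $\boldsymbol{\tau}_K$, so $e_K = 0$. We also couple the stochastic terms, using the same $\mathbf{z}_k$ at every step, so that $\sigma_k \mathbf{z}_k$ cancels exactly when we subtract the two updates of Eq.~\eqref{eq:ddpm_reverse}. We need this coupling explicitly: without it the statement is false, and it is the natural reading of ``same noise realization.'' Writing $c_k \triangleq \frac{1-\alpha_k}{\sqrt{\alpha_k(1-\bar{\alpha}_k)}}$, the difference of the two updates is
\begin{equation*}
\tilde{\boldsymbol{\tau}}_{k-1} - \boldsymbol{\tau}_{k-1} = \tfrac{1}{\sqrt{\alpha_k}}(\tilde{\boldsymbol{\tau}}_k - \boldsymbol{\tau}_k) - c_k\bigl(\boldsymbol{\epsilon}_\theta(\tilde{\boldsymbol{\tau}}_k,k,\tilde{\mathbf{Z}}) - \boldsymbol{\epsilon}_\theta(\boldsymbol{\tau}_k,k,\mathbf{Z})\bigr).
\end{equation*}

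To bound the noise-predictor difference, we insert the mixed term $\boldsymbol{\epsilon}_\theta(\tilde{\boldsymbol{\tau}}_k,k,\mathbf{Z})$ and apply the triangle inequality. The feature part is at most $L_\epsilon\|\Delta\mathbf{Z}\|_F$ by hypothesis, since that hypothesis holds uniformly in $\boldsymbol{\tau}$. The trajectory part is at most $L_{\boldsymbol{\tau}} e_k$, using the Lipschitz constant of $\boldsymbol{\epsilon}_\theta$ in $\boldsymbol{\tau}$ at fixed $\mathbf{Z}$. This gives the recursion
\begin{equation*}
e_{k-1} \le a_k\, e_k + c_k L_\epsilon \|\Delta\mathbf{Z}\|_F, \qquad a_k \triangleq \tfrac{1}{\sqrt{\alpha_k}} + c_k L_{\boldsymbol{\tau}},
\end{equation*}
and $a_k$ is exactly the factor appearing inside the product in Eq.~\eqref{eq:diffusion_bound}.

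Next we unroll the recursion from $e_K=0$ with a discrete Gr\"onwall argument, i.e.\ by induction. This yields
\begin{equation*}
e_0 \le L_\epsilon\|\Delta\mathbf{Z}\|_F \sum_{k=1}^{K} c_k \prod_{j=1}^{k-1} a_j .
\end{equation*}
The forcing term injected at step $k$ is then propagated through steps $k-1,\dots,1$, which explains why the product runs over $j<k$. Since every $a_j>0$, the inequalities compose monotonically and this is precisely the claimed bound.

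The main obstacle is the index bookkeeping, not the analysis. We must check that the product ranges over the steps applied after the injection at step $k$, which is $j=1,\dots,k-1$ because of the descending order, and not over $j>k$. A base case $K=1$, where the bound should reduce to $c_1 L_\epsilon\|\Delta\mathbf{Z}\|_F$, and a check at $K=2$ will confirm the indexing.

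A further subtlety is that the hypothesis on $L_\epsilon$ must hold at the perturbed iterate $\tilde{\boldsymbol{\tau}}_k$. The statement's ``for all $\boldsymbol{\tau}$'' clause covers this. Finally, we can combine the result with Proposition~\ref{prop:feature_bound} to replace $\|\Delta\mathbf{Z}\|_F$ by $L_{\mathcal{E}}\epsilon$.
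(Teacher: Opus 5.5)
Your proposal is correct and follows essentially the same route as the paper: subtract the two reverse updates, split the noise-predictor difference by the triangle inequality into a part controlled by $L_{\boldsymbol{\tau}}$ and a part controlled by $L_\epsilon$, obtain the linear recurrence $\|\Delta_{k-1}\|_2 \le A_k\|\Delta_k\|_2 + B_k\|\Delta\mathbf{Z}\|_F$ with $\Delta_K=\mathbf{0}$, and unroll it. The differences are cosmetic. You insert the mixed term $\boldsymbol{\epsilon}_\theta(\tilde{\boldsymbol{\tau}}_k,k,\mathbf{Z})$ where the paper uses $\boldsymbol{\epsilon}_\theta(\boldsymbol{\tau}_k,k,\tilde{\mathbf{Z}})$, which is equally valid under the uniform hypotheses, and you state explicitly the coupling of the $\sigma_k\mathbf{z}_k$ terms, which the paper's proof drops without comment.
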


\begin{proof}
Define $\Delta_k \triangleq \tilde{\boldsymbol{\tau}}_k - \boldsymbol{\tau}_k$ as the trajectory deviation at step $k$. At the initial noise step $K$, $\Delta_K = \mathbf{0}$ (same noise realization). From the reverse process~\eqref{eq:ddpm_reverse}, the deviation at step $k-1$ is:
\begin{align}
    \Delta_{k-1} ={} & \frac{1}{\sqrt{\alpha_k}} \Delta_k \nonumber \\
    & - \frac{1 - \alpha_k}{\sqrt{\alpha_k(1 - \bar{\alpha}_k)}} \left[\boldsymbol{\epsilon}_\theta(\tilde{\boldsymbol{\tau}}_k, k, \tilde{\mathbf{Z}}) - \boldsymbol{\epsilon}_\theta(\boldsymbol{\tau}_k, k, \mathbf{Z})\right]. \label{eq:diff_deviation_step}
\end{align}
Decomposing the noise predictor difference via the triangle inequality:
\begin{align}
    &\|\boldsymbol{\epsilon}_\theta(\tilde{\boldsymbol{\tau}}_k, k, \tilde{\mathbf{Z}}) - \boldsymbol{\epsilon}_\theta(\boldsymbol{\tau}_k, k, \mathbf{Z})\|_2 \nonumber \\
    &\quad \leq \underbrace{\|\boldsymbol{\epsilon}_\theta(\tilde{\boldsymbol{\tau}}_k, k, \tilde{\mathbf{Z}}) - \boldsymbol{\epsilon}_\theta(\boldsymbol{\tau}_k, k, \tilde{\mathbf{Z}})\|_2}_{\leq\, L_{\boldsymbol{\tau}} \|\Delta_k\|_2 \text{ (Lipschitz in $\boldsymbol{\tau}$)}} \nonumber \\
    &\quad\quad + \underbrace{\|\boldsymbol{\epsilon}_\theta(\boldsymbol{\tau}_k, k, \tilde{\mathbf{Z}}) - \boldsymbol{\epsilon}_\theta(\boldsymbol{\tau}_k, k, \mathbf{Z})\|_2}_{\leq\, L_\epsilon \|\Delta \mathbf{Z}\|_F \text{ (Lipschitz in $\mathbf{Z}$)}}. \label{eq:triangle_decomposition}
\end{align}
Taking norms on both sides of~\eqref{eq:diff_deviation_step}:
\begin{equation}
\begin{aligned}
    \|\Delta_{k-1}\|_2 \leq{} & \underbrace{\left(\frac{1}{\sqrt{\alpha_k}} + \frac{(1-\alpha_k)L_{\boldsymbol{\tau}}}{\sqrt{\alpha_k(1-\bar{\alpha}_k)}}\right)}_{A_k} \|\Delta_k\|_2 \\
    & + \underbrace{\frac{(1-\alpha_k)L_\epsilon}{\sqrt{\alpha_k(1-\bar{\alpha}_k)}}}_{B_k} \|\Delta \mathbf{Z}\|_F.
    \label{eq:deviation_recursion}
\end{aligned}
\end{equation}
This is a linear recurrence $\|\Delta_{k-1}\|_2 \leq A_k \|\Delta_k\|_2 + B_k \|\Delta \mathbf{Z}\|_F$ with $\|\Delta_K\|_2 = 0$. Unrolling from $k = K$ to $k = 1$ yields:
\begin{equation}
    \|\Delta_0\|_2 \leq \|\Delta \mathbf{Z}\|_F \cdot \sum_{k=1}^{K} B_k \prod_{j=1}^{k-1} A_j,
    \label{eq:unrolled_bound}
\end{equation}
which is~\eqref{eq:diffusion_bound}. Substituting $\|\Delta \mathbf{Z}\|_F \leq L_{\mathcal{E}} \epsilon$ (Proposition~\ref{prop:feature_bound}) gives the end-to-end bound in terms of the pixel budget $\epsilon$.
\end{proof}

\noindent \textbf{Interpretation.} The bound~\eqref{eq:diffusion_bound} reveals a critical structural vulnerability: the trajectory deviation grows as a \emph{product of per-step amplification factors} $A_k$, summed over all denoising steps. For typical noise schedules where early steps have small $\alpha_k$ (high noise), the amplification factors $A_k$ are large. This means that even a modest feature deviation $\|\Delta \mathbf{Z}\|_F$ is amplified multiplicatively through the reverse chain.

\begin{corollary}[Exponential Amplification Regime]
\label{cor:exponential_diffusion}
If the per-step amplification factor satisfies $A_k \geq 1 + \eta$ for some $\eta > 0$ across all $k$, then:
\begin{equation}
    \|\tilde{\boldsymbol{\tau}}_0 - \boldsymbol{\tau}_0\|_2 = \mathcal{O}\!\left(L_\epsilon \|\Delta \mathbf{Z}\|_F \cdot K \cdot (1+\eta)^K\right),
    \label{eq:exponential_amplification}
\end{equation}
i.e., the trajectory deviation grows \emph{exponentially} with the number of denoising steps $K$.
\end{corollary}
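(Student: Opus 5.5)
The plan is to start directly from the unrolled bound \eqref{eq:unrolled_bound} established in the proof of Theorem~\ref{thm:diffusion_amplification}:
\[
\|\tilde{\boldsymbol{\tau}}_0 - \boldsymbol{\tau}_0\|_2 \leq \|\Delta \mathbf{Z}\|_F \sum_{k=1}^{K} B_k \prod_{j=1}^{k-1} A_j .
\]
The corollary then reduces to controlling this scalar sum in terms of $K$ and $\eta$. No further analysis of the reverse chain is needed; everything is bookkeeping on the coefficients $A_k, B_k$.

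\textbf{Reading of the hypothesis.} The hypothesis $A_k \geq 1+\eta$ is a \emph{lower} bound, but a $\mathcal{O}(\cdot)$ upper bound needs upper control of the factors. I will therefore read the regime as saying that the per-step factors are uniformly of order $1+\eta$, i.e.\ $1+\eta \leq A_k \leq 1+\eta'$ with $\eta' = \Theta(\eta)$; for simplicity I take $A_k \le 1+\eta$ on the upper side, absorbing constants. I also need $B_k \leq \bar{B}\, L_\epsilon$ for a schedule-dependent constant $\bar{B} = \max_k (1-\alpha_k)/\sqrt{\alpha_k(1-\bar\alpha_k)}$, which is finite for any fixed discrete schedule with $\bar\alpha_k<1$. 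I will state both assumptions explicitly in the proof, since this is the step where the statement as written is loose.

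\textbf{Upper bound.} Under these assumptions each product satisfies $\prod_{j<k} A_j \leq (1+\eta)^{k-1}$. Summing gives
\[
\sum_{k=1}^K B_k \prod_{j<k} A_j \leq \bar{B} L_\epsilon \sum_{k=1}^K (1+\eta)^{k-1} = \bar{B} L_\epsilon \frac{(1+\eta)^K - 1}{\eta}.
\]
Bounding each geometric term by $(1+\eta)^{K-1}$ gives the cruder form $\bar{B} L_\epsilon K (1+\eta)^{K-1}$. This is $\mathcal{O}(L_\epsilon \|\Delta\mathbf{Z}\|_F K (1+\eta)^K)$ after multiplying by $\|\Delta\mathbf{Z}\|_F$, which is \eqref{eq:exponential_amplification}. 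Optionally, substituting Proposition~\ref{prop:feature_bound} replaces $\|\Delta\mathbf{Z}\|_F$ by $L_{\mathcal{E}}\epsilon$.

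\textbf{Exponential growth.} To justify the word ``exponentially'', I would add the complementary observation that uses the stated lower bound. If $B_k \geq \underline{B} L_\epsilon > 0$, then the $k=K$ term alone is at least $\underline{B} L_\epsilon (1+\eta)^{K-1}$. Hence the amplification coefficient in \eqref{eq:diffusion_bound} is $\Theta$-exponential in $K$, not merely bounded by an exponential.

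\textbf{Main obstacle.} The key caveat is that this shows the \emph{bound} grows exponentially, not the true deviation. The actual trajectory deviation is only upper bounded, so I would phrase the conclusion as a statement about the worst-case amplification permitted by the Lipschitz structure.
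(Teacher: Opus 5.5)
Your computation is the paper's: both start from \eqref{eq:unrolled_bound}, compare $\prod_{j<k}A_j$ with $(1+\eta)^{k-1}$, and sum the geometric series. The difference is the direction of that comparison, and there your version is the valid one. The paper's proof uses the stated hypothesis to get $\prod_{j<k}A_j\ge(1+\eta)^{k-1}$ and then asserts an $\mathcal{O}$ upper bound on $\|\tilde{\boldsymbol{\tau}}_0-\boldsymbol{\tau}_0\|_2$. That does not follow, and the paper also treats the $B_k$ as $\mathcal{O}(L_\epsilon)$ without saying so. What the paper's argument actually supports is your complementary observation: the amplification coefficient in \eqref{eq:diffusion_bound} is at least exponential in $K$. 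You correctly caveat that this is a statement about the bound, not about the deviation.

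Your added hypotheses ($A_k\le1+\eta$, $B_k\le\bar B L_\epsilon$) are genuinely needed. The literal statement fails: take a scalar predictor $\boldsymbol{\epsilon}_\theta(\boldsymbol{\tau},k,\mathbf{Z})=-L_{\boldsymbol{\tau}}\boldsymbol{\tau}-L_\epsilon\mathbf{Z}$, a constant schedule $\alpha_k\equiv\alpha\in(0,1)$, and $\Delta\mathbf{Z}>0$. Then the recursion \eqref{eq:deviation_recursion} holds with equality and all terms are nonnegative. So the deviation is at least $\frac{(1-\alpha)L_\epsilon}{\sqrt{\alpha}}A^{K-1}\|\Delta\mathbf{Z}\|_F$, where $A=(1+(1-\alpha)L_{\boldsymbol{\tau}})/\sqrt{\alpha}>1$. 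Meanwhile $A_k\ge1+\eta$ holds for every $\eta\in(0,A-1)$, and for all such $\eta$ the claimed $\mathcal{O}(K(1+\eta)^K)$ is contradicted. Your step $\frac{(1+\eta)^K-1}{\eta}\le K(1+\eta)^{K-1}$ also explains where the factor $K$ in the statement comes from. It keeps the estimate uniform as $\eta\to0$, which the paper's $\mathcal{O}((1+\eta)^K)$, with its hidden $1/\eta$, does not.

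There is one slip in your ``reading'' paragraph. An upper constant $\eta'=\Theta(\eta)$ with $\eta'>\eta$ cannot be absorbed into the $\mathcal{O}$, because $\bigl((1+\eta')/(1+\eta)\bigr)^K$ is itself exponential in $K$. And imposing both $A_k\le1+\eta$ and $A_k\ge1+\eta$ forces $A_k\equiv1+\eta$. State the two halves with separate constants instead:
\begin{itemize}
\item $\mathcal{O}\bigl(L_\epsilon\|\Delta\mathbf{Z}\|_F\,K(1+\eta_+)^K\bigr)$ for the deviation under $A_k\le1+\eta_+$;
\item an amplification coefficient of order at least $(1+\eta_-)^{K-1}$ under $A_k\ge1+\eta_-$.
\end{itemize}
Finally, since the claim concerns growth in $K$, you need $\bar B$ bounded uniformly over the family of schedules indexed by $K$, not merely finite for each fixed schedule.
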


\begin{proof}
When $A_k \geq 1 + \eta$ for all $k$, each inner product in~\eqref{eq:unrolled_bound} satisfies $\prod_{j=1}^{k-1} A_j \geq (1+\eta)^{k-1}$. Summing $\sum_{k=1}^{K} (1+\eta)^{k-1} = \frac{(1+\eta)^K - 1}{\eta} = \mathcal{O}((1+\eta)^K)$ gives the result.
\end{proof}

\noindent In DiffusionDrive~\cite{liao2025diffusiondrive}, the truncated diffusion schedule reduced the denoising steps from $K = 20$ to $K = 2$ with anchor-based initialization for efficiency. While the anchor initialization reduces the effective noise level (and thus the amplification in the earliest steps), the multiplicative compounding still produces substantial amplification. Even with $K = 20$ denoising steps, we observe trajectory deviations of $2$--$5$ meters from pixel perturbations of $\epsilon = 8/255$.

\subsubsection{Vocabulary-Based Trajectory Planning}
\label{sec:vocab_vulnerability}

Vocabulary-based planning models (e.g., GTRS~\cite{li2025generalized}) maintain a fixed set of $V$ prototype trajectories $\{\boldsymbol{\tau}^{(v)}\}_{v=1}^{V}$ and select the optimal trajectory via learned scoring. Given features $\mathbf{Z}$, the model computes logits $\ell_v = \phi_\theta(\mathbf{Z}, \boldsymbol{\tau}^{(v)})$ for each vocabulary entry and selects the highest-scored trajectory:
\begin{equation}
    v^* = \underset{v \in \{1,...,V\}}{\mathrm{top\text{-}1}}\; \ell_v, \quad \boldsymbol{\tau} = \boldsymbol{\tau}^{(v^*)}.
    \label{eq:vocab_selection}
\end{equation}
The scoring head's discrete selection introduces a qualitatively different vulnerability: rather than smooth amplification, adversarial perturbations can cause \emph{discrete mode switches} to entirely different trajectories.

\begin{theorem}[Vocabulary Mode-Switching Vulnerability]
\label{thm:vocab_switch}
Let $v^*$ be the selected vocabulary index under clean features $\mathbf{Z}$, and let $v_{\text{danger}}$ be a dangerous trajectory index. Define the \emph{decision margin}:
\begin{equation}
    m(v^*, v_{\text{danger}}) \triangleq \ell_{v^*} - \ell_{v_{\text{danger}}}.
    \label{eq:margin}
\end{equation}
If the scoring function $\phi_\theta$ is Lipschitz in $\mathbf{Z}$ with constant $L_\phi$, then a feature perturbation $\|\Delta \mathbf{Z}\|_F > \frac{m(v^*, v_{\text{danger}})}{2 L_\phi}$ is sufficient to switch the selected trajectory from $v^*$ to $v_{\text{danger}}$.
\end{theorem}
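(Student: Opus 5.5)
The plan is to reduce the claim to how the single scalar margin $m(v^*, v_{\text{danger}})$ moves under the feature perturbation, and then to check which direction of implication the Lipschitz hypothesis actually supports.

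\emph{Step 1: two-sided control of the margin.} Write $\tilde\ell_v = \phi_\theta(\mathbf{Z}+\Delta\mathbf{Z}, \boldsymbol{\tau}^{(v)})$. Lipschitz continuity of $\phi_\theta$ in $\mathbf{Z}$ gives $|\tilde\ell_v - \ell_v| \le L_\phi \|\Delta\mathbf{Z}\|_F$ for every candidate $v$. Apply this to $v^*$ and to $v_{\text{danger}}$ and use the triangle inequality:
\begin{equation*}
\bigl|\tilde m(v^*, v_{\text{danger}}) - m(v^*, v_{\text{danger}})\bigr| \le 2 L_\phi \|\Delta\mathbf{Z}\|_F .
\end{equation*}
This is where the constant $\frac{m}{2L_\phi}$ in the statement comes from.

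\emph{Step 2: what the bound actually yields.} Step 1 is an upper bound on how far the margin can move. It therefore gives a \emph{necessary} condition: if $\|\Delta\mathbf{Z}\|_F < \frac{m}{2L_\phi}$, then $\tilde m > 0$ and $v_{\text{danger}}$ cannot overtake $v^*$. The contrapositive is that any switch requires $\|\Delta\mathbf{Z}\|_F \ge \frac{m}{2L_\phi}$.

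Chained with Proposition~\ref{prop:feature_bound}, this says any successful attack needs $L_{\mathcal{E}}\epsilon \ge \frac{m}{2L_\phi}$. This is the quantitative sense in which small margins make the attack surface exposed.

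\emph{Step 3: sufficiency, which is the main obstacle.} Lipschitz continuity alone cannot give sufficiency. A large $\Delta\mathbf{Z}$ pointing in a direction that leaves both logits unchanged switches nothing. To obtain the statement as written I would add an explicit tightness (attainability) assumption: there is a feature direction $\mathbf{u}$ with $\|\mathbf{u}\|_F = 1$ along which the margin decreases at the Lipschitz rate, that is,
\begin{equation*}
\tilde m \le m - 2L_\phi \|\Delta\mathbf{Z}\|_F \quad\text{for } \Delta\mathbf{Z} = t\,\mathbf{u}.
\end{equation*}
The natural way to justify this is a first-order expansion with $\mathbf{u}$ aligned to $-\nabla_{\mathbf{Z}}(\ell_{v^*} - \ell_{v_{\text{danger}}})$, assuming the gradient norm of the margin is close to $2L_\phi$. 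That is exactly the direction the PGD step of the Jacobian paragraph in Sec.~\ref{sec:encoder_sensitivity} searches for. Under this assumption, any perturbation along $\mathbf{u}$ with $\|\Delta\mathbf{Z}\|_F > \frac{m}{2L_\phi}$ gives $\tilde m < 0$, so $v_{\text{danger}}$ now scores above $v^*$.

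\emph{Step 4: from pairwise flip to selection.} The top-1 rule in Eq.~\eqref{eq:vocab_selection} returns $v_{\text{danger}}$ only if $v_{\text{danger}}$ also beats every other candidate. I would state the result for the pairwise flip $\tilde\ell_{v_{\text{danger}}} > \tilde\ell_{v^*}$, which already implies the selection leaves $v^*$. Full selection of $v_{\text{danger}}$ then needs the additional condition that every other margin $\tilde\ell_{v_{\text{danger}}} - \tilde\ell_{w}$, for $w \ne v^*$, stays positive.

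The write-up will be honest that the rigorous content is the necessity bound of Step 2. The sufficiency direction holds only under the tightness assumption of Step 3, or to first order with gradient-aligned perturbations.
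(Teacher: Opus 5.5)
Your diagnosis is correct, and it exposes a flaw in the paper's own argument rather than a gap in yours. The paper's proof begins exactly as your Step 1, with the per-logit bound $|\tilde\ell_v-\ell_v|\le L_\phi\|\Delta\mathbf{Z}\|_F$. It then writes the two ``worst case'' inequalities \eqref{eq:increase_dangerous}--\eqref{eq:decrease_safe}: $\tilde\ell_{v_{\text{danger}}}\ge \ell_{v_{\text{danger}}}+L_\phi\|\Delta\mathbf{Z}\|_F$ and $\tilde\ell_{v^*}\le \ell_{v^*}-L_\phi\|\Delta\mathbf{Z}\|_F$. These point the opposite way from what Lipschitz continuity supplies. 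Combined with it, they hold only with equality, i.e.\ when the perturbation saturates the Lipschitz bound on both logits in opposite directions. That is precisely your Step 3 attainability assumption, introduced without being stated.

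Read with the correct inequality directions, the paper's concluding chain ``the switch requires $2L_\phi\|\Delta\mathbf{Z}\|_F>m$'' is a necessary condition, which is your Step 2. The word ``sufficient'' in the statement is therefore not justified, and the statement as written is false. Take $\phi_\theta(\mathbf{Z},\boldsymbol{\tau}^{(v)})=\langle\mathbf{a}_v,\mathbf{Z}\rangle+b_v$ with $\mathbf{a}_{v^*}=\mathbf{a}_{v_{\text{danger}}}$ (or $\phi_\theta$ independent of $\mathbf{Z}$). Then $\phi_\theta$ is Lipschitz, yet $\tilde m=m>0$ for every $\Delta\mathbf{Z}$, however large. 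Your Step 4 point is also missing from the paper. Its proof only gives the pairwise inequality $\tilde\ell_{v_{\text{danger}}}>\tilde\ell_{v^*}$, which removes $v^*$ from the top-1 slot but does not make $v_{\text{danger}}$ the selected index.

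Two refinements to your write-up. First, to get the exact threshold $m/(2L_\phi)$ in Step 3, the margin must decrease at rate exactly $2L_\phi$ along the whole segment $\{s\mathbf{u}:0\le s\le t\}$, not just at $\mathbf{Z}$. This forces $\nabla_{\mathbf{Z}}\ell_{v^*}$ and $\nabla_{\mathbf{Z}}\ell_{v_{\text{danger}}}$ to be antiparallel with norm $L_\phi$ each there. If the margin gradient has norm $g<2L_\phi$, the first-order threshold is $m/g$, strictly larger than the stated one, so ``close to $2L_\phi$'' yields only an approximate version.

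Second, your corrected form carries over to Corollary~\ref{cor:surface_density}. Chaining necessity with Proposition~\ref{prop:feature_bound} gives the lower bound $\epsilon_{\text{switch}}\ge m_{\min}/(2L_\phi L_{\mathcal{E}})$, not the upper bound stated there. An upper bound would additionally require the encoder bound to be attainable in the harmful direction.
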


\begin{proof}
Under adversarial features $\tilde{\mathbf{Z}} = \mathbf{Z} + \Delta \mathbf{Z}$, the perturbed logits satisfy:
\begin{align}
    |\tilde{\ell}_v - \ell_v| &= |\phi_\theta(\tilde{\mathbf{Z}}, \boldsymbol{\tau}^{(v)}) - \phi_\theta(\mathbf{Z}, \boldsymbol{\tau}^{(v)})| \leq L_\phi \|\Delta \mathbf{Z}\|_F, \quad \forall v. \label{eq:logit_perturbation}
\end{align}
For the mode switch $v_{\text{danger}} \succ v^*$ to occur, we need $\tilde{\ell}_{v_{\text{danger}}} > \tilde{\ell}_{v^*}$. In the worst case (for the attacker), this requires:
\begin{align}
    \tilde{\ell}_{v_{\text{danger}}} &\geq \ell_{v_{\text{danger}}} + L_\phi \|\Delta \mathbf{Z}\|_F \quad \text{(maximizing the target logit)}, \label{eq:increase_dangerous} \\
    \tilde{\ell}_{v^*} &\leq \ell_{v^*} - L_\phi \|\Delta \mathbf{Z}\|_F \quad \text{(decreasing the safe logit)}. \label{eq:decrease_safe}
\end{align}
For the switch: $\tilde{\ell}_{v_{\text{danger}}} > \tilde{\ell}_{v^*}$ requires 
$\ell_{v_{\text{danger}}} + L_\phi \|\Delta \mathbf{Z}\|_F > \ell_{v^*} - L_\phi \|\Delta \mathbf{Z}\|_F$, 
i.e., $2 L_\phi \|\Delta \mathbf{Z}\|_F > m(v^*, v_{\text{danger}})$.
\end{proof}

\noindent \textbf{Interpretation.} The vulnerability of vocabulary-based planning models is governed by the \emph{decision margin} $m(v^*, v_{\text{danger}})$ between the safe and dangerous trajectories. Crucially, in dense vocabulary sets ($V_{aug} = 8192$ in GTRS-Aug and $V_{dense} = 16384$ in GTRS-Dense~\cite{li2025generalized}), there exist many trajectory candidates with small margins to the currently selected safe trajectory. This creates a rich attack surface: the attacker need not target a specific dangerous trajectory but can simply push the selection toward \emph{any} of the many available dangerous modes.

\begin{corollary}[Attack Surface Density]
\label{cor:surface_density}
Let $\mathcal{V}_{\text{danger}} = \{v : \boldsymbol{\tau}^{(v)} \text{ is dangerous}\}$ and $m_{\min} = \min_{v \in \mathcal{V}_{\text{danger}}} m(v^*, v)$ be the minimum margin to any dangerous trajectory. Then the perturbation budget required for a mode switch satisfies:
\begin{equation}
    \epsilon_{\text{switch}} \leq \frac{m_{\min}}{2 L_\phi L_{\mathcal{E}}},
    \label{eq:switching_budget}
\end{equation}
where $L_{\mathcal{E}}$ is the encoder Lipschitz constant. For vocabularies with high density around the safe trajectory ($m_{\min} \ll m_{\text{avg}}$), the switching budget can be very small.
\end{corollary}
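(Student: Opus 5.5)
The plan is to chain Theorem~\ref{thm:vocab_switch} with Proposition~\ref{prop:feature_bound}, reading $\epsilon_{\text{switch}}$ as the budget at which a switch to \emph{some} dangerous index becomes possible.

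\textbf{Step 1: reduce to the easiest target.} First I pick $v_{\min}\in\mathcal{V}_{\text{danger}}$ attaining $m_{\min}$; the set is finite because the vocabulary is finite. A switch to any dangerous index counts as success, so it suffices to switch from $v^*$ to $v_{\min}$. Applying Theorem~\ref{thm:vocab_switch} with $v_{\text{danger}}=v_{\min}$, a feature deviation with $\|\Delta\mathbf{Z}\|_F > m_{\min}/(2L_\phi)$, in the worst-case-aligned sense used in that proof, is sufficient for the switch.

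\textbf{Step 2: transfer the feature threshold to pixel space.} Here I use Proposition~\ref{prop:feature_bound}, reading $L_{\mathcal{E}}$ as the effective gain achievable along the attack direction. This is the Jacobian picture of Eq.~\eqref{eq:jacobian_approx}: the sign perturbation $\boldsymbol{\delta}^*=\epsilon\,\mathrm{sign}(\mathbf{J}_{\mathcal{E}}^\top\mathbf{v})$ realizes $\|\Delta\mathbf{Z}\|_F\approx L_{\mathcal{E}}\epsilon$ to first order. Setting $L_{\mathcal{E}}\epsilon > m_{\min}/(2L_\phi)$ gives the threshold $\epsilon = m_{\min}/(2L_\phi L_{\mathcal{E}})$. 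Hence the minimal switching budget satisfies $\epsilon_{\text{switch}} \le m_{\min}/(2L_\phi L_{\mathcal{E}})$, up to the strict-inequality boundary, which I handle by taking an infimum. The density remark then follows by monotonicity of this threshold in the margin: $m_{\min}\ll m_{\text{avg}}$ makes the budget small compared with targeting a typical dangerous candidate.

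\textbf{Main obstacle.} The direction of the inequalities is the hard part. Lipschitz constants give \emph{upper} bounds on how much features and logits can move. Theorem~\ref{thm:vocab_switch} likewise uses the best case for the attacker, in which both logits move by the full $L_\phi\|\Delta\mathbf{Z}\|_F$ in opposite directions. An upper bound on $\epsilon_{\text{switch}}$ therefore needs the reverse property, namely that the encoder and scoring head actually \emph{attain} their gains along some admissible direction. I would make this explicit as a tightness assumption. Concretely, I would posit a perturbation $\boldsymbol{\delta}$ with $\|\boldsymbol{\delta}\|_\infty\le\epsilon$ such that $\tilde\ell_{v_{\min}}-\tilde\ell_{v^*} \ge m(v^*,v_{\min}) \cdot(-1) + 2L_\phi L_{\mathcal{E}}\epsilon$, that is, a local lower bound on the directional derivative of the margin. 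The corollary then becomes a one-line consequence. Without this assumption, the statement is only a heuristic threshold rather than a guaranteed bound.
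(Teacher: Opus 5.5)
Your route is the one the paper intends. The paper gives no separate proof of this corollary; it is meant to follow by inserting $\|\Delta\mathbf{Z}\|_F\le L_{\mathcal{E}}\epsilon$ from Proposition~\ref{prop:feature_bound} into the threshold of Theorem~\ref{thm:vocab_switch} for the nearest dangerous index, with the Jacobian remark as informal justification that this gain is realized. Your diagnosis of the inequality direction is correct, and the flaw it exposes is in the paper's derivation, not in yours.

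Under the Lipschitz hypotheses alone, every $\boldsymbol{\delta}$ with $\|\boldsymbol{\delta}\|_\infty\le\epsilon$ satisfies $\tilde\ell_v-\tilde\ell_{v^*}\le -m(v^*,v)+2L_\phi L_{\mathcal{E}}\epsilon$ for all $v$. So a switch forces $\epsilon>m_{\min}/(2L_\phi L_{\mathcal{E}})$, and the rigorous conclusion is the reverse inequality $\epsilon_{\text{switch}}\ge m_{\min}/(2L_\phi L_{\mathcal{E}})$. Correspondingly, ``sufficient'' in Theorem~\ref{thm:vocab_switch} should read ``necessary'', and the ``worst case for the attacker'' in its proof is actually the attacker's best case.

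The stated upper bound is false in general. Take scores $\ell_v=c_v+g(\mathbf{Z})$ with a common non-constant $g$. Then $L_\phi=\mathrm{Lip}(g)>0$, yet no margin ever moves, so no budget whatsoever produces a switch. An attainability hypothesis like yours is therefore unavoidable, and with it the corollary is indeed one line. Your hypothesis, combined with the Lipschitz direction, pins $\epsilon_{\text{switch}}$ to exactly $m_{\min}/(2L_\phi L_{\mathcal{E}})$. That shows how strong it is: it asserts that both global Lipschitz constants are attained at this input, by both logits at once, along a single common direction.

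Three points to tighten.

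\emph{Step~2.} Do not present it as a proof step. The local $\ell_\infty$-to-Frobenius gain of $\mathbf{J}_{\mathcal{E}}$ can be far below the global constant $L_{\mathcal{E}}$. The vector $\mathrm{sign}(\mathbf{J}_{\mathcal{E}}^\top\mathbf{v})$ maximizes a projection, not $\|\mathbf{J}_{\mathcal{E}}\boldsymbol{\delta}\|_2$. And even a large $\|\Delta\mathbf{Z}\|_F$ says nothing about the two logits moving at rate $L_\phi$ in opposite directions. Let the margin-level hypothesis carry the argument directly.

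\emph{Form of the hypothesis.} Keep it as the finite-budget inequality you wrote, required for all budgets just above $m_{\min}/(2L_\phi L_{\mathcal{E}})$, or as a derivative bound holding uniformly along an admissible path. Your gloss ``a local lower bound on the directional derivative'' at $\boldsymbol{\delta}=\mathbf{0}$ would not suffice, because that threshold need not be small.

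\emph{Pairwise versus argmax.} Overtaking $v^*$ pairwise only dethrones $v^*$; some other, possibly safe, candidate may become the new maximizer. To conclude that the selected trajectory is dangerous, require a single $\boldsymbol{\delta}$ with $\tilde\ell_{v_{\min}}-\tilde\ell_v\ge \ell_{v_{\min}}-\ell_v+2L_\phi L_{\mathcal{E}}\epsilon$ for every $v\neq v_{\min}$. Since every clean gap $\ell_v-\ell_{v_{\min}}$ is at most $m_{\min}$, the same threshold then works. This gap is inherited from Theorem~\ref{thm:vocab_switch}, which also identifies a switch with pairwise overtaking.
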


\noindent Also, GTRS \cite{li2025generalized} employs architectural variants, Diffusion Policy (DP), augmented teacher-student (Aug), and dense retrieval (Dense), each with different decoder structures but sharing the same vulnerability to feature-level perturbations:

\begin{itemize}
    \item \textbf{GTRS-DP}\cite{li2025generalized} combines vocabulary retrieval with a diffusion-based trajectory head, inheriting \emph{both} the mode-switching vulnerability (Theorem~\ref{thm:vocab_switch}) and the diffusion amplification effect (Theorem~\ref{thm:diffusion_amplification}). The diffusion head refines the vocabulary-selected trajectory through denoising steps, during which the corrupted features continue to exert influence.
    
    \item \textbf{GTRS-Aug}\cite{li2025generalized} uses a teacher-student architecture where the teacher model processes the full temporal sequence of camera features. The adversarial perturbation at the current timestep propagates into the teacher's temporal attention mechanism, corrupting the temporal context that the student distills. This temporal propagation introduces an additional amplification factor proportional to the temporal sequence length.
    
    \item \textbf{GTRS-Dense}\cite{li2025generalized} performs dense trajectory retrieval with PDM scoring \cite{li2025hydra}, where the feature perturbation simultaneously corrupts both the retrieval logits and the downstream scoring mechanism, compounding the effect.
\end{itemize}

\subsection{Gradient Flow Through Discrete Selection}
\label{sec:softmax_relaxation}

A technical challenge specific to vocabulary-based planning models is that the scoring head's discrete selection~\eqref{eq:vocab_selection} is non-differentiable, potentially blocking gradient-based attacks. We address this through softmax relaxation and prove that it provides a faithful gradient surrogate.

\begin{proposition}[Softmax Surrogate Gradient Consistency]
\label{prop:softmax_consistency}
Define the softmax-relaxed trajectory as:
\begin{equation}
    \hat{\boldsymbol{\tau}}_\beta = \sum_{v=1}^{V} \text{softmax}(\beta \ell_v) \cdot \boldsymbol{\tau}^{(v)} = \sum_{v=1}^{V} \frac{e^{\beta \ell_v}}{\sum_{j=1}^{V} e^{\beta \ell_j}} \cdot \boldsymbol{\tau}^{(v)},
    \label{eq:softmax_relaxation}
\end{equation}
where $\beta > 0$ is a temperature parameter. Then:
\begin{enumerate}
    \item As $\beta \to \infty$, $\hat{\boldsymbol{\tau}}_\beta \to \boldsymbol{\tau}^{(v^*)}$ (consistency with discrete selection).
    \item The gradient $\nabla_{\mathbf{Z}} \hat{\boldsymbol{\tau}}_\beta$ exists and is non-zero for all finite $\beta$.
    \item The surrogate gradient $\nabla_{\mathbf{Z}} \hat{\boldsymbol{\tau}}_\beta$ is aligned with the true adversarial direction: perturbations that increase $\hat{\boldsymbol{\tau}}_\beta$ along dangerous directions also tend to switch the scoring head to dangerous vocabulary entries.
\end{enumerate}
\end{proposition}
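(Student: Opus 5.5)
The plan is to treat the three claims separately, since they have different characters. Claims~1 and~2 are elementary facts about the softmax map. Claim~3 is informal as stated, so I would first replace it with a precise first-order statement and then prove that version.

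\textbf{Claim~1.} I will assume the clean top-1 index $v^*$ in Eq.~\eqref{eq:vocab_selection} is unique. Let $m_{\min}' = \min_{v\neq v^*}(\ell_{v^*}-\ell_v) > 0$. Dividing numerator and denominator of Eq.~\eqref{eq:softmax_relaxation} by $e^{\beta \ell_{v^*}}$ gives
\[
1-\mathrm{softmax}(\beta\ell)_{v^*} \leq (V-1)e^{-\beta m_{\min}'}.
\]
Hence
\[
\|\hat{\boldsymbol{\tau}}_\beta-\boldsymbol{\tau}^{(v^*)}\| \leq (V-1)e^{-\beta m_{\min}'}\max_v\|\boldsymbol{\tau}^{(v)}-\boldsymbol{\tau}^{(v^*)}\|,
\]
which tends to $0$ as $\beta\to\infty$. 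If several indices tie for the maximum, the limit is instead their uniform average, and I will state this caveat explicitly.

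\textbf{Claim~2.} Write $p_v=\mathrm{softmax}(\beta\ell)_v$. Using $\partial p_v/\partial \ell_u = \beta p_v(\delta_{vu}-p_u)$, the chain rule gives
\[
\nabla_{\mathbf{Z}}\hat{\boldsymbol{\tau}}_\beta = \beta\sum_{u} p_u\,(\boldsymbol{\tau}^{(u)}-\hat{\boldsymbol{\tau}}_\beta)\otimes\nabla_{\mathbf{Z}}\ell_u .
\]
This exists whenever $\phi_\theta$ is differentiable in $\mathbf{Z}$, which holds for the heads considered. For $\beta<\infty$ every $p_u>0$, so the gradient is nonzero unless one of two degenerate cases occurs. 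The first is that all candidates coincide. The second is that the family of vectors $\{\nabla_{\mathbf{Z}}\ell_u\}$ satisfies the particular linear relation that makes the weighted sum vanish. I will therefore prove Claim~2 under an explicit nondegeneracy hypothesis: the candidates are not all equal, and the logit gradients are linearly independent. The latter is generic for a learned head, since $\mathbf{Z}$ has far higher dimension than there are competing candidates with non-negligible weight.

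\textbf{Claim~3.} I will formalize "dangerous direction" as a linear or linearized behavior objective $g(\boldsymbol{\tau})=\langle \mathbf{c},\boldsymbol{\tau}\rangle$, as produced by the $\mathcal{L}_{\text{aggr}}$ and $\mathcal{L}_{\text{bound}}$ terms locally. Set $g_u=g(\boldsymbol{\tau}^{(u)})$. By the formula above,
\[
\nabla_{\mathbf{Z}} g(\hat{\boldsymbol{\tau}}_\beta)=\beta\sum_u p_u\,(g_u-\bar g)\,\nabla_{\mathbf{Z}}\ell_u, \qquad \bar g=\sum_u p_u g_u .
\]
An ascent step $\Delta\mathbf{Z}=\eta\nabla_{\mathbf{Z}}g$ changes the logit margin by $\Delta(\ell_{v_{\text{danger}}}-\ell_{v^*})=\eta\langle\nabla\ell_{v_{\text{danger}}}-\nabla\ell_{v^*},\nabla g\rangle+O(\eta^2)$. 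The sum pushes up the logits of candidates whose $g_u$ exceeds the softmax-weighted mean $\bar g$ and pushes down the rest. Under a weak-coupling condition on the Gram matrix of the logit gradients, for example diagonal dominance, this inner product is positive when $g_{v_{\text{danger}}}>\bar g\geq g_{v^*}$. So the margin $m(v^*,v_{\text{danger}})$ of Theorem~\ref{thm:vocab_switch} shrinks at first order. Iterating the argument until the margin crosses zero connects this to the sufficient condition in Theorem~\ref{thm:vocab_switch} and Corollary~\ref{cor:surface_density}.

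The main obstacle is Claim~3, together with the tension between Claims~1 and~2. Because every term carries a factor $p_u$, the gradient decays like $\beta e^{-\beta m'}$ as $\beta\to\infty$. Alignment therefore holds only at a moderate, fixed $\beta$, and only under the Gram-matrix condition. I would present Claim~3 as a conditional, first-order statement rather than a general guarantee.
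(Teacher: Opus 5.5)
Your treatment of Claims~1 and~2 uses the same skeleton as the paper: the softmax limit, and the chain-rule identity $\partial\hat{\boldsymbol{\tau}}_\beta/\partial\mathbf{Z}=\beta\sum_v p_v(\boldsymbol{\tau}^{(v)}-\hat{\boldsymbol{\tau}}_\beta)\,\partial\ell_v/\partial\mathbf{Z}$, which is exactly Eq.~\eqref{eq:softmax_gradient}. Everything beyond that skeleton is your own and is more careful than the paper. The paper cites the softmax limit for (a) and implicitly assumes a unique maximizer. For (b) it establishes only \emph{existence}: non-zeroness is asserted but never argued, and it is false without a hypothesis like yours (e.g.\ identical candidates, or logits that do not depend on $\mathbf{Z}$). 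For (c) it offers only a reading of the weights $p_v(\boldsymbol{\tau}^{(v)}-\hat{\boldsymbol{\tau}}_\beta)$ as ``prioritizing nearby alternatives.'' You add four things: an exponential rate with the tie caveat, an explicit nondegeneracy hypothesis, a first-order computation of the change in $\ell_{v_{\text{danger}}}-\ell_{v^*}$ through the Gram matrix $G_{uw}=\langle\nabla_{\mathbf{Z}}\ell_u,\nabla_{\mathbf{Z}}\ell_w\rangle$, and the observation that the gradient is $O(\beta e^{-\beta m_{\min}'})$, a tension between (a) and (b) the paper never mentions. Together these turn the heuristic into a checkable conditional statement; the paper's route buys only brevity and intuition.

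The step that fails as written is your sample weak-coupling condition. Put $d=v_{\text{danger}}$ and $c_u=p_u(g_u-\bar g)$. Then $\sum_u c_u=0$, and the first-order change of $\ell_d-\ell_{v^*}$ is $\eta\beta\sum_u c_u(G_{du}-G_{v^*u})$. Diagonal dominance of $G$ makes the $u\in\{d,v^*\}$ contributions positive. It does not control the cross terms, whose weights $|c_u|$ can dwarf $c_d$, since $c_d$ carries the factor $p_d\le e^{-\beta m(v^*,d)}$.

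Here is a counterexample. Take four candidates $v^*,d,u_1,u_2$ with unit diagonal, $G_{v^*u_1}=G_{du_2}=0.4$, and all other off-diagonal entries zero. This $G$ is strictly diagonally dominant, hence the Gram matrix of linearly independent vectors, so your Claim~2 hypothesis holds. Set $c_{v^*}=0$, $c_d=\varepsilon$, $c_{u_1}=A$, $c_{u_2}=-(A+\varepsilon)$; this is realizable with $v^*$ the top-1 index and $g_d>\bar g=g_{v^*}$. The change of $\ell_d-\ell_{v^*}$ is then $\eta\beta(0.6\varepsilon-0.8A)<0$ whenever $A>\varepsilon$, so the margin $m(v^*,d)$ grows. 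What you need is a weighted condition, for example
\[
c_d\,(G_{dd}-G_{dv^*})+|c_{v^*}|\,(G_{v^*v^*}-G_{dv^*})>\sum_{u\neq d,v^*}|c_u|\,|G_{du}-G_{v^*u}|,
\]
with exact orthogonality of the logit gradients as the clean special case. Because of the factor $p_d$, this condition is most demanding for dangerous candidates far down the ranking, which matches your moderate-$\beta$ caveat.

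Two smaller points. First, in Claim~2, linear independence of all $V$ logit gradients is much stronger than needed and can be impossible (e.g.\ whenever $V>\dim\mathbf{Z}$). Since every $p_u>0$, counting only high-weight candidates does not justify it. The sharp hypothesis is that the range of the logit Jacobian $\partial\boldsymbol{\ell}/\partial\mathbf{Z}$ is not contained in $\{x\in\mathbb{R}^V:\sum_u p_u(\boldsymbol{\tau}^{(u)}-\hat{\boldsymbol{\tau}}_\beta)x_u=0\}$, which is a proper subspace as soon as the candidates are not all equal.

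Second, do not invoke Theorem~\ref{thm:vocab_switch} or Corollary~\ref{cor:surface_density} as a \emph{sufficient} condition for the switch. Their Lipschitz argument only bounds how far logits can move, so it yields the necessary condition $2L_\phi\|\Delta\mathbf{Z}\|_F>m(v^*,v_{\text{danger}})$; likewise, the corollary's bound on $\epsilon_{\text{switch}}$ should be a lower bound. The switch itself has to come from your own margin-decrease argument.
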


\begin{proof}
\textbf{(a)} follows from the well-known property that $\text{softmax}(\beta \boldsymbol{\ell}) \to \mathbf{e}_{v^*}$ as $\beta \to \infty$, where $\mathbf{e}_{v^*}$ is the one-hot vector at the scoring head's selected index. \textbf{(b)} The softmax function and its composition with linear combinations are smooth for finite $\beta$, hence differentiable. The gradient with respect to the feature representation decomposes as:
\begin{align}
    \frac{\partial \hat{\boldsymbol{\tau}}_\beta}{\partial \mathbf{Z}} &= \beta \sum_{v=1}^{V} p_v \left(\boldsymbol{\tau}^{(v)} - \hat{\boldsymbol{\tau}}_\beta\right) \frac{\partial \ell_v}{\partial \mathbf{Z}}, \label{eq:softmax_gradient}
\end{align}
where $p_v = \text{softmax}(\beta \ell_v)$. Each term is weighted by both the current selection probability $p_v$ and the trajectory deviation $(\boldsymbol{\tau}^{(v)} - \hat{\boldsymbol{\tau}}_\beta)$. Vocabulary entries that are both \emph{probable} (high $p_v$) and \emph{different from the current selection} (large $\|\boldsymbol{\tau}^{(v)} - \hat{\boldsymbol{\tau}}_\beta\|$) contribute most to the gradient. This naturally prioritizes mode-switching to nearby alternatives, precisely the entries with small decision margins that Theorem~\ref{thm:vocab_switch} identifies as vulnerable.
\end{proof}

\subsection{Unified Vulnerability Bound Across Decoders}
\label{sec:unified_bound}

We now combine the architecture-specific results into a single framework that describes the scoring-head attack surface for the decoder families we evaluate.

\begin{theorem}[Unified Vulnerability Bound]
\label{thm:unified}
Let $f_\theta = \mathcal{D}_\theta \circ \mathcal{E}_\theta$ be a generative trajectory planning model instantiating the chain of Eq.~\eqref{eq:chain} (main paper), with encoder Lipschitz constant $L_{\mathcal{E}}$ and decoder amplification factor $\mathcal{A}_{\mathcal{D}}$. Under a digital perturbation with budget $\epsilon$, the adversarial trajectory deviation satisfies:
\begin{equation}
    \|\boldsymbol{\tau}_{\text{adv}} - \boldsymbol{\tau}_{\text{clean}}\|_2 \leq \mathcal{A}_{\mathcal{D}} \cdot L_{\mathcal{E}} \cdot \epsilon,
    \label{eq:unified_bound}
\end{equation}
where the decoder amplification factor $\mathcal{A}_{\mathcal{D}}$ is decoder-family dependent:
\begin{equation}
    \mathcal{A}_{\mathcal{D}} = \begin{cases}
        L_\epsilon \displaystyle\sum_{k=1}^{K} \frac{1 - \alpha_k}{\sqrt{\alpha_k(1 - \bar{\alpha}_k)}} \prod_{j=1}^{k-1} A_j \\ 
        \hfill \text{(Diffusion-refined anchors, Thm.~\ref{thm:diffusion_amplification})}, \\[15pt]
        \displaystyle\frac{\max_{v \in \mathcal{V}_{\text{danger}}} \|\boldsymbol{\tau}^{(v)} - \boldsymbol{\tau}^{(v^*)}\|_2}{m_{\min} / (2L_\phi)} \\ 
        \hfill \text{(Vocabulary scoring, Thm.~\ref{thm:vocab_switch})}.
    \end{cases}
    \label{eq:amplification_factors}
\end{equation}
\end{theorem}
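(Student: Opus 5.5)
The plan is to prove \eqref{eq:unified_bound} separately for each branch of \eqref{eq:amplification_factors}. In both cases the template is the same: a decoder-level bound of the form $\|\boldsymbol{\tau}_{\text{adv}} - \boldsymbol{\tau}_{\text{clean}}\|_2 \le \mathcal{A}_{\mathcal{D}} \|\Delta\mathbf{Z}\|_F$ is chained with Proposition~\ref{prop:feature_bound}, which gives $\|\Delta\mathbf{Z}\|_F \le L_{\mathcal{E}}\epsilon$. The structure of $f_\theta = \mathcal{D}_\theta\circ\mathcal{E}_\theta$ means the encoder enters only through $\Delta\mathbf{Z}$. So all the work is in the decoder bound.

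\emph{Diffusion-refined anchors.} This case is essentially already done. I would invoke Theorem~\ref{thm:diffusion_amplification} with the same initialization $\boldsymbol{\tau}_K$ for the clean and adversarial runs. For DiffusionDrive this is the fixed anchor set, which is independent of $\mathbf{Z}$, so $\Delta_K=\mathbf{0}$ holds exactly. The theorem gives $\|\Delta_0\|_2 \le \|\Delta\mathbf{Z}\|_F\sum_k B_k\prod_{j<k}A_j$, and $\sum_k B_k\prod_{j<k}A_j$ is precisely the first case of $\mathcal{A}_{\mathcal{D}}$. Substituting Proposition~\ref{prop:feature_bound} then finishes this case. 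The one point I would add explicitly is how the final scoring-head selection over the refined anchors is handled. For this branch I would treat $\boldsymbol{\tau}_0$ as the output of the refined chain with the selected index held fixed. The alternative, in which the index itself changes, is the discrete mechanism, and it is covered by the second case.

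\emph{Vocabulary scoring.} Write $D \triangleq \|\boldsymbol{\tau}_{\text{adv}}-\boldsymbol{\tau}_{\text{clean}}\|_2$ and $\tilde v$ for the adversarially selected index. I would split into two subcases.

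If $\tilde v = v^*$, then $D=0$ and there is nothing to prove.

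If $\tilde v\neq v^*$, I need the \emph{converse} of Theorem~\ref{thm:vocab_switch}, namely a necessary condition for a switch rather than a sufficient one. Using \eqref{eq:logit_perturbation}, $\tilde\ell_{\tilde v}\ge\tilde\ell_{v^*}$ forces $m(v^*,\tilde v)\le 2L_\phi\|\Delta\mathbf{Z}\|_F\le 2L_\phi L_{\mathcal{E}}\epsilon$. Hence the ratio $2L_\phi L_{\mathcal{E}}\epsilon/m(v^*,\tilde v)$ is at least $1$, and so
\[
D \le \|\boldsymbol{\tau}^{(\tilde v)}-\boldsymbol{\tau}^{(v^*)}\|_2 \cdot \frac{2L_\phi L_{\mathcal{E}}\epsilon}{m(v^*,\tilde v)}.
\]
To pass to the stated constant, I bound the numerator by $\max_{v\in\mathcal{V}_{\text{danger}}}\|\boldsymbol{\tau}^{(v)}-\boldsymbol{\tau}^{(v^*)}\|_2$ and lower-bound the denominator by $m_{\min}$. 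The result is $D\le \mathcal{A}_{\mathcal{D}}L_{\mathcal{E}}\epsilon$.

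The main obstacle is this last step. It is valid only when $\tilde v\in\mathcal{V}_{\text{danger}}$, because $m_{\min}$ and the maximum range only over dangerous candidates. A switch to a nearby safe candidate could have a margin below $m_{\min}$, and then the inequality can fail. My first attempt would be to state the vocabulary bound for switches into $\mathcal{V}_{\text{danger}}$, which is the event the attack targets, so that both the maximum and $m_{\min}$ are attained over the relevant set. As a fallback, I would redefine both quantities over all $v\neq v^*$. The argument above then goes through verbatim, at the cost of a possibly smaller $m_{\min}$. I would also remark that this branch is a worst-case, nonsmooth bound: it is loose when $\epsilon$ is small and the switch is not triggered. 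This is expected given the discontinuity of $\mathrm{top\text{-}1}$.
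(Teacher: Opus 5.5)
Your proof is correct. On the diffusion branch it is exactly the paper's argument. The paper writes out no separate proof of this theorem, and the proof of Theorem~\ref{thm:diffusion_amplification} already ends by substituting Proposition~\ref{prop:feature_bound}; $\sum_k B_k\prod_{j<k}A_j$ is precisely the first case of $\mathcal{A}_{\mathcal{D}}$.

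On the vocabulary branch your route genuinely differs, and it is the one that actually works. The paper reads $\mathcal{A}_{\mathcal{D}}$ off Theorem~\ref{thm:vocab_switch} and Corollary~\ref{cor:surface_density} as ``largest jump divided by the feature budget $m_{\min}/(2L_\phi)$ needed to trigger a switch.'' Theorem~\ref{thm:vocab_switch}, however, is phrased as a \emph{sufficient} condition for a switch, and a sufficient condition cannot yield an upper bound on $\|\boldsymbol{\tau}_{\text{adv}}-\boldsymbol{\tau}_{\text{clean}}\|_2$. Its Lipschitz computation in fact only establishes the necessary direction. You use that necessary direction, $\tilde v\neq v^*\Rightarrow m(v^*,\tilde v)\le 2L_\phi\|\Delta\mathbf{Z}\|_F$, together with the no-switch case $D=0$, and this turns the paper's heuristic ``gain'' ratio into an actual inequality. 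You implicitly need $m(v^*,\tilde v)>0$ for your ratio. If it is $0$ with $\tilde v\in\mathcal{V}_{\text{danger}}$, then $m_{\min}=0$ and the bound is vacuous.

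The restriction you flag is a real defect of the statement, not of your method. Suppose the dangerous candidates lie close to $\boldsymbol{\tau}^{(v^*)}$ and have large margins, while a distant non-dangerous candidate has margin at most $2L_\phi L_{\mathcal{E}}\epsilon$. A switch to that candidate, which the Lipschitz hypothesis permits, makes the left side exceed the stated right side. So either the hypothesis $\tilde v\in\mathcal{V}_{\text{danger}}$ or your redefinition over all $v\neq v^*$ is required. The paper's presentation buys an interpretation; yours buys a provable statement.

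One correction to an aside: in DiffusionDrive, a change of the selected anchor is not ``covered by the second case.'' There the output is a selected \emph{refined} anchor. After a switch, the deviation splits by the triangle inequality into a refinement term plus a switching term, so neither branch alone bounds it. Also, the refined candidates depend on $\mathbf{Z}$, so the fixed-vocabulary setting of Theorem~\ref{thm:vocab_switch} does not apply directly. The first branch, via Theorem~\ref{thm:diffusion_amplification}, only concerns the reverse-chain output $\boldsymbol{\tau}_0$. State the fixed-index restriction as a hypothesis of that branch rather than deferring the switch to the other one.
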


\noindent \textbf{Reading the bound.} For both decoder families covered above, $\mathcal{A}_{\mathcal{D}}$ is larger than the per-layer Lipschitz constant of the encoder alone: the iterative refinement of an anchor through the diffusion head, and the discrete switching of the scoring head across a dense vocabulary, both compose with $L_{\mathcal{E}}$ rather than being filtered out by it. The bound therefore says that the chain in Eq.~\eqref{eq:chain} \emph{does not attenuate} pixel-space perturbations on their way to the motion command, consistent with our empirical observation that an $\epsilon{=}8/255$ perturbation is enough to flip the selected trajectory in every evaluated scenario. 

\subsection{Generative Decoder Vulnerability Exploitation}
\label{sec:loss_alignment}

We now prove that the specific components of the \textsc{Derail} loss ($\mathcal{L}_{\text{Derail}}$) (Eq. 8 main paper) are \emph{optimally aligned} with the vulnerability structure of generative planning models, explaining the attack's effectiveness at inducing collisions.

\begin{proposition}[Gradient Alignment with Dangerous Modes]
\label{prop:gradient_alignment}
Consider a trajectory space partitioned into \emph{safe} and \emph{dangerous} regions, $\mathcal{S}$ and $\mathcal{D}$, where $\mathcal{D}$ contains trajectories leading to collisions, road departures, or comfort violations. The \textsc{Derail} loss gradient $\nabla_{\mathbf{Z}} \mathcal{L}_{\text{Derail}}$ satisfies:
\begin{equation}
    \left\langle \nabla_{\mathbf{Z}} \mathcal{L}_{\text{Derail}},\; \nabla_{\mathbf{Z}} d(\boldsymbol{\tau}, \partial\mathcal{D}) \right\rangle > 0,
    \label{eq:gradient_alignment}
\end{equation}
where $d(\boldsymbol{\tau}, \partial\mathcal{D})$ is the distance from the current trajectory to the boundary of the dangerous region, and $\langle \cdot, \cdot \rangle$ denotes the inner product. That is, maximizing $\mathcal{L}_{\text{Derail}}$ consistently moves the trajectory toward dangerous regions.
\end{proposition}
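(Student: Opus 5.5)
The plan is to reduce the feature-space inner product in \eqref{eq:gradient_alignment} to a trajectory-space inner product via the chain rule through the softmax relaxation of Proposition~\ref{prop:softmax_consistency}. Both $\mathcal{L}_{\text{Derail}}$ and $d(\boldsymbol{\tau},\partial\mathcal{D})$ depend on $\mathbf{Z}$ only through the relaxed trajectory $\hat{\boldsymbol{\tau}}_\beta$. Writing $\mathbf{J}\triangleq\partial\hat{\boldsymbol{\tau}}_\beta/\partial\mathbf{Z}$ (given explicitly by \eqref{eq:softmax_gradient}), we get
\begin{equation*}
\left\langle \nabla_{\mathbf{Z}} \mathcal{L}_{\text{Derail}},\, \nabla_{\mathbf{Z}} d \right\rangle = \left(\nabla_{\boldsymbol{\tau}}\mathcal{L}_{\text{Derail}}\right)^{\top}\mathbf{J}\mathbf{J}^{\top}\,\nabla_{\boldsymbol{\tau}} d .
\end{equation*}
The proof then splits into two parts. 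The first is a purely geometric statement in trajectory space: $\nabla_{\boldsymbol{\tau}}\mathcal{L}_{\text{Derail}}$ has positive inner product with $\nabla_{\boldsymbol{\tau}} d$. The second is a transfer statement: the positive semidefinite map $\mathbf{J}\mathbf{J}^{\top}$ preserves the sign of that inner product.

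Before starting, I will fix the orientation of $d$ so the statement is consistent with ``moving toward $\mathcal{D}$''. Concretely, I take $d$ to be the signed penetration depth, increasing as $\boldsymbol{\tau}$ approaches and enters $\mathcal{D}$. With the unsigned distance, the inequality would have to be read with $-\nabla d$.

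For the trajectory-space step, I will treat $\mathcal{D}$ as the union $\mathcal{D}_{\text{coll}}\cup\mathcal{D}_{\text{road}}\cup\mathcal{D}_{\text{comf}}$ and match each loss term to one piece.
\begin{itemize}
\item For \eqref{eq:aggression_loss}, $\nabla_{x_t}\mathcal{L}_{\text{aggr}}=1/T>0$. For a lead agent ahead in the ego lane, the collision depth increases with $x_t$ and with reduced $|y_t|$, i.e.\ with reduced lateral avoidance.
\item For \eqref{eq:boundary_loss}, the gradient is $\operatorname{sign}(y_t)(1/T+\mu\,\mathbb{1}[t=\arg\max])$. This is exactly the outward normal to the lane or drivable-area boundary, along which the off-road depth increases.
\item For \eqref{eq:brake_loss}, decreasing $\Delta_t$ and increasing $(\Delta_t-\Delta_{t-1})^2$ both increase the deceleration and jerk magnitudes. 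The comfort-violation depth is monotone in these quantities.
\end{itemize}
Summing the three terms with nonnegative weights $w_{\text{aggr}},w_{\text{bound}},w_{\text{brake}}$ gives a positive trajectory-space inner product with the gradient of the active component of $d$, i.e.\ the nearest violated face of $\mathcal{D}$. This requires cross terms to be dominated. The one genuine conflict is between the $-\gamma|y_t|$ term and $\mathcal{L}_{\text{bound}}$, which I will handle by a condition $w_{\text{bound}}(1/T)>w_{\text{aggr}}\gamma$ when the road face is active, and the reverse when the collision face is active.

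The main obstacle is the transfer step. $\mathbf{J}\mathbf{J}^{\top}\succeq 0$ only guarantees $u^{\top}\mathbf{J}\mathbf{J}^{\top}u\ge 0$, not positivity of $u^{\top}\mathbf{J}\mathbf{J}^{\top}w$ for two distinct vectors. I will handle it with a conditioning argument. Let $\kappa$ be the condition number of $\mathbf{J}\mathbf{J}^{\top}$ restricted to the span of the candidate differences $\boldsymbol{\tau}^{(v)}-\hat{\boldsymbol{\tau}}_\beta$, which is the range of $\mathbf{J}$ by \eqref{eq:softmax_gradient}. A standard Wielandt/Kantorovich-type bound shows that the angle between $\mathbf{J}\mathbf{J}^{\top}u$ and $u$ is at most $\arccos\big(2\sqrt{\kappa}/(1+\kappa)\big)$. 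So the sign is preserved whenever the trajectory-space angle between $\nabla_{\boldsymbol{\tau}}\mathcal{L}_{\text{Derail}}$ and $\nabla_{\boldsymbol{\tau}}d$ is below $\pi/2$ minus this quantity.

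I will therefore state the result under this explicit cosine-margin versus conditioning assumption, together with $\nabla_{\boldsymbol{\tau}}d$ lying in the reachable span, i.e.\ the candidate set contains dangerous directions, as in Corollary~\ref{cor:surface_density}. I will also note that it is a local, first-order statement at the current iterate rather than a global guarantee. If this assumption proves too strong, my fallback is to replace ``$>0$'' by ``$\ge 0$, with equality only when $\nabla_{\boldsymbol{\tau}}d\perp\operatorname{range}(\mathbf{J})$''. That weaker claim follows directly in the collinear case via Cauchy--Schwarz.
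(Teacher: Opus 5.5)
Your route differs from the paper's. The paper's proof is a qualitative, term-by-term check in trajectory space. It argues that $\mathcal{L}_{\text{aggr}}$ raises $x$ and suppresses $|y|$, that $\mathcal{L}_{\text{bound}}$ raises $|y|$ and is orthogonal to $\mathcal{L}_{\text{aggr}}$, and that $\mathcal{L}_{\text{brake}}$ induces jerk. The chain rule $(\partial\boldsymbol{\tau}/\partial\mathbf{Z})^\top\nabla_{\boldsymbol{\tau}}$ is then simply said to ``project this dangerous direction back into feature space''. The paper never fixes the orientation of $d$, never weighs cross terms, and never addresses that $\mathbf{J}\mathbf{J}^\top\succeq 0$ does not preserve the sign of $u^\top\mathbf{J}\mathbf{J}^\top w$ for $u\neq w$.

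The following parts of your proposal are sound:
\begin{itemize}
\item the factorization through $\mathbf{J}\mathbf{J}^\top$ and the signed-depth convention for $d$;
\item evaluating $d$ at $\hat{\boldsymbol{\tau}}_\beta$, which is necessary because for the hard argmax trajectory $\nabla_{\mathbf{Z}}d=0$ almost everywhere;
\item the Kantorovich bound $\cos\angle(\mathbf{J}\mathbf{J}^\top u,u)\ge 2\sqrt{\kappa}/(1+\kappa)$ on $\mathrm{range}(\mathbf{J})$.
\end{itemize}
If $\nabla_{\boldsymbol{\tau}}d\in\mathrm{range}(\mathbf{J})$, projecting $\nabla_{\boldsymbol{\tau}}\mathcal{L}_{\text{Derail}}$ onto that range only shrinks its angle to $\nabla_{\boldsymbol{\tau}}d$, so your angle condition on the unprojected vectors suffices. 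Your $-\gamma|y_t|$ versus $\mathcal{L}_{\text{bound}}$ observation also contradicts the paper's orthogonality claim. You gain a local, conditional statement with checkable hypotheses. You give up the blanket form, which fails anyway, for example whenever $\nabla_{\boldsymbol{\tau}}d\perp\mathrm{range}(\mathbf{J})$. Note that for a sharply peaked softmax over thousands of candidates, $\kappa$ is typically huge, so the required cosine margin is close to $1$.

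However, your trajectory-space step has a real hole: the lateral conflict is not ``the one genuine conflict''. $\mathcal{L}_{\text{brake}}$ and $\mathcal{L}_{\text{aggr}}$ are directly opposed on the longitudinal axis. For near-straight forward motion, the interior contributions of $-\tfrac{1}{T-1}\sum_t\Delta_t$ cancel and its gradient concentrates on the endpoints. It pulls $x_T$ backward with weight about $w_{\text{brake}}/(T-1)$, against the late waypoints where collision depth with a slower lead agent grows. The paper's own ablation reports this tension.

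So on the collision face, your stated condition does not give positivity, since it only balances $w_{\text{aggr}}\gamma$ against $w_{\text{bound}}/T$. You need an additional longitudinal domination inequality: roughly, $\tfrac{w_{\text{aggr}}}{T}\sum_t\partial_{x_t}d$ must exceed the braking, jerk and net lateral contributions.

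Also, requiring $w_{\text{bound}}/T>w_{\text{aggr}}\gamma$ on the road face and the reverse on the collision face cannot both hold for one fixed weight vector, and the paper uses a single setting. As written, you therefore cover only one face. The reverse inequality is not actually needed. Replace both by a single per-face condition that the aligned term dominates the sum of all cross terms; that can hold on several faces at once.

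Two smaller fixes:
\begin{itemize}
\item Calling the $\mathrm{sign}(y_t)$ direction of $\nabla_{\boldsymbol{\tau}}\mathcal{L}_{\text{bound}}$ ``exactly the outward normal'' needs a geometric hypothesis: the binding boundary must be locally parallel to the ego $x$-axis and on the side of the current offset.
\item $\mathrm{range}(\mathbf{J})$ is in general only contained in the span of $\boldsymbol{\tau}^{(v)}-\hat{\boldsymbol{\tau}}_\beta$. For DiffusionDrive the refined anchors themselves depend on $\mathbf{Z}$, so the identification fails outright. Define $\kappa$ directly on $\mathrm{range}(\mathbf{J})$.
\end{itemize}
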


\begin{proof}
We verify alignment for each loss component:

\noindent 1. \emph{Forward aggression $\mathcal{L}_{\text{aggr}}$.} The gradient $\nabla_{\boldsymbol{\tau}} \mathcal{L}_{\text{aggr}}$ has positive components in the longitudinal ($x$) direction and negative in the lateral ($y$) direction (Eq. 9 main paper). This pushes trajectories toward high-speed, straight-line driving that ignores obstacle avoidance, which all together translates into a collision-inducing behavior. By the chain rule, $\nabla_{\mathbf{Z}} \mathcal{L}_{\text{aggr}} = (\partial \boldsymbol{\tau}/\partial \mathbf{Z})^\top \nabla_{\boldsymbol{\tau}} \mathcal{L}_{\text{aggr}}$ projects this dangerous direction back into feature space.

\noindent 2. \emph{Boundary push $\mathcal{L}_{\text{bound}}$.} The gradient  $\nabla_{\boldsymbol{\tau}} \mathcal{L}_{\text{bound}}$ pushes toward maximal lateral excursion (Eq. 10 main paper), targeting road departure and wrong-lane driving. This is orthogonal to $\mathcal{L}_{\text{aggr}}$ in trajectory space, ensuring the combined attack covers both longitudinal (rear-end collision) and lateral (side-swipe, road departure) failure modes.

\noindent 3. \emph{Sudden braking $\mathcal{L}_{\text{brake}}$.} The gradient  $\nabla_{\boldsymbol{\tau}} \mathcal{L}_{\text{brake}}$ induces abrupt velocity changes (Eq. 11 main paper), maximizing the jerk metric. This targets a third failure mode: erratic speed profiles that cause following vehicles to \textsc{Derail} or create unsafe situations.

\noindent The combined gradients spans a three-dimensional subspace of \emph{dangerous trajectory perturbations} (forward aggression, lateral deviation, speed discontinuity), covering the primary collision-inducing modes identified by the PDM safety metric~\cite{dauner2024navsim}. By weighting these components (Eq. 8 main paper), the attacker can prioritize specific failure modes depending on the scenario.
\end{proof}

\begin{proposition}[Multimodal Exploitation]
\label{prop:multimodal}
For generative planning models maintaining a multimodal trajectory distribution, the \textsc{Derail} loss gradient is biased toward \emph{mode switches} to dangerous trajectories rather than smooth deformation of the current trajectory \cite{chen2024diffusion}. Formally, let $\boldsymbol{\tau}_1, \ldots, \boldsymbol{\tau}_M$ be the trajectory modes maintained by the decoder (e.g., denoising from different noise samples, or top-$M$ vocabulary entries). The \textsc{Derail} gradient satisfies:
\begin{equation}
    \nabla_{\mathbf{Z}} \mathcal{L}_{\text{Derail}} = \sum_{m=1}^{M} w_m \cdot \nabla_{\mathbf{Z}} \mathcal{L}_{\text{Derail}}(\boldsymbol{\tau}_m),
    \label{eq:multimodal_gradient}
\end{equation}
where the weights $w_m$ depend on the selection mechanism (softmax scores or denoising likelihoods). The gradient naturally concentrates on modes near the safe-dangerous boundary, because these modes exhibit the largest gradient magnitudes (high $\|\nabla_{\boldsymbol{\tau}} \mathcal{L}_{\text{Derail}}\|$ due to proximity to dangerous regions) and the highest sensitivity to feature perturbations (small decision margins).
\end{proposition}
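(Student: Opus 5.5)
The plan is to make the identity \eqref{eq:multimodal_gradient} exact by working with the softmax-relaxed surrogate of Proposition~\ref{prop:softmax_consistency}, and then to derive the concentration claim from the explicit form of the weights. First, write the relaxed output over the $M$ maintained modes as $\hat{\boldsymbol{\tau}}_\beta=\sum_m p_m\boldsymbol{\tau}_m$ with $p_m=\mathrm{softmax}(\beta\ell_m)$. For vocabulary decoders the $\boldsymbol{\tau}_m$ are fixed. For diffusion-refined anchors they are the $K$-step outputs, which depend on $\mathbf{Z}$ through the bound of Theorem~\ref{thm:diffusion_amplification}.

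Differentiate $\mathcal{L}_{\text{Derail}}(\hat{\boldsymbol{\tau}}_\beta)$ with the chain rule. The vocabulary-side term equals $(\partial\hat{\boldsymbol{\tau}}_\beta/\partial\mathbf{Z})^\top\nabla_{\boldsymbol{\tau}}\mathcal{L}_{\text{Derail}}$. By \eqref{eq:softmax_gradient} it decomposes as
\[
\sum_m \beta p_m\,\big\langle\nabla_{\boldsymbol{\tau}}\mathcal{L}_{\text{Derail}},\,\boldsymbol{\tau}_m-\hat{\boldsymbol{\tau}}_\beta\big\rangle\,\nabla_{\mathbf{Z}}\ell_m .
\]
To reach the per-mode form $\sum_m w_m\nabla_{\mathbf{Z}}\mathcal{L}_{\text{Derail}}(\boldsymbol{\tau}_m)$ literally, I would either evaluate the loss mode-wise as the expectation $\sum_m p_m\mathcal{L}_{\text{Derail}}(\boldsymbol{\tau}_m)$, or linearize $\mathcal{L}_{\text{Derail}}$ around $\hat{\boldsymbol{\tau}}_\beta$. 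The expectation form is exact and gives $w_m=p_m$, plus a score-function term $\beta p_m(\mathcal{L}_m-\bar{\mathcal{L}})\nabla_{\mathbf{Z}}\ell_m$. Grouping these two pieces gives \eqref{eq:multimodal_gradient}, with the weights determined by the selection mechanism as claimed. The loss terms \eqref{eq:aggression_loss}--\eqref{eq:brake_loss} are piecewise linear or quadratic, so they are differentiable almost everywhere, and subgradients suffice at the $|y|$ and $\max$ kinks.

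For the concentration claim, I would bound the size of each mode's contribution. That contribution factors into three pieces: $p_m$, the loss advantage $|\mathcal{L}_m-\bar{\mathcal{L}}|$ (or the projection $\langle\nabla\mathcal{L},\boldsymbol{\tau}_m-\hat{\boldsymbol{\tau}}_\beta\rangle$), and $\|\nabla_{\mathbf{Z}}\ell_m\|\le L_\phi$. Modes far below the top score have $p_m\approx e^{-\beta m(v^*,m)}$ and are exponentially suppressed. Among the surviving modes, i.e. those with small margin as in Theorem~\ref{thm:vocab_switch} and Corollary~\ref{cor:surface_density}, the weight is largest when $\boldsymbol{\tau}_m$ differs substantially from $\hat{\boldsymbol{\tau}}_\beta$ along $\nabla\mathcal{L}_{\text{Derail}}$. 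By Proposition~\ref{prop:gradient_alignment}, that direction points into $\mathcal{D}$, so these are exactly the small-margin modes near $\partial\mathcal{D}$.

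The bias toward switching rather than smooth deformation comes from comparing magnitudes. The switching term scales as $\beta\,p_m\,\|\boldsymbol{\tau}_m-\hat{\boldsymbol{\tau}}_\beta\|\,L_\phi$. The deformation term $\sum_m p_m(\partial\boldsymbol{\tau}_m/\partial\mathbf{Z})^\top\nabla\mathcal{L}$ is zero for fixed vocabularies. For diffusion decoders it is bounded by the amplification factor of Theorem~\ref{thm:diffusion_amplification} with small $K$. For large $\beta$ and near-tied top modes, the switching term therefore dominates.

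The main obstacle is the informal wording ``largest gradient magnitudes due to proximity to dangerous regions''. The Derail losses are essentially linear in the waypoints, so $\|\nabla_{\boldsymbol{\tau}}\mathcal{L}_{\text{Derail}}\|$ does not itself grow near $\partial\mathcal{D}$. I would therefore relocate that claim to the margin factor $p_m$ and the displacement factor, stating explicitly that concentration is a margin-weighted statement and holds under a near-tie assumption.
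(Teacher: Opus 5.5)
The paper gives no proof of this proposition (the appendix passes straight to the convergence subsection), so there is no argument to compare against. Judged on its own, your proposal breaks at the step meant to deliver the formal identity. Writing $\mathcal L_m=\mathcal L_{\text{Derail}}(\boldsymbol\tau_m)$, you correctly get $\nabla_{\mathbf Z}\sum_m p_m\mathcal L_m=\sum_m p_m\nabla_{\mathbf Z}\mathcal L_m+\beta\sum_m p_m(\mathcal L_m-\bar{\mathcal L})\nabla_{\mathbf Z}\ell_m$. But ``grouping'' these two pieces does not give \eqref{eq:multimodal_gradient}. The second piece points along $\nabla_{\mathbf Z}\ell_m$, not along $\nabla_{\mathbf Z}\mathcal L_m$, so it cannot be absorbed into scalar weights $w_m$. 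For a fixed vocabulary the failure is stark. Each $\boldsymbol\tau_m$ is constant in $\mathbf Z$, so the right-hand side of \eqref{eq:multimodal_gradient} vanishes for every choice of weights, while the left-hand side is the generically nonzero switching term.

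Moreover, $\sum_m p_m\mathcal L_m$ is not the objective being attacked: \textsc{Derail} is evaluated on $\hat{\boldsymbol\tau}_\beta$. The $|y_t|$, $\max_t$ and $\|\cdot\|_2$ terms make the two gradients genuinely different. For example, take two vocabulary modes identical except for $y_t\equiv\pm a$, with probabilities $0.7$ and $0.3$. They give a zero score-function term for $\mathcal L_{\text{bound}}$ but a nonzero relaxed gradient. So linearization is only first order, and it fails exactly when modes straddle a kink.

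The clean fix is to keep the exact chain rule you already started from \eqref{eq:softmax_gradient}. With $\mathbf g=\nabla_{\boldsymbol\tau}\mathcal L_{\text{Derail}}(\hat{\boldsymbol\tau}_\beta)$, one has $\nabla_{\mathbf Z}\mathcal L_{\text{Derail}}=\sum_m p_m\mathbf g_m$, where $\mathbf g_m=(\partial\boldsymbol\tau_m/\partial\mathbf Z)^\top\mathbf g+\beta\langle\mathbf g,\boldsymbol\tau_m-\hat{\boldsymbol\tau}_\beta\rangle\nabla_{\mathbf Z}\ell_m$. You must then state explicitly that this per-mode contribution, not the gradient of any per-mode loss, is what ``$\nabla_{\mathbf Z}\mathcal L_{\text{Derail}}(\boldsymbol\tau_m)$'' means. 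Under the literal reading the displayed identity is false.

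The concentration part has two further holes. First, your dominance argument compares an upper bound on the switching term with an upper bound on the deformation term, which says nothing about which is larger; you need a lower bound. The coefficients $c_m=\beta p_m\langle\mathbf g,\boldsymbol\tau_m-\hat{\boldsymbol\tau}_\beta\rangle$ sum to zero, so the switching term equals $\sum_m c_m\nabla_{\mathbf Z}(\ell_m-\ell_{v^\star})$. It depends only on margin gradients, and it vanishes when all logits move together, even if each $\|\nabla_{\mathbf Z}\ell_m\|$ is of order $L_\phi$. You therefore need a nondegeneracy assumption on $\nabla_{\mathbf Z}(\ell_m-\ell_{v^\star})$ for the near-tied pair, plus a quantitative near-tie condition. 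For any fixed positive gap, $\beta p_m\le\beta e^{-\beta(\ell_{v^\star}-\ell_m)}\to0$ as $\beta\to\infty$, so large $\beta$ alone kills the switching term rather than making it dominate.

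Second, the weighting $p_m\langle\mathbf g,\boldsymbol\tau_m-\hat{\boldsymbol\tau}_\beta\rangle$ favours near-tied modes displaced far along $\mathbf g$, which is not the same as modes near $\partial\mathcal D$. The \textsc{Derail} terms depend only on the waypoints, never on obstacles or drivable-area geometry, so nothing in the gradient references $\partial\mathcal D$. Proposition~\ref{prop:gradient_alignment}, which the paper supports only qualitatively, cannot bridge that gap. Your observation that $\|\nabla_{\boldsymbol\tau}\mathcal L_{\text{Derail}}\|$ does not grow near $\partial\mathcal D$ is right. Carry it through: state the conclusion as concentration on small-margin modes displaced along $\mathbf g$, and add an explicit assumption if you want to identify those modes with dangerous ones.
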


\subsection{Convergence for Universal Perturbations}
\label{sec:universal_convergence}

For the universal perturbation setting (for both digital and physical attacks, see Sec \ref{sec:add_results}), where a single $\boldsymbol{\delta}$ must be effective across $N$ scenes, we provide convergence guarantees for the Adam-based optimization (Eq. 14 min paper).

\begin{theorem}[Universal Perturbation Convergence]
\label{thm:convergence}
Consider the empirical attack objective over $N$ scenes:
\begin{equation}
    \hat{\mathcal{L}}(\boldsymbol{\delta}) = \frac{1}{N} \sum_{n=1}^{N} \mathcal{L}_{\text{Derail}}(\boldsymbol{\delta};\, \mathbf{I}^{(n)}, \mathbf{s}^{(n)}).
    \label{eq:empirical_obj}
\end{equation}
Under the following assumptions:
\begin{enumerate}
    \item $\hat{\mathcal{L}}$ is $L$-smooth: $\|\nabla \hat{\mathcal{L}}(\boldsymbol{\delta}) - \nabla \hat{\mathcal{L}}(\boldsymbol{\delta}')\|_2 \leq L \|\boldsymbol{\delta} - \boldsymbol{\delta}'\|_2$.
    \item The stochastic gradient (using mini-batch of size $B$) has bounded variance: $\mathbb{E}\|\hat{\mathbf{g}}_B - \nabla \hat{\mathcal{L}}\|^2 \leq \sigma^2/B$.
    \item The perturbation is projected onto $\mathcal{P} = \{\boldsymbol{\delta} : \|\boldsymbol{\delta}\|_\infty \leq \epsilon\}$ after each step.
\end{enumerate}
Then Adam with learning rate $\eta$, gradient accumulation over $B$ samples, and cosine annealing over $E$ epochs achieves:
\begin{equation}
    \min_{e \in [E]} \|\nabla_{\boldsymbol{\delta}} \hat{\mathcal{L}}(\boldsymbol{\delta}^{(e)})\|_2^2 \leq \mathcal{O}\!\left(\frac{\hat{\mathcal{L}}^* - \hat{\mathcal{L}}(\boldsymbol{\delta}^{(0)})}{\eta E} + \frac{\eta L \sigma^2}{B}\right),
    \label{eq:convergence_rate}
\end{equation}
where $\hat{\mathcal{L}}^* = \max_{\boldsymbol{\delta} \in \mathcal{P}} \hat{\mathcal{L}}(\boldsymbol{\delta})$. Setting $\eta = \mathcal{O}(1/\sqrt{E})$ and $B = \mathcal{O}(\sqrt{E})$ yields $\epsilon$-stationarity in $E = \mathcal{O}(1/\epsilon^4)$ epochs.
\end{theorem}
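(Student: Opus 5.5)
We would prove Theorem~\ref{thm:convergence} with the standard nonconvex stochastic-gradient argument built on the ascent lemma, treating maximization of $\hat{\mathcal{L}}$ as minimization of $-\hat{\mathcal{L}}$. The argument has three parts: reduce Adam to a preconditioned stochastic gradient step, telescope a one-step ascent inequality over the $E$ epochs, and then handle the projection onto $\mathcal{P}$.

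\emph{Step 1: Adam as preconditioned SGD.} Write the Adam step as $\boldsymbol{\delta}^{(e+1)} = \Pi_{\mathcal{P}}[\boldsymbol{\delta}^{(e)} + \eta_e \mathbf{D}_e \hat{\mathbf{m}}_e]$. Here $\mathbf{D}_e = \mathrm{diag}(1/(\sqrt{\hat{\mathbf{v}}_e}+\xi))$ and $\eta_e$ is the cosine-annealed rate, with $\eta_e \in [\eta_{\min},\eta]$. We would add the usual Adam-analysis assumption that the stochastic gradients are bounded coordinatewise by $G$, which gives $\frac{1}{G+\xi}\mathbf{I} \preceq \mathbf{D}_e \preceq \frac{1}{\xi}\mathbf{I}$. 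To avoid the momentum bias terms, we would either take $\beta_1=0$ (RMSProp-type) or absorb momentum by the standard auxiliary-sequence trick. Both change only the constants hidden in $\mathcal{O}(\cdot)$.

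\emph{Step 2: one-step ascent inequality.} Ignoring the projection for now, $L$-smoothness (assumption 1) gives
\[
\hat{\mathcal{L}}(\boldsymbol{\delta}^{(e+1)}) \geq \hat{\mathcal{L}}(\boldsymbol{\delta}^{(e)}) + \eta_e\langle \nabla\hat{\mathcal{L}}, \mathbf{D}_e\hat{\mathbf{g}}_B\rangle - \tfrac{L\eta_e^2}{2}\|\mathbf{D}_e\hat{\mathbf{g}}_B\|^2 .
\]
We take conditional expectations with $\mathbf{D}_e$ frozen; using $\mathbf{D}_{e-1}$, as in AMSGrad-style proofs, gives the required independence. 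The inner product is then at least $c_1\eta_e\|\nabla\hat{\mathcal{L}}\|^2$ by the lower bound on $\mathbf{D}_e$. The quadratic term is at most $c_2 L\eta_e^2(\|\nabla\hat{\mathcal{L}}\|^2 + \sigma^2/B)$ by assumption 2. For $\eta \le c_1/(2c_2L)$ this becomes
\[
\tfrac{c_1}{2}\eta_e\,\mathbb{E}\|\nabla\hat{\mathcal{L}}(\boldsymbol{\delta}^{(e)})\|^2 \le \mathbb{E}[\hat{\mathcal{L}}(\boldsymbol{\delta}^{(e+1)})-\hat{\mathcal{L}}(\boldsymbol{\delta}^{(e)})] + c_2L\eta_e^2\sigma^2/B .
\]
Summing over $e$, telescoping with $\hat{\mathcal{L}}(\boldsymbol{\delta}^{(E)}) \le \hat{\mathcal{L}}^*$, and bounding the minimum by the $\eta_e$-weighted average gives a bound of the form $(\hat{\mathcal{L}}^*-\hat{\mathcal{L}}(\boldsymbol{\delta}^{(0)}))/(\eta E) + \eta L\sigma^2/B$. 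The cosine schedule only costs a constant factor, since $\sum_e \eta_e = \Theta(\eta E)$. The final claim then follows by substitution: with $\eta=\Theta(1/\sqrt{E})$ and $B=\Theta(\sqrt{E})$ both terms are $\mathcal{O}(1/\sqrt{E})$. Requiring this to be at most $\epsilon^2$ gives $E=\mathcal{O}(1/\epsilon^4)$.

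\emph{Main obstacle: the projection in assumption 3.} Because $\mathcal{P}$ is a box, $\|\nabla\hat{\mathcal{L}}\|$ need not vanish at a constrained stationary point, so the bound as literally stated on the raw gradient cannot hold in general. We would first try replacing $\nabla\hat{\mathcal{L}}$ by the gradient mapping
\[
\mathcal{G}_\eta(\boldsymbol{\delta}) = \tfrac{1}{\eta}\bigl(\Pi_{\mathcal{P}}[\boldsymbol{\delta}+\eta\nabla\hat{\mathcal{L}}]-\boldsymbol{\delta}\bigr),
\]
following the projected-SGD analysis of Ghadimi--Lan. Nonexpansiveness of $\Pi_{\mathcal{P}}$ then yields the same ascent inequality with $\|\mathcal{G}_\eta\|^2$ in place of $\|\nabla\hat{\mathcal{L}}\|^2$. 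Combining this with the preconditioner $\mathbf{D}_e$ is delicate: the projection should be taken in the $\mathbf{D}_e^{-1}$-weighted norm, which for a diagonal $\mathbf{D}_e$ and a box still coincides with coordinatewise clipping. The plan therefore proves the result for the gradient mapping, and notes that it reduces to the stated $\|\nabla\hat{\mathcal{L}}\|^2$ form whenever the iterates stay in the interior of $\mathcal{P}$.
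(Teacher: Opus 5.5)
The paper states this theorem without proof: the appendix goes straight from the statement to the ``Practical Implication'' paragraph. There is therefore no argument to compare with, and your proposal has to stand on its own. Your ``main obstacle'' is not a technicality; it shows the theorem is false as written. Take $\hat{\mathcal{L}}(\boldsymbol{\delta})=\langle\mathbf{c},\boldsymbol{\delta}\rangle$ with $\mathbf{c}\neq\mathbf{0}$ and full-batch gradients ($\sigma=0$). It is $L$-smooth for every $L$ and $\nabla\hat{\mathcal{L}}\equiv\mathbf{c}$, so the left-hand side equals $\|\mathbf{c}\|_2^2$ for every iterate of any method. The right-hand side, by contrast, is $\mathcal{O}(\epsilon\|\mathbf{c}\|_1/(\eta E))=\mathcal{O}(1/\sqrt{E})$ under $\eta=\Theta(1/\sqrt{E})$. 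The same failure occurs whenever the constrained maximizer lies on $\partial\mathcal{P}$. That is the normal situation for $\ell_\infty$ attacks, whose perturbations saturate at $\pm\epsilon$, so your remark that the result ``reduces to the stated form'' for interior iterates covers essentially none of the relevant cases. The right correction is to replace $\|\nabla\hat{\mathcal{L}}\|_2^2$ by the squared gradient mapping and to read the bound in expectation, since the left-hand side is random. Your Step~2 is the standard argument for the unconstrained case with a past-measurable preconditioner.

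What remains incomplete is the claim that Adam and the projection only change hidden constants. As sketched, each of them changes the variance term.

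(i) In Adam, $\hat{\mathbf{v}}_e$ already contains the current $\hat{\mathbf{g}}_B$, so $\mathbf{D}_e$ is not past-measurable. Swapping in $\mathbf{D}_{e-1}$ leaves the mismatch term $\eta_e\langle\nabla\hat{\mathcal{L}},(\mathbf{D}_e-\mathbf{D}_{e-1})\hat{\mathbf{g}}_B\rangle$. AMSGrad proofs telescope this term using the monotonicity of $\hat{\mathbf{v}}_e$ that the max step enforces, and plain Adam lacks that monotonicity. The usual fix for Adam (Zaheer et al.) needs $1-\beta_2=\mathcal{O}(\xi^2/G^2)$. It also leaves a term of order $\sqrt{1-\beta_2}\,G\,\xi^{-2}\sigma^2/B$ in the final bound, with no factor of $\eta$.

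(ii) In the projected stochastic step, let $\tilde{\mathcal{G}}_e$ be the mapping built from $\hat{\mathbf{g}}_B$. Nonexpansiveness bounds the cross term $\eta_e\,\mathbb{E}\langle\hat{\mathbf{g}}_B-\nabla\hat{\mathcal{L}},\tilde{\mathcal{G}}_e-\mathcal{G}_\eta(\boldsymbol{\delta}^{(e)})\rangle$ only by $\eta_e\sigma^2/B$. This is first order in $\eta_e$, so after dividing by $\sum_e\eta_e$ you get $\sigma^2/B$ rather than $\eta L\sigma^2/B$. This is the Ghadimi--Lan--Zhang bound, and it is why their projected method needs growing mini-batches. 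An $\eta$-scaled variance term requires a different potential, e.g.\ the Moreau envelope of $-\hat{\mathcal{L}}$ restricted to $\mathcal{P}$, as in Davis--Drusvyatskiy.

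(iii) The auxiliary-sequence trick for $\beta_1>0$ relies on an unprojected linear recursion and does not survive $\Pi_{\mathcal{P}}$, so you should commit to $\beta_1=0$.

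None of this breaks the headline claim for the gradient-mapping version. With $B=\Theta(\sqrt{E})$, a $\sigma^2/B$ term is still $\mathcal{O}(1/\sqrt{E})$, so $E=\mathcal{O}(1/\epsilon^4)$ survives. However, the intermediate display with $\eta L\sigma^2/B$ is not what your argument delivers. In addition, (i) and (ii) must be carried out in a single estimate for the preconditioned projected step, and the proposal leaves that open.
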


\noindent \textbf{Practical Implication.} The convergence rate~\eqref{eq:convergence_rate} reveals the trade-off governing the attack: larger gradient accumulation $B$ reduces variance but requires more computation per step, while more epochs $E$ improve convergence but increase wall-clock time. The cosine annealing schedule improves practical convergence by using large learning rates early (for broad exploration of the $\ell_\infty$ ball) and small rates later (for fine-tuning near the optimum).

\subsection{Summary and Defense Implications}
\label{sec:fundamental_vulnerability}

The bounds developed in this appendix support the architectural argument of the main paper. Concretely, for the planners we evaluate (a diffusion-refined anchor planner and three vocabulary-scoring planners), the trajectory deviation under a pixel-space perturbation of budget $\epsilon$ is bounded by an encoder Lipschitz term composed with a decoder amplification term that is determined by how the decoder uses the conditioning features. In both decoder families, that amplification term does not shrink below unity, so the chain in Eq.~\eqref{eq:chain} (main paper) propagates input-space perturbations to the selected trajectory with no analytic damping. Combined with the empirical observation that the candidate-margin to dangerous trajectories is small in practice, this gives a coherent explanation for the consistent attack success of \textsc{Derail} across all four evaluated planners.

\begin{remark}[Defense Implications]
\label{rem:defense}
The bound in Eq.~\eqref{eq:unified_bound} also suggests natural defense directions for this specific attack surface. From the bound, the trajectory deviation can be reduced by: (i) \emph{constraining the encoder Lipschitz constant} $L_{\mathcal{E}}$ via spectral normalization or Lipschitz-constrained training; (ii) \emph{reducing the decoder amplification factor} $\mathcal{A}_{\mathcal{D}}$ by limiting the number of refinement steps or introducing perturbation-aware denoising; (iii) \emph{increasing decision margins} for vocabulary-based models through margin-maximizing training objectives; or (iv) \emph{adding a non-learned downstream safety filter} (geometric/kinematic check) between the scoring head and the executed trajectory, which removes the scoring head as the sole barrier between perception and motion. Each of (i)--(iii) involves a trade-off with generation quality, since the very properties that make the scoring head expressive ($K$ refinement steps, dense vocabularies) are also what allow it to be flipped; (iv) is largely orthogonal to generation quality but requires a reliable geometric oracle at inference time.
\end{remark}

%% file: appendix/details.tex
\section{Experimental Setup Details}
\label{sec:experimental_setup}

This section describes the experimental configuration for evaluating adversarial attacks on generative trajectory planning models, covering the evaluation benchmark, target models, attack implementations, and scoring protocol.

\subsection{Evaluation Benchmark}
\label{subsec:benchmark}

We conduct all experiments on the NAVSIM~\cite{dauner2024navsim} framework using the \texttt{navtest} split of the OpenScene~\cite{yang2024vidar, openscene2023, sima2023_occnet} dataset. The test set comprises 136 driving logs spanning diverse urban environments across four U.S.\ cities (Boston, Las Vegas, Pittsburgh) and Singapore, collected from the nuPlan~\cite{caesar2021nuplan} platform. Each driving log provides a sequence of multi-camera frames (front, front-left, front-right), LiDAR point clouds, ego status features, and 3D bounding box annotations for surrounding agents.

\noindent \textbf{Scene Coverage.}
The 136 logs are drawn from 16 recording sessions (unique date-vehicle combinations) and encompass a range of traffic conditions, highway merges, dense intersections, lane changes, and residential streets. Together they provide approximately 3{,}000--5{,}000 evaluable scene tokens after metric cache matching. For computational cost, we split the log set into two halves in an alphabetic order. We use the first half of test-set logs in our main experiments.

\noindent \textbf{Sensor Configuration.}
Each scene token provides three monocular RGB camera images from the last frame:
\begin{itemize}
    \item \texttt{CAM\_L0} (front-left): $900 \times 1600$ pixels,
    \item \texttt{CAM\_F0} (front-center): $900 \times 1600$ pixels,
    \item \texttt{CAM\_R0} (front-right): $900 \times 1600$ pixels.
\end{itemize}
Following the standard preprocessing pipeline shared across all target models, the three images are cropped (removing 28 pixels top/bottom, and 416 pixels left/right for the side cameras) and horizontally stitched as $[\texttt{CAM\_L0} \mid \texttt{CAM\_F0} \mid \texttt{CAM\_R0}]$, then resized to the model's expected resolution ($256 \times 1024$ for DiffusionDrive and GTRS).

\subsection{Target Generative Planning Models}
\label{subsec:target_models}

We evaluate attacks against \textbf{four} generative trajectory planning models, all of which instantiate the scoring-head inference chain of Eq.~\eqref{eq:chain} in the main paper:

\noindent \textbf{DiffusionDrive~\cite{liao2025diffusiondrive}.}
A diffusion-based planning model that frames trajectory prediction as iterative denoising (DDIM~\cite{songdenoising}) over a fixed set of $20$ KMeans anchor trajectories. It uses a ResNet-based image encoder with BEV feature extraction, followed by a truncated diffusion head that refines anchor proposals and a single classification head $\mathtt{poses\_cls}$ that scores them. At inference time it uses $2$-step DDIM with a fixed seed for deterministic output, and the final trajectory is selected by the scoring head over the classification logits with no post-hoc geometric filter.

\noindent \textbf{GTRS~\cite{li2025generalized} (three variants).}
A vocabulary-based planning model framework with three decoding variants, all of which select trajectories via the scoring head over a precomputed trajectory vocabulary with no post-hoc geometric or kinematic filter:
\begin{itemize}
    \item \textit{GTRS-Aug} (teacher-student): Uses a multi-head safety scorer (imitation, no-at-fault-collision, drivable-area, time-to-collision, ego-progress, lane-keeping) over an $8{,}192$-trajectory vocabulary, with a teacher-student knowledge distillation framework.
    \item \textit{GTRS-Dense}: A dense retrieval variant that applies the same multi-head safety scorer over $16{,}384$ vocabulary candidates without pruning.
    \item \textit{GTRS-DP} (diffusion policy): A single imitation head $\mathtt{imi}$ over a $4{,}096$-trajectory vocabulary, augmented with a diffusion policy (DP) head that generates $K$ trajectory proposals and scores them, combining vocabulary-based and diffusion-based decoding.
\end{itemize}

\noindent A critical implementation detail for the GTRS family: since \texttt{argmax} is non-differentiable, we compute a \emph{soft trajectory} for gradient-based attacks:
\begin{equation}
    \hat{\boldsymbol{\tau}}_{\text{soft}} = \text{softmax}(\mathbf{s})^\top \, \mathbf{V}, \quad \mathbf{s} \in \mathbb{R}^{|\mathcal{V}|}, \; \mathbf{V} \in \mathbb{R}^{|\mathcal{V}| \times N \times 3},
    \label{eq:soft_trajectory}
\end{equation}
where $\mathbf{s}$ denotes the IMI logits and $\mathbf{V}$ the trajectory vocabulary. This maintains gradient flow from the loss through the scoring logits back to the camera input, whereas the hard \texttt{argmax} trajectory has zero gradients everywhere.

\subsection{Digital Pixel-Level Perturbation Attacks}
\label{subsec:pixel_attacks}

We implement four pixel-level adversarial attack methods, each targeting a single camera (default: \texttt{CAM\_F0}) with perturbations constrained to an $\ell_\infty$ ball. All attacks operate on raw RGB images before the stitching and resizing pipeline, ensuring perturbations are applied in the original image space.

\subsubsection{Attack Protocol}
\label{subsubsec:shared_pixel_protocol}

All pixel-level attacks share the following configuration:
\begin{itemize}
    \item \textbf{Perturbation budget:} $\epsilon = 8/255 \approx 0.0314$ in $[0,1]$-normalized pixel space.
    \item \textbf{Step size:} $\alpha = 2/255 \approx 0.0078$, satisfying $\alpha = \epsilon / 4$.
    \item \textbf{PGD iterations:} $T = 20$ for DiffusionDrive and GTRS.
    \item \textbf{Camera targeted:} By default \texttt{CAM\_F0} (front center). Experiments with \texttt{CAM\_L0} and \texttt{CAM\_R0} are also conducted (See the ablation in \ref{sec:add_results}).
    \item \textbf{Gradient computation:} End-to-end through the full model pipeline; the perturbation $\boldsymbol{\delta}$ is the only learnable parameter.
    \item \textbf{Projection:} After each PGD step, $\boldsymbol{\delta}$ is clamped to $[-\epsilon, \epsilon]$ and the perturbed image is clamped to $[0, 1]$ to maintain pixel validity.
\end{itemize}

\noindent Concretely, at each PGD iteration $t$, the perturbation update rule is:
\begin{equation}
    \boldsymbol{\delta}^{(t+1)} = \Pi_{\mathcal{B}_\infty(\epsilon)}\!\Big[\boldsymbol{\delta}^{(t)} + \alpha \cdot \text{sign}\!\big(\nabla_{\boldsymbol{\delta}} \mathcal{L}(\boldsymbol{x} + \boldsymbol{\delta}^{(t)}; \boldsymbol{\theta})\big)\Big],
    \label{eq:pgd_update}
\end{equation}
where $\Pi_{\mathcal{B}_\infty(\epsilon)}$ projects onto the $\ell_\infty$ ball of radius $\epsilon$ and additionally enforces $\boldsymbol{x} + \boldsymbol{\delta} \in [0,1]^d$.

\subsubsection{Baseline Attack (Training Loss Maximization)}
\label{subsubsec:baseline_attack}

The baseline attack follows the standard adversarial attack paradigm: maximize the model's own training loss. The model is set to \textit{evaluation mode} and the training loss is computed against ground-truth labels:
\begin{equation}
    \mathcal{L}_{\text{baseline}} = \mathcal{L}_{\text{train}}(f_{\boldsymbol{\theta}}(\boldsymbol{x} + \boldsymbol{\delta}), \, \boldsymbol{y}),
\end{equation}
where $\mathcal{L}_{\text{train}}$ is the weighted combination of the model's training objectives (trajectory regression, BEV segmentation, agent detection). The perturbation is initialized at $\boldsymbol{\delta}^{(0)} = \mathbf{0}$. This attack requires access to the ground-truth labels $\boldsymbol{y}$, including future trajectories, BEV semantic maps, and agent bounding boxes.

\subsubsection{DP-Attacker (Training Mode Loss Maximization)}
\label{subsubsec:dp_attack}

The DP-Attacker~\cite{chen2024diffusion} differs from the baseline in two key aspects:
\begin{enumerate}
    \item \textbf{Training mode:} The model runs in \texttt{train()} mode rather than \texttt{eval()} mode. For diffusion-based models (DiffusionDrive, GTRS-DP), this means the forward pass computes the actual \emph{denoising loss at random diffusion timesteps}, the full multi-step diffusion training objective, rather than the deterministic 2-step DDIM inference trajectory.
    \item \textbf{Random initialization:} The perturbation $\boldsymbol{\delta}^{(0)}$ is sampled uniformly from $[-\epsilon, \epsilon]^d$ rather than initialized at zero. This provides better exploration of the perturbation space.
\end{enumerate}

\noindent The loss is:
\begin{equation}
    \mathcal{L}_{\text{DP}} = \mathcal{L}_{\text{train}}^{\text{train-mode}}(f_{\boldsymbol{\theta}}(\boldsymbol{x} + \boldsymbol{\delta}), \, \boldsymbol{y}),
\end{equation}
where the superscript denotes the training-mode forward pass. For DiffusionDrive, this includes trajectory regression + mode classification sub-losses; the attack supports selecting specific sub-losses (e.g., \texttt{total}, \texttt{diffusion}, \texttt{trajectory}) via configuration.

\subsubsection{Encoder Attack (Feature Divergence)}
\label{subsubsec:encoder_attack}

The encoder attack targets the vision backbone rather than the planning output, maximizing the feature-space divergence between clean and adversarial representations:
\begin{equation}
\begin{aligned}
    \mathcal{L}_{\text{encoder}} ={} & w_{\ell_2} \cdot \| \phi(\boldsymbol{x}) - \phi(\boldsymbol{x} + \boldsymbol{\delta}) \|_2 \\
    & - w_{\cos} \cdot \frac{\phi(\boldsymbol{x})^\top \phi(\boldsymbol{x} + \boldsymbol{\delta})}{\|\phi(\boldsymbol{x})\| \cdot \|\phi(\boldsymbol{x} + \boldsymbol{\delta})\|},
    \label{eq:encoder_loss}
\end{aligned}
\end{equation}
where $\phi(\cdot)$ denotes the output of the image encoder (e.g., ResNet backbone features for DiffusionDrive). The clean features $\phi(\boldsymbol{x})$ are computed once before the PGD loop and frozen. Default weights are $w_{\ell_2} = 1.0$, $w_{\cos} = 1.0$. This attack is \emph{self-supervised}: it requires no ground-truth labels.

\subsubsection{Derail Attack (GT-Free Adversarial Loss)}
\label{subsubsec:Derail_attack}

Our proposed \textsc{Derail} attack uses a composite, \textit{ground-truth-free} adversarial loss designed to directly maximize dangerous driving behaviors from the model's own predictions, without requiring access to GT labels. The loss is composed of multiple safety-relevant components:
\begin{equation}
    \mathcal{L}_{\text{Derail}} = \sum_{k} w_k \cdot \ell_k(\hat{\boldsymbol{\tau}}),
    \label{eq:Derail_loss}
\end{equation}
where $\hat{\boldsymbol{\tau}}$ denotes the predicted trajectory and $\{(\ell_k, w_k)\}$ are the individual loss components and their weights. The key components are:

\begin{itemize}
    \item \textbf{Forward aggression}: Accelerates aggressively toward obstacles.
    \item \textbf{Boundary push}: Pushes trajectory toward drivable area boundaries.
    \item \textbf{Sudden braking}: Encourages abrupt velocity drops.
\end{itemize}

\noindent All loss components are computed solely from model predictions and are designed so that PGD \emph{ascent} increases dangerous behavior. The model runs in \textit{evaluation mode} and the perturbation is initialized at zero.

\subsection{Patch-Based Attacks}
\label{subsec:patch_attacks}

In addition to pixel-level perturbations, we evaluate physically-realizable adversarial patches that are trained globally across scenes and placed on detected vehicles via object-aware projection.

\subsubsection{Adversarial Patch Formulation}
\label{subsubsec:patch_formulation}

The adversarial patch $\boldsymbol{p} \in [0, 1]^{3 \times H_p \times W_p}$ is a learnable RGB tensor of base size $H_p = W_p = 256$ pixels that is optimized using the Adam optimizer~\cite{kingma2014adam}. Unlike pixel-level attacks, patches are not constrained to an $\ell_\infty$ ball, any valid RGB values are allowed, but they are clamped to $[0, 1]$ after each optimizer step to maintain pixel validity. The patch is initialized with uniform random values $p_{c,i,j} \sim \mathcal{U}(0, 1)$ and optimized to \emph{minimize} a negated adversarial loss (equivalently, maximize the Derail loss):
\begin{equation}
    \boldsymbol{p}^* = \arg\min_{\boldsymbol{p} \in [0,1]^d} \; \frac{1}{|\mathcal{D}|} \sum_{(\boldsymbol{x}, \boldsymbol{a}) \in \mathcal{D}} -\mathcal{L}_{\text{Derail}}\!\big(f_{\boldsymbol{\theta}}(\mathcal{A}(\boldsymbol{x}, \boldsymbol{p}, \boldsymbol{a})\big),
    \label{eq:patch_optimization}
\end{equation}
where $\mathcal{A}(\boldsymbol{x}, \boldsymbol{p}, \boldsymbol{a})$ is the patch application operator that stitches $\boldsymbol{p}$ onto image $\boldsymbol{x}$ at the position determined by annotations $\boldsymbol{a}$, and $\mathcal{D}$ is the set of scene tokens within a given driving log.

\subsubsection{Object-Aware Placement}
\label{subsubsec:object_aware_placement}

Rather than placing patches at fixed image locations, we employ \textit{object-aware placement} using 3D bounding box annotations projected into camera coordinates. The placement procedure operates as follows:

\begin{enumerate}
    \item \textbf{3D-to-2D projection:} For each annotated object in the scene, transform its 3D bounding box center and 8 corners from the LiDAR frame to the camera frame using the known extrinsic parameters (sensor-to-LiDAR rotation and translation), then project to image coordinates using the camera intrinsic matrix.
    \item \textbf{Object filtering:} Retain only objects satisfying:
    \begin{itemize}
        \item Label is \texttt{vehicle} (exclude pedestrians, cyclists, etc.).
        \item Dimensions are car-like ($3.5$--$6.5$m length, $1.5$--$2.5$m width, $1.2$--$2.5$m height) or bus/truck-like ($8$--$16$m length).
        \item 3D distance from ego is in $[d_{\min}, d_{\max}]$ (default: $[5, 10]$m).
        \item Projected 2D bounding box has minimum dimension $\geq 50$ pixels.
        \item Object is in front of the ego vehicle ($x > 0$ in LiDAR frame) and within lateral range ($|y| \leq 15$m).
    \end{itemize}
    \item \textbf{Target selection:} Among valid objects, select the \emph{closest} vehicle to the ego.
    \item \textbf{Dynamic sizing:} The patch is resized to cover a fraction $r_{\text{bbox}}$ (default: $0.5$) of the target vehicle's projected 2D bounding box:
    \begin{equation}
        (H_{\text{patch}}, W_{\text{patch}}) = \big(\lfloor r_{\text{bbox}} \cdot h_{\text{bbox}} \rfloor, \; \lfloor r_{\text{bbox}} \cdot w_{\text{bbox}} \rfloor\big),
    \end{equation}
    where $(h_{\text{bbox}}, w_{\text{bbox}})$ are the projected 2D bounding box dimensions. The base $256 \times 256$ patch tensor is resized via bilinear interpolation to match the target size, preserving gradient flow.
    \item \textbf{Application:} The resized patch is differentiably composited onto the camera image using alpha blending:
    \begin{equation}
        \boldsymbol{x}'_{c,i,j} = \begin{cases}
            \boldsymbol{p}'_{c, i-y_1, j-x_1} & \text{if } (i,j) \in \mathcal{R}_{\text{patch}}, \\
            \boldsymbol{x}_{c,i,j} & \text{otherwise},
        \end{cases}
    \end{equation}
    where $\mathcal{R}_{\text{patch}}$ is the patch region centered on the target vehicle's projected center and $\boldsymbol{p}'$ is the resized patch tensor.
\end{enumerate}

\noindent \textbf{Scene Filtering.}
Before patch training, scenes are filtered to retain only those with at least one valid vehicle visible in the targeted camera within the specified distance range. Scenes that do not pass the object-proximity filter (no visible vehicle in $[d_{\min}, d_{\max}]$ with sufficient pixel area) are excluded from both training and evaluation. This filtering ensures that every training and evaluation sample has a meaningful patch placement target.

\subsubsection{Patch Training Protocol}
\label{subsubsec:patch_training}

Patch optimization is performed per driving log, training \textit{one global patch} across all valid scene tokens within the log. We also evaluate patches trained on multiple logs in Sec. \ref{sec:add_results}. The training protocol is:

\begin{itemize}
    \item \textbf{Optimizer:} Adam with learning rate $\alpha = 0.01$--$0.1$ (model-dependent)
    \item \textbf{Epochs:} 10--20 full passes over all scene tokens in the log.
    \item \textbf{Gradient accumulation:} Gradients are accumulated over all scene tokens in the log before each Adam step (i.e., one update per epoch), providing a full-batch gradient estimate. This is configurable.
    \item \textbf{Token shuffling:} Scene ordering is randomized each epoch for better generalization.
    \item \textbf{Deterministic forward pass:} A fixed random seed is set before each model forward pass to ensure deterministic denoising (important for diffusion-based models with stochastic sampling).
    \item \textbf{Checkpoint selection:} The patch achieving the highest mean attack loss across the epoch is saved as the best checkpoint.
\end{itemize}

\subsection{Evaluation Metrics}
\label{subsec:metrics}

We evaluate all attacks using the PDM Score (Planning Decision-Making Score)~\cite{dauner2024navsim}, which simulates the effect of planned trajectories in a closed-loop evaluation framework. For each scene token, the predicted trajectory is converted from the ego reference frame to global coordinates, simulated through an LQR-based PDM simulator with a bicycle kinematic model, and scored against pre-computed metric caches containing ground-truth observations, centerlines, route lane IDs, and drivable area polygons. The PDM Score comprises the following sub-metrics:

\begin{itemize}
    \item \textbf{No At-Fault Collision (NC):} Binary indicator: 1 if the ego avoids causing any collision, 0 otherwise. This is the primary safety metric.
    \item \textbf{Drivable Area Compliance (DAC):} Binary indicator: 1 if the ego stays within the drivable area throughout the planning horizon.
    \item \textbf{Ego Progress (EP):} A weighted score measuring how much progress the ego makes along the planned route relative to the expert trajectory.
    \item \textbf{Time-to-Collision (TTC):} A weighted metric penalizing scenarios where the ego enters an unsafe time-to-collision window with other agents.
    \item \textbf{Comfort (C):} A weighted score penalizing uncomfortable maneuvers (excessive jerk, lateral acceleration, etc.).
    \item \textbf{Driving Direction Compliance (DDC):} A weighted score penalizing wrong-way driving or travel against the designated traffic direction.
    \item \textbf{Overall Score (PDMS):} The composite score combining all sub-metrics:
    \begin{equation}
    \begin{aligned}
        \text{PDMS} ={} & \text{NC} \times \text{DAC} \times \big(w_{\text{EP}} \cdot \text{EP} + w_{\text{TTC}} \cdot \text{TTC} \\
        & + w_{\text{C}} \cdot \text{C} + w_{\text{DDC}} \cdot \text{DDC}\big),
    \end{aligned}
    \end{equation}
    where NC and DAC act as binary multipliers (any collision or off-road violation zeroes the score), and the remaining metrics contribute with learnable weights.
\end{itemize}

\noindent For attack effectiveness, we report the \textit{collision rate} ($1 - \text{NC}$), \textit{off-road rate} ($1 - \text{DAC}$), and \textit{drop in PDMS} ($\Delta\text{PDMS} = \text{PDMS}_{\text{clean}} - \text{PDMS}_{\text{attack}}$) as primary attack success indicators. We also report the Attack Success Rate (ASR): The attack is considered successful if the attacked PDMS score is lower than the baseline score and ASR\% is the percentage of driving scenarios where this condition holds.

\noindent \textbf{Per-Log Evaluation.}
Both pixel-level and patch attacks are executed and evaluated independently for each of the first-half test-set driving logs. For pixel-level attacks, a separate perturbation $\boldsymbol{\delta}_i$ is optimized for each scene token. For patch attacks, a single global patch is trained per log and then evaluated on all valid scenes within that log. Final results are aggregated (averaged) across all logs.

\subsection{Implementation Details}
\label{subsec:implementation_details}

\noindent \textbf{Hardware.}
All experiments are conducted on a single NVIDIA GPU. Pixel-level attacks (which optimize per-sample) require approximately 2--5 minutes per driving log. Patch attacks (which optimize globally across all tokens in a log) require 15--20 minutes per log depending on the number of valid scene tokens and training epochs.

\noindent \textbf{Model Inference.}
All target models run in evaluation mode with frozen weights ($\nabla_{\boldsymbol{\theta}} = 0$) during attacks, except for the DP-Attacker which uses training mode. For DiffusionDrive, we use the 2-step DDIM inference with a fixed seed; for GTRS-DP, we reduce the number of proposals from 100 to 4 and denoising steps from the full schedule to 5 during PGD to avoid out-of-memory errors.

\noindent \textbf{Gradient Flow.}
Gradient computation flows end-to-end from the loss through the model and the camera preprocessing pipeline (crop, stitch, resize) to the raw pixel perturbation. For patch attacks, gradients flow through the bilinear interpolation (patch resizing) and differentiable stitching operations, enabling optimization of the base patch tensor regardless of the per-scene dynamic resizing.

%% file: appendix/additional_results.tex
\section{Additional Analysis and Ablation Studies}
\label{sec:add_results}

\subsection{Attacking Different Camera Views}

We analyze the impact of camera view (Left, Front, Right) on \textsc{Derail}'s digital (pixel-level) attack effectiveness. Table~\ref{tab:attacked_camera} reveals a consistent and pronounced hierarchy in attack effectiveness across the three camera positions. Attacking the front camera (\texttt{Cam\_Front}) yields the most severe degradation for every model, with score drops ranging from 46.62\% (GTRS-Dense) to 73.77\% (GTRS-DP), compared to 7.27--51.20\% for the side cameras. This disparity is expected: the front camera captures the ego vehicle's primary driving corridor, and its feature representations carry disproportionate weight in trajectory prediction. Notably, all models reach 100\% ASR under front-camera attack, whereas side-camera attacks occasionally fall short (e.g., GTRS-DP \texttt{Cam\_Right} at 80\%, GTRS-AUG at 90\% for both side cameras). The safety-critical metrics reinforce this trend, collision rates under \texttt{Cam\_Front} attack exceed those of side cameras by 15--30 percentage points for GTRS variants, and off-road violations surge from near-zero to 42--52\% for GTRS-DP and GTRS-AUG, indicating that front-camera perturbations disrupt both longitudinal and lateral planning.
\input{tables/attacked_camera}
\noindent Comparing left (\texttt{Cam\_Left}) and right (\texttt{Cam\_Right}) cameras, the differences are markedly smaller and model-dependent. For DiffusionDrive, both side cameras produce nearly identical degradation (score drops of 51.20\% vs.\ 50.27\%), suggesting symmetric reliance on lateral views. GTRS-DP exhibits the largest left--right asymmetry, with \texttt{Cam\_Left} causing a 19.21\% score drop versus only 7.27\% for \texttt{Cam\_Right}, likely reflecting route-specific left-turn maneuvers in the evaluation logs where left-camera features become more safety-relevant. GTRS-Dense shows similarly balanced side-camera vulnerability. These findings suggest that while adversarial robustness efforts should prioritize the front camera, where a single perturbed input can reduce driving scores by nearly half, side cameras remain non-negligible attack surfaces whose relative importance varies with the driving context.

\subsection{Online Per-Frame vs.\ Offline Attack Analysis}

We analyze \textsc{Derail}'s effectiveness under Online vs.\ Offline attack settings. Table~\ref{tab:universal_perturbation} presents the results of the universal (frame-agnostic) attack, in which a single perturbation is optimized offline (using the same \textsc{Derail} strategy) and applied identically to every frame, as opposed to the online per-frame attack (Table~\ref{tab:attacked_camera}, where perturbations are crafted independently for each input. Despite the more constrained threat model (i.e., no access to per-frame features at inference time), the universal attack still achieves 100\% ASR on two of four models (DiffusionDrive, GTRS-DP) and 90\% on the remaining two (GTRS-AUG, GTRS-Dense). GTRS-DP proves the most vulnerable under both settings, with the universal perturbation even slightly surpassing the online attack in score drop (75.60\% vs.\ 73.77\%) and off-road rate (68.87\% vs.\ 51.71\%). This counter-intuitive result suggests that GTRS-DP's feature space admits a highly transferable adversarial direction that generalizes across frames more effectively than individually optimized perturbations, possibly due to the model's limited feature diversity under truncated diffusion-based trajectory decoding.
\input{tables/universal_perturbation}
For the remaining models, the online attack consistently outperforms the universal variant, though the gap varies considerably. DiffusionDrive exhibits a moderate degradation gap (score drop of 57.84\% online vs.\ 41.88\% universal), with the universal attack still inducing substantial collision (34.74\%) and unsafe TTC rates (46.36\%). GTRS-AUG shows a larger gap, 65.13\% vs.\ 50.84\%, indicating that its per-frame feature variability provides some implicit robustness against static perturbations. GTRS-Dense is the most resilient to universal attacks (28.05\% score drop), yet notably suffers from a high discomfort rate (37.89\%) that far exceeds its online counterpart (40.33\%), suggesting the universal perturbation disproportionately disrupts comfort-related trajectory smoothness rather than safety-critical metrics. Overall, these results demonstrate that even a single, frame-agnostic perturbation poses a significant threat to all evaluated planning models, demonstrating the practical feasibility of physical-world adversarial attacks where per-frame optimization is infeasible (no access to intermediate features or model's output).

\subsection{\textsc{Derail} under Common Defenses}

We evaluate three commonly used input-level preprocessing defenses: Color Jitter (CJ), Spatial Smoothing (SS), and Gaussian Noise (GN), applied to the perturbed all camera view inputs (Left, Front, Right) before they are fed to each planning model. Table~\ref{tab:defense_results} reports the results alongside the undefended attack (Derail\_None) on the front camera \texttt{Cam\_f0} (See Table~\ref{tab:attacked_camera} for reference). A notable finding is that all three defenses fail to reduce the ASR below 100\% for any model, confirming that the proposed attack produces perturbations that are robust to standard input transformations. Nonetheless, defenses provide partial score recovery in a model-dependent manner. DiffusionDrive benefits the most: Spatial Smoothing lifts the score from 0.391 (undefended) to 0.622, reducing the score drop from 57.84\% to 33.01\%, while collision and off-road rates decrease by approximately 17 and 19 percentage points, respectively. Gaussian Noise and Color Jitter offer more modest improvements for DiffusionDrive, recovering roughly 15 and 12 percentage points of score drop.

\input{tables/defenses}

\noindent For the GTRS family, defenses are largely ineffective and occasionally counterproductive. GTRS-DP shows virtually no recovery under any defense: score drops remain between 65\% and 73\%, comparable to or worse than the 73.77\% undefended baseline, while off-road and wrong-way violation rates stay above 39\% and 22\%, respectively. This suggests that the adversarial perturbations learned against GTRS-DP reside in feature subspaces that are orthogonal to the transformations applied by these defenses. GTRS-AUG and GTRS-Dense exhibit similarly negligible recovery, with GTRS-Dense defenses even slightly increasing the score drop (e.g., GN at 50.92\% vs.\ 46.62\% undefended). These results demonstrate the insufficiency of input-level preprocessing as a standalone defense against optimized adversarial perturbations and motivate the need for more principled defenses tailored to the scoring-head attack surface we identify (e.g., margin-maximizing training of the scoring head, or non-learned downstream geometric/kinematic safety filtering between the scoring head and the executed trajectory).

\subsection{Cross-Model Adversarial Transferability}
To assess adversarial transferability, we adopt the following protocol. For each of the four models, we first craft a \emph{universal adversarial perturbation} $\delta^* \in \mathbb{R}^{H \times W \times 3}$ by optimizing the \textsc{Derail} attack objective over multiple driving scenarios on the \emph{source} model in a white-box setting. The resulting perturbation is image-agnostic: the same $\delta^*$ is added to every frame of the targeted camera before any model-specific preprocessing.  We then \emph{freeze} $\delta^*$ and evaluate it on each of the remaining three \emph{target} models without any further optimization, i.e., the target model is treated as a complete black box. The perturbation is applied in raw pixel space and each target model's own crop--stitch--resize pipeline processes the perturbed image normally.  This yields $4 \times 3 = 12$ cross-model attack combinations. We observe \emph{near-zero transferability} across architecturally distinct families and \emph{partial transferability} within the GTRS family (Table~\ref{tab:transferability_matrix}). Below we provide a systematic explanation grounded in the architectural and algorithmic differences among the four tested models.

\input{tables/transferrability}

\subsubsection{Partial Intra-Family Transferability.}
As shown in Table~\ref{tab:transferability_matrix}, perturbations transferred within the GTRS family exhibit modest but statistically meaningful degradation, providing insight into the conditions under which transferability can emerge. A perturbation optimized on GTRS-DP reduces GTRS-Dense's PDM score from 0.862 to 0.776 ($\Delta=-0.086$), and vice versa from 0.875 to 0.808 ($\Delta=-0.067$), weaker than white-box attacks, but significantly stronger than the near-zero impact observed across architecturally distinct families. This partial transferability is explained by three shared architectural components: an identical VoVNet-99 backbone with attention, the same crop--stitch--resize preprocessing pipeline producing $512\times2048$ inputs, and a common \texttt{HydraBackboneBEV} module creating a shared BEV representation space. Perturbations that corrupt these shared early representations retain partial effectiveness because the feature extraction pathway is identical up to the trajectory generation head. Notably, transfer is asymmetric: GTRS-DP perturbations transfer slightly better to GTRS-Dense than vice versa, because GTRS-DP's 100-step denoising optimization produces multi-scale BEV feature corruption that partially disrupts vocabulary scoring, whereas GTRS-Dense's perturbations are optimized to manipulate discrete score predictions, a more targeted and thus less transferable objective. GTRS-AUG exhibits the strongest resistance to transferred perturbations across both sources, consistent with its augmentation-robustness training and teacher--student distillation explicitly hardening its representations against input-space perturbations.

\subsection{Transferability across Heterogeneous Models}
Beyond the GTRS family, perturbations optimized for one generative planning model exhibit near-zero transferability to architecturally distinct models, a finding attributable mainly to compounding structural divergence. DiffusionDrive's ResNet-34 backbone operates at $256\times1024$ resolution while all GTRS variants use VoVNet-99 with FPN at $512\times2048$, a $4\times$ resolution difference that causes the same pixel-space perturbation to be downsampled differently, altering which spatial frequencies survive into feature extraction and producing near-random activations in the non-target backbone. The four models further instantiate three distinct decoder configurations: truncated DDIM with 2 denoising steps (DiffusionDrive), full DDPM with 100 steps (GTRS-DP), and vocabulary-based scoring over 4096--8192 discrete candidates (GTRS-Dense/AUG), each defining a qualitatively different mapping from pixel perturbations to trajectory outputs. The Jacobians $\partial\hat{\tau}/\partial\delta$ are consequently structurally dissimilar across families, such that a perturbation optimized through DiffusionDrive's 2-step DDIM chain has no reason to remain effective when processed by GTRS-DP's 100-step DDPM chain. These results confirm that transferability in generative AD is gated by shared backbone and preprocessing structure, and that the architectural heterogeneity of current generative planning models constitutes a meaningful, if unintentional, barrier to black-box adversarial attacks.

\subsection{Directions Toward More Transferable Attacks.}
We adopt a white-box threat model as our primary setting because it provides the gradient access needed to isolate and attribute each failure mode to a specific stage of the chain in Eq.~\eqref{eq:chain} (main paper); the transferability analysis is therefore complementary, serving as a diagnostic that characterizes the boundary conditions of \textsc{Derail}'s effectiveness beyond the white-box setting. This diagnostic reveals that cross-family transfer is constrained by the architectural heterogeneity of current generative planning models, a property of the model class itself that, as we show, can be partially addressed through ensemble optimization and surrogate model selection. Partial intra-family transfer within GTRS variants confirms that shared backbone structure and preprocessing pipelines are the primary enablers of cross-model generalization, providing a clear roadmap for improving transferability beyond the white-box setting. Specifically, ensemble-based optimization, jointly maximizing \textsc{Derail}'s objective across multiple surrogate models simultaneously, would encourage perturbations to exploit shared vulnerability directions rather than model-specific gradient pathways. Input diversity techniques such as random resizing and padding during optimization~\cite{xie2019improvingtransferabilityadversarialexamples} could further mitigate the resolution sensitivity between DiffusionDrive ($256\times1024$) and GTRS ($512\times2048$), encouraging perturbations that remain effective across 
different downsampling scales. Attacking at the BEV feature level rather than pixel level, if partial model access is available, would bypass backbone-specific frequency sensitivities entirely, as the BEV bottleneck constitutes a shared representation space across all evaluated architectures. Finally, the asymmetric transfer from GTRS-DP to GTRS-Dense suggests that surrogate models with longer iterative optimization horizons (e.g., full DDPM schedules) produce broader multi-scale feature corruption and may serve as more effective general-purpose surrogates for black-box attack crafting. Together, these directions suggest that transferable adversarial attacks against generative AD systems are achievable beyond the white-box setting, but require deliberate optimization strategies that account for the architectural diversity of this emerging model class.

%% file: tables/attacked_camera.tex
\begin{table}[t]
\centering
\caption{Attacking different camera views (Left, Front, Right).}
\vspace{-0.3cm}
\label{tab:attacked_camera}

\definecolor{oursaccent}{HTML}{8B2635}
\definecolor{oursrow}{HTML}{FDECEA}
\definecolor{modelband}{HTML}{FFFFFF}
\definecolor{modelbandtext}{HTML}{000000}

\setlength{\tabcolsep}{2.4pt}
\renewcommand{\arraystretch}{1.02}

\newcommand{\modelband}[2]{%
  \multicolumn{9}{c}{\cellcolor{modelband}%
    \color{modelbandtext}\textit{Attacked Model:}~\textbf{#1}~\cite{#2}} \\
  \arrayrulecolor{modelbandtext}\midrule\arrayrulecolor{black}%
}

\resizebox{\columnwidth}{!}{%
\footnotesize
\begin{tabular}{@{}l*{8}{c}@{}}
\toprule
\textbf{Target Camera}
 & \textbf{PDM}$\downarrow$
 & \textbf{Drop}$\uparrow$
 & \textbf{ASR}$\uparrow$
 & \textbf{CR}$\uparrow$
 & \textbf{OR}$\uparrow$
 & \textbf{WW}$\uparrow$
 & \textbf{TTC}$\uparrow$
 & \textbf{CM}$\uparrow$ \\
\midrule

\modelband{DiffusionDrive}{liao2025diffusiondrive}
Clean              & 0.93 & 0.00  & 0.00   & 0.87  & 1.63  & 4.88  & 4.90  & 0.00 \\
\texttt{Cam\_Left}  & 0.45 & 51.20 & 100.00 & 41.17 & 17.89 & 15.80 & 54.56 & 4.12 \\
\texttt{Cam\_Right} & 0.46 & 50.27 & 100.00 & 41.42 & 16.85 & 15.91 & 53.73 & 4.21 \\
\rowcolor{oursrow}
\textbf{\texttt{Cam\_Front}}
 & \textbf{\textcolor{oursaccent}{0.39}}
 & \textbf{\textcolor{oursaccent}{57.84}}
 & \textbf{\textcolor{oursaccent}{100.00}}
 & \textbf{\textcolor{oursaccent}{44.29}}
 & \textbf{\textcolor{oursaccent}{26.38}}
 & \textbf{\textcolor{oursaccent}{17.78}}
 & \textbf{\textcolor{oursaccent}{57.92}}
 & \textbf{\textcolor{oursaccent}{3.73}} \\
\cmidrule(l){1-9}

\modelband{GTRS-DP}{li2025generalized}
Clean              & 0.76 & 0.00  & 0.00   & 14.72 & 2.80  & 0.00  & 15.55 & 1.10 \\
\texttt{Cam\_Left}  & 0.62 & 19.21 & 100.00 & 25.44 & 9.27  & 2.34  & 28.18 & 1.31 \\
\texttt{Cam\_Right} & 0.71 & 7.27  & 80.00  & 18.87 & 4.33  & 0.83  & 21.00 & 1.33 \\
\rowcolor{oursrow}
\textbf{\texttt{Cam\_Front}}
 & \textbf{\textcolor{oursaccent}{0.20}}
 & \textbf{\textcolor{oursaccent}{73.77}}
 & \textbf{\textcolor{oursaccent}{100.00}}
 & \textbf{\textcolor{oursaccent}{54.05}}
 & \textbf{\textcolor{oursaccent}{51.71}}
 & \textbf{\textcolor{oursaccent}{33.13}}
 & \textbf{\textcolor{oursaccent}{61.19}}
 & \textbf{\textcolor{oursaccent}{18.21}} \\
\cmidrule(l){1-9}

\modelband{GTRS-AUG}{li2025generalized}
Clean              & 0.80 & 0.00  & 0.00   & 13.78 & 0.00  & 0.00  & 14.39 & 1.10 \\
\texttt{Cam\_Left}  & 0.71 & 11.60 & 90.00  & 22.66 & 0.90  & 0.08  & 24.50 & 0.97 \\
\texttt{Cam\_Right} & 0.70 & 12.06 & 90.00  & 23.34 & 0.58  & 0.00  & 24.63 & 1.12 \\
\rowcolor{oursrow}
\textbf{\texttt{Cam\_Front}}
 & \textbf{\textcolor{oursaccent}{0.28}}
 & \textbf{\textcolor{oursaccent}{65.13}}
 & \textbf{\textcolor{oursaccent}{100.00}}
 & \textbf{\textcolor{oursaccent}{44.64}}
 & \textbf{\textcolor{oursaccent}{42.63}}
 & \textbf{\textcolor{oursaccent}{25.92}}
 & \textbf{\textcolor{oursaccent}{50.54}}
 & \textbf{\textcolor{oursaccent}{9.24}} \\
\cmidrule(l){1-9}

\modelband{GTRS-Dense}{li2025generalized}
Clean              & 0.82 & 0.00  & 0.00   & 11.12 & 0.49  & 0.00  & 11.87 & 1.23 \\
\texttt{Cam\_Left}  & 0.67 & 17.69 & 100.00 & 18.57 & 4.04  & 0.18  & 20.43 & 15.19 \\
\texttt{Cam\_Right} & 0.68 & 17.13 & 90.00  & 17.18 & 5.04  & 0.57  & 18.45 & 14.27 \\
\rowcolor{oursrow}
\textbf{\texttt{Cam\_Front}}
 & \textbf{\textcolor{oursaccent}{0.44}}
 & \textbf{\textcolor{oursaccent}{46.62}}
 & \textbf{\textcolor{oursaccent}{100.00}}
 & \textbf{\textcolor{oursaccent}{37.27}}
 & \textbf{\textcolor{oursaccent}{10.46}}
 & \textbf{\textcolor{oursaccent}{3.24}}
 & \textbf{\textcolor{oursaccent}{38.74}}
 & \textbf{\textcolor{oursaccent}{40.33}} \\

\bottomrule
\end{tabular}}
\vspace{-0.4cm}
\end{table}

%% file: tables/universal_perturbation.tex
\begin{table}[t]
\centering
\caption{Online per-frame vs.\ offline universal perturbation.}
\vspace{-0.2cm}
\label{tab:universal_perturbation}

\definecolor{oursaccent}{HTML}{8B2635}
\definecolor{oursrow}{HTML}{FDECEA}
\definecolor{modelband}{HTML}{FFFFFF}
\definecolor{modelbandtext}{HTML}{000000}

\setlength{\tabcolsep}{2.4pt}
\renewcommand{\arraystretch}{1.02}

\newcommand{\modelband}[2]{%
  \multicolumn{9}{c}{\cellcolor{modelband}%
    \color{modelbandtext}\textit{Attacked Model:}~\textbf{#1}~\cite{#2}} \\
  \arrayrulecolor{modelbandtext}\midrule\arrayrulecolor{black}%
}

\resizebox{\columnwidth}{!}{%
\footnotesize
\begin{tabular}{@{}l*{8}{c}@{}}
\toprule
\textbf{Attack Strategy}
 & \textbf{PDM}$\downarrow$
 & \textbf{Drop}$\uparrow$
 & \textbf{ASR}$\uparrow$
 & \textbf{CR}$\uparrow$
 & \textbf{OR}$\uparrow$
 & \textbf{WW}$\uparrow$
 & \textbf{TTC}$\uparrow$
 & \textbf{CM}$\uparrow$ \\
\midrule

\modelband{DiffusionDrive}{liao2025diffusiondrive}
Clean              & 0.93 & 0.00  & 0.00   & 0.87  & 1.63  & 4.88  & 4.90  & 0.00 \\
Offline Universal  & 0.54 & 41.88 & 100.00 & 34.74 & 7.96  & 10.63 & 46.36 & 0.23 \\
\rowcolor{oursrow}
\textbf{Online Per-frame}
 & \textbf{\textcolor{oursaccent}{0.39}}
 & \textbf{\textcolor{oursaccent}{57.84}}
 & \textbf{\textcolor{oursaccent}{100.00}}
 & \textbf{\textcolor{oursaccent}{44.29}}
 & \textbf{\textcolor{oursaccent}{26.38}}
 & \textbf{\textcolor{oursaccent}{17.78}}
 & \textbf{\textcolor{oursaccent}{57.92}}
 & \textbf{\textcolor{oursaccent}{3.73}} \\
\cmidrule(l){1-9}

\modelband{GTRS-DP}{li2025generalized}
Clean              & 0.76 & 0.00  & 0.00   & 14.72 & 2.80  & 0.00  & 15.55 & 1.10  \\
Offline Universal  & 0.19 & 75.60 & 100.00 & 44.71 & 68.87 & 50.73 & 53.88 & 18.75 \\
\rowcolor{oursrow}
\textbf{Online Per-frame}
 & \textbf{\textcolor{oursaccent}{0.20}}
 & \textbf{\textcolor{oursaccent}{73.77}}
 & \textbf{\textcolor{oursaccent}{100.00}}
 & \textbf{\textcolor{oursaccent}{54.05}}
 & \textbf{\textcolor{oursaccent}{51.71}}
 & \textbf{\textcolor{oursaccent}{33.13}}
 & \textbf{\textcolor{oursaccent}{61.19}}
 & \textbf{\textcolor{oursaccent}{18.21}} \\
\cmidrule(l){1-9}

\modelband{GTRS-AUG}{li2025generalized}
Clean              & 0.80 & 0.00  & 0.00   & 13.78 & 0.00  & 0.00  & 14.39 & 1.10 \\
Offline Universal  & 0.39 & 50.84 & 90.00  & 35.99 & 34.00 & 18.63 & 41.07 & 4.62 \\
\rowcolor{oursrow}
\textbf{Online Per-frame}
 & \textbf{\textcolor{oursaccent}{0.28}}
 & \textbf{\textcolor{oursaccent}{65.13}}
 & \textbf{\textcolor{oursaccent}{100.00}}
 & \textbf{\textcolor{oursaccent}{44.64}}
 & \textbf{\textcolor{oursaccent}{42.63}}
 & \textbf{\textcolor{oursaccent}{25.92}}
 & \textbf{\textcolor{oursaccent}{50.54}}
 & \textbf{\textcolor{oursaccent}{9.24}} \\
\cmidrule(l){1-9}

\modelband{GTRS-Dense}{li2025generalized}
Clean              & 0.82 & 0.00  & 0.00   & 11.12 & 0.49  & 0.00 & 11.87 & 1.23  \\
Offline Universal  & 0.59 & 28.05 & 90.00  & 15.24 & 4.82  & 0.95 & 15.85 & 37.89 \\
\rowcolor{oursrow}
\textbf{Online Per-frame}
 & \textbf{\textcolor{oursaccent}{0.44}}
 & \textbf{\textcolor{oursaccent}{46.62}}
 & \textbf{\textcolor{oursaccent}{100.00}}
 & \textbf{\textcolor{oursaccent}{37.27}}
 & \textbf{\textcolor{oursaccent}{10.46}}
 & \textbf{\textcolor{oursaccent}{3.24}}
 & \textbf{\textcolor{oursaccent}{38.74}}
 & \textbf{\textcolor{oursaccent}{40.33}} \\

\bottomrule
\end{tabular}}
\vspace{-0.4cm}
\end{table}

%% file: tables/defenses.tex
\begin{table}[t]
\centering
\caption{Impact of different input-level defenses (CJ, SS, and GN) on the \textsc{Derail} attack across different planning models.}
\vspace{-0.2cm}
\label{tab:defense_results}

\definecolor{oursaccent}{HTML}{8B2635}
\definecolor{oursrow}{HTML}{FDECEA}
\definecolor{modelband}{HTML}{FFFFFF}
\definecolor{modelbandtext}{HTML}{000000}

\setlength{\tabcolsep}{2.4pt}
\renewcommand{\arraystretch}{1.02}

\newcommand{\modelband}[2]{%
  \multicolumn{9}{c}{\cellcolor{modelband}%
    \color{modelbandtext}\textit{Attacked Model:}~\textbf{#1}~\cite{#2}} \\
  \arrayrulecolor{modelbandtext}\midrule\arrayrulecolor{black}%
}

\resizebox{\columnwidth}{!}{%
\footnotesize
\begin{tabular}{@{}l*{8}{c}@{}}
\toprule
\textbf{Attack / Defense}
 & \textbf{PDM}$\downarrow$
 & \textbf{Drop}$\uparrow$
 & \textbf{ASR}$\uparrow$
 & \textbf{CR}$\uparrow$
 & \textbf{OR}$\uparrow$
 & \textbf{WW}$\uparrow$
 & \textbf{TTC}$\uparrow$
 & \textbf{CM}$\uparrow$ \\
\midrule

\modelband{DiffusionDrive}{liao2025diffusiondrive}
Clean                  & 0.93 & 0.00  & 0.00   & 0.87  & 1.63  & 4.88  & 4.90  & 0.00 \\
\texttt{Derail\_CJ}    & 0.50 & 45.89 & 100.00 & 36.69 & 14.54 & 10.27 & 49.48 & 2.71 \\
\texttt{Derail\_SS}    & 0.62 & 33.01 & 100.00 & 27.15 & 7.93  & 6.69  & 38.63 & 1.33 \\
\texttt{Derail\_GN}    & 0.55 & 41.23 & 100.00 & 34.46 & 10.51 & 9.18  & 46.11 & 2.41 \\
\rowcolor{oursrow}
\textbf{\texttt{Derail\_None}}
 & \textbf{\textcolor{oursaccent}{0.39}}
 & \textbf{\textcolor{oursaccent}{57.84}}
 & \textbf{\textcolor{oursaccent}{100.00}}
 & \textbf{\textcolor{oursaccent}{44.29}}
 & \textbf{\textcolor{oursaccent}{26.38}}
 & \textbf{\textcolor{oursaccent}{17.78}}
 & \textbf{\textcolor{oursaccent}{57.92}}
 & \textbf{\textcolor{oursaccent}{3.73}} \\
\cmidrule(l){1-9}

\modelband{GTRS-DP}{li2025generalized}
Clean                  & 0.76 & 0.00  & 0.00   & 14.72 & 2.80  & 0.00  & 15.55 & 1.10  \\
\texttt{Derail\_CJ}    & 0.22 & 71.23 & 100.00 & 53.51 & 48.96 & 30.19 & 61.73 & 16.92 \\
\texttt{Derail\_SS}    & 0.21 & 72.67 & 100.00 & 54.70 & 50.21 & 31.92 & 61.52 & 18.89 \\
\texttt{Derail\_GN}    & 0.26 & 65.41 & 100.00 & 53.31 & 39.80 & 22.90 & 60.42 & 13.37 \\
\rowcolor{oursrow}
\textbf{\texttt{Derail\_None}}
 & \textbf{\textcolor{oursaccent}{0.20}}
 & \textbf{\textcolor{oursaccent}{73.77}}
 & \textbf{\textcolor{oursaccent}{100.00}}
 & \textbf{\textcolor{oursaccent}{54.05}}
 & \textbf{\textcolor{oursaccent}{51.71}}
 & \textbf{\textcolor{oursaccent}{33.13}}
 & \textbf{\textcolor{oursaccent}{61.19}}
 & \textbf{\textcolor{oursaccent}{18.21}} \\
\cmidrule(l){1-9}

\modelband{GTRS-AUG}{li2025generalized}
Clean                  & 0.80 & 0.00  & 0.00   & 13.78 & 0.00  & 0.00  & 14.39 & 1.10 \\
\texttt{Derail\_CJ}    & 0.32 & 60.28 & 100.00 & 41.46 & 37.33 & 20.24 & 47.91 & 8.43 \\
\texttt{Derail\_SS}    & 0.30 & 62.84 & 100.00 & 42.42 & 37.93 & 22.08 & 49.70 & 8.10 \\
\texttt{Derail\_GN}    & 0.31 & 61.40 & 100.00 & 42.94 & 35.78 & 20.85 & 49.59 & 6.41 \\
\rowcolor{oursrow}
\textbf{\texttt{Derail\_None}}
 & \textbf{\textcolor{oursaccent}{0.28}}
 & \textbf{\textcolor{oursaccent}{65.13}}
 & \textbf{\textcolor{oursaccent}{100.00}}
 & \textbf{\textcolor{oursaccent}{44.64}}
 & \textbf{\textcolor{oursaccent}{42.63}}
 & \textbf{\textcolor{oursaccent}{25.92}}
 & \textbf{\textcolor{oursaccent}{50.54}}
 & \textbf{\textcolor{oursaccent}{9.24}} \\
\cmidrule(l){1-9}

\modelband{GTRS-Dense}{li2025generalized}
Clean                  & 0.82 & 0.00  & 0.00   & 11.12 & 0.49  & 0.00 & 11.87 & 1.23  \\
\texttt{Derail\_CJ}    & 0.43 & 47.47 & 100.00 & 37.64 & 10.23 & 2.83 & 39.98 & 43.98 \\
\texttt{Derail\_SS}    & 0.44 & 45.97 & 100.00 & 38.77 & 10.19 & 3.34 & 39.81 & 44.17 \\
\texttt{Derail\_GN}    & 0.40 & 50.92 & 100.00 & 41.16 & 11.58 & 3.31 & 43.00 & 43.31 \\
\rowcolor{oursrow}
\textbf{\texttt{Derail\_None}}
 & \textbf{\textcolor{oursaccent}{0.44}}
 & \textbf{\textcolor{oursaccent}{46.62}}
 & \textbf{\textcolor{oursaccent}{100.00}}
 & \textbf{\textcolor{oursaccent}{37.27}}
 & \textbf{\textcolor{oursaccent}{10.46}}
 & \textbf{\textcolor{oursaccent}{3.24}}
 & \textbf{\textcolor{oursaccent}{38.74}}
 & \textbf{\textcolor{oursaccent}{40.33}} \\

\bottomrule
\end{tabular}}
\vspace{-0.4cm}
\end{table}

%% file: tables/transferrability.tex
\begin{table}[t]
\centering
\caption{Black-box transferability analysis of \textsc{Derail}.}
\vspace{-0.2cm}
\label{tab:transferability_matrix}

\definecolor{oursaccent}{HTML}{8B2635}
\definecolor{oursrow}{HTML}{FDECEA}

\setlength{\tabcolsep}{2.4pt}
\renewcommand{\arraystretch}{1.05}

\resizebox{\columnwidth}{!}{%
\footnotesize
\begin{tabular}{@{}l@{\hspace{6pt}}*{4}{c}@{\hspace{8pt}}*{4}{c}@{\hspace{8pt}}*{4}{c}@{}}
\toprule
\multirow{2}{*}{\textbf{White-box Surrogate}}
 & \multicolumn{12}{c}{\textbf{Black-box Target Models} $\rightarrow$} \\
\cmidrule(lr){2-13}
 & \multicolumn{4}{c}{\textbf{GTRS-DP}~\cite{li2025generalized}}
 & \multicolumn{4}{c}{\textbf{GTRS-AUG}~\cite{li2025generalized}}
 & \multicolumn{4}{c}{\textbf{GTRS-Dense}~\cite{li2025generalized}} \\
\cmidrule(lr){2-5}\cmidrule(lr){6-9}\cmidrule(lr){10-13}
 & PDM$\downarrow$ & Drop$\uparrow$ & ASR$\uparrow$ & CR$\uparrow$
 & PDM$\downarrow$ & Drop$\uparrow$ & ASR$\uparrow$ & CR$\uparrow$
 & PDM$\downarrow$ & Drop$\uparrow$ & ASR$\uparrow$ & CR$\uparrow$ \\
\midrule

\textit{Clean (no attack)}
 & 0.87 & 0.00 & 0.00 & 5.06
 & 0.84 & 0.00 & 0.00 & 11.39
 & 0.86 & 0.00 & 0.00 & 6.33 \\
\midrule

\textbf{GTRS-DP}
 & \cellcolor{oursrow}\textbf{\textcolor{oursaccent}{0.09}}
 & \cellcolor{oursrow}\textbf{\textcolor{oursaccent}{0.78}}
 & \cellcolor{oursrow}\textbf{\textcolor{oursaccent}{100}}
 & \cellcolor{oursrow}\textbf{\textcolor{oursaccent}{66.46}}
 & 0.80 & 0.03 & 100 & 15.19
 & 0.78 & 0.09 & 100 & 10.13 \\

\textbf{GTRS-AUG}
 & 0.84 & 0.03 & 100 & 7.59
 & \cellcolor{oursrow}\textbf{\textcolor{oursaccent}{0.49}}
 & \cellcolor{oursrow}\textbf{\textcolor{oursaccent}{0.35}}
 & \cellcolor{oursrow}\textbf{\textcolor{oursaccent}{100}}
 & \cellcolor{oursrow}\textbf{\textcolor{oursaccent}{30.38}}
 & 0.84 & 0.02 & 100 & 7.59 \\

\textbf{GTRS-Dense}
 & 0.81 & 0.07 & 100 & 11.39
 & 0.81 & 0.03 & 100 & 15.19
 & \cellcolor{oursrow}\textbf{\textcolor{oursaccent}{0.57}}
 & \cellcolor{oursrow}\textbf{\textcolor{oursaccent}{0.29}}
 & \cellcolor{oursrow}\textbf{\textcolor{oursaccent}{100}}
 & \cellcolor{oursrow}\textbf{\textcolor{oursaccent}{20.25}} \\

\bottomrule
\end{tabular}}
\vspace{-0.5cm}
\end{table}

%% file: main.bib
@String(AAAI  = {AAAI})

@article{sohn2015learning,
  title={Learning structured output representation using deep conditional generative models},
  author={Sohn, Kihyuk and Lee, Honglak and Yan, Xinchen},
  journal={Advances in neural information processing systems},
  volume={28},
  year={2015}
}

@inproceedings{ajayconditional,
  title={Is Conditional Generative Modeling all you need for Decision Making?},
  author={Ajay, Anurag and Du, Yilun and Gupta, Abhi and Tenenbaum, Joshua B and Jaakkola, Tommi S and Agrawal, Pulkit},
  booktitle={The Eleventh International Conference on Learning Representations}
}

@article{ivanovic2020multimodal,
  title={Multimodal deep generative models for trajectory prediction: A conditional variational autoencoder approach},
  author={Ivanovic, Boris and Leung, Karen and Schmerling, Edward and Pavone, Marco},
  journal={IEEE Robotics and Automation Letters},
  volume={6},
  number={2},
  pages={295--302},
  year={2020},
  publisher={IEEE}
}

@article{he2023diffusion,
  title={Diffusion model is an effective planner and data synthesizer for multi-task reinforcement learning},
  author={He, Haoran and Bai, Chenjia and Xu, Kang and Yang, Zhuoran and Zhang, Weinan and Wang, Dong and Zhao, Bin and Li, Xuelong},
  journal={Advances in neural information processing systems},
  volume={36},
  pages={64896--64917},
  year={2023}
}

@inproceedings{li2019conditional,
  title={Conditional generative neural system for probabilistic trajectory prediction},
  author={Li, Jiachen and Ma, Hengbo and Tomizuka, Masayoshi},
  booktitle={2019 IEEE/RSJ International Conference on Intelligent Robots and Systems (IROS)},
  pages={6150--6156},
  year={2019},
  organization={IEEE}
}

@inproceedings{wangrobogen,
  title={RoboGen: Towards Unleashing Infinite Data for Automated Robot Learning via Generative Simulation},
  author={Wang, Yufei and Xian, Zhou and Chen, Feng and Wang, Tsun-Hsuan and Wang, Yian and Fragkiadaki, Katerina and Erickson, Zackory and Held, David and Gan, Chuang},
  booktitle={Forty-first International Conference on Machine Learning}
}

@article{zhang2025generative,
  title={Generative artificial intelligence in robotic manipulation: A survey},
  author={Zhang, Kun and Yun, Peng and Cen, Jun and Cai, Junhao and Zhu, Didi and Yuan, Hangjie and Zhao, Chao and Feng, Tao and Wang, Michael Yu and Chen, Qifeng and others},
  journal={arXiv preprint arXiv:2503.03464},
  year={2025}
}

@article{gomez2020real,
  title={Real time trajectory prediction using deep conditional generative models},
  author={Gomez-Gonzalez, Sebastian and Prokudin, Sergey and Sch{\"o}lkopf, Bernhard and Peters, Jan},
  journal={IEEE Robotics and Automation Letters},
  volume={5},
  number={2},
  pages={970--976},
  year={2020},
  publisher={IEEE}
}

@inproceedings{arnelid2019recurrent,
  title={Recurrent conditional generative adversarial networks for autonomous driving sensor modelling},
  author={Arnelid, Henrik and Zec, Edvin Listo and Mohammadiha, Nasser},
  booktitle={2019 IEEE Intelligent transportation systems conference (ITSC)},
  pages={1613--1618},
  year={2019},
  organization={IEEE}
}

@inproceedings{zheng2024genad,
  title={Genad: Generative end-to-end autonomous driving},
  author={Zheng, Wenzhao and Song, Ruiqi and Guo, Xianda and Zhang, Chenming and Chen, Long},
  booktitle={European Conference on Computer Vision},
  pages={87--104},
  year={2024},
  organization={Springer}
}

@article{wang2025generative,
  title={Generative ai for autonomous driving: Frontiers and opportunities},
  author={Wang, Yuping and Xing, Shuo and Can, Cui and Li, Renjie and Hua, Hongyuan and Tian, Kexin and Mo, Zhaobin and Gao, Xiangbo and Wu, Keshu and Zhou, Sulong and others},
  journal={arXiv preprint arXiv:2505.08854},
  year={2025}
}

@article{janner2022planning,
  title={Planning with diffusion for flexible behavior synthesis},
  author={Janner, Michael and Du, Yilun and Tenenbaum, Joshua B and Levine, Sergey},
  journal={arXiv preprint arXiv:2205.09991},
  year={2022}
}

@article{ho2020denoising,
  title={Denoising diffusion probabilistic models},
  author={Ho, Jonathan and Jain, Ajay and Abbeel, Pieter},
  journal={Advances in neural information processing systems},
  volume={33},
  pages={6840--6851},
  year={2020}
}

@article{chi2025diffusion,
  title={Diffusion policy: Visuomotor policy learning via action diffusion},
  author={Chi, Cheng and Xu, Zhenjia and Feng, Siyuan and Cousineau, Eric and Du, Yilun and Burchfiel, Benjamin and Tedrake, Russ and Song, Shuran},
  journal={The International Journal of Robotics Research},
  volume={44},
  number={10-11},
  pages={1684--1704},
  year={2025},
  publisher={Sage Publications Sage UK: London, England}
}

@inproceedings{li2024crossway,
  title={Crossway diffusion: Improving diffusion-based visuomotor policy via self-supervised learning},
  author={Li, Xiang and Belagali, Varun and Shang, Jinghuan and Ryoo, Michael S},
  booktitle={2024 IEEE International Conference on Robotics and Automation (ICRA)},
  pages={16841--16849},
  year={2024},
  organization={IEEE}
}

@article{liu2024ddm,
  title={DDM-lag: A diffusion-based decision-making model for autonomous vehicles with lagrangian safety enhancement},
  author={Liu, Jiaqi and Hang, Peng and Zhao, Xiaocong and Wang, Jianqiang and Sun, Jian},
  journal={IEEE Transactions on Artificial Intelligence},
  volume={6},
  number={3},
  pages={780--791},
  year={2024},
  publisher={IEEE}
}

@inproceedings{luo2024pot,
  title={Pot potential based diffusion motion planning ential based diffusion motion planning},
  author={Luo, Yunhao and Sun, Chen and Tenenbaum, Joshua B and Du, Yilun},
  booktitle={Proceedings of the 41st International Conference on Machine Learning},
  pages={33486--33510},
  year={2024}
}

@inproceedings{mishra2023generative,
  title={Generative skill chaining: Long-horizon skill planning with diffusion models},
  author={Mishra, Utkarsh Aashu and Xue, Shangjie and Chen, Yongxin and Xu, Danfei},
  booktitle={Conference on Robot Learning},
  pages={2905--2925},
  year={2023},
  organization={PMLR}
}

@inproceedings{zhengdiffusion,
  title={Diffusion-Based Planning for Autonomous Driving with Flexible Guidance},
  author={Zheng, Yinan and Liang, Ruiming and ZHENG, Kexin and Zheng, Jinliang and Mao, Liyuan and Li, Jianxiong and Gu, Weihao and Ai, Rui and Li, Shengbo Eben and Zhan, Xianyuan and others},
  booktitle={The Thirteenth International Conference on Learning Representations}
}

@inproceedings{liao2025diffusiondrive,
  title={Diffusiondrive: Truncated diffusion model for end-to-end autonomous driving},
  author={Liao, Bencheng and Chen, Shaoyu and Yin, Haoran and Jiang, Bo and Wang, Cheng and Yan, Sixu and Zhang, Xinbang and Li, Xiangyu and Zhang, Ying and Zhang, Qian and others},
  booktitle={Proceedings of the Computer Vision and Pattern Recognition Conference},
  pages={12037--12047},
  year={2025}
}

@article{wang2025diffad,
  title={Diffad: A unified diffusion modeling approach for autonomous driving},
  author={Wang, Tao and Zhang, Cong and Qu, Xingguang and Li, Kun and Liu, Weiwei and Huang, Chang},
  journal={arXiv preprint arXiv:2503.12170},
  year={2025}
}

@article{li2025generalized,
  title={Generalized trajectory scoring for end-to-end multimodal planning},
  author={Li, Zhenxin and Yao, Wenhao and Wang, Zi and Sun, Xinglong and Chen, Joshua and Chang, Nadine and Shen, Maying and Wu, Zuxuan and Lan, Shiyi and Alvarez, Jose M},
  journal={arXiv preprint arXiv:2506.06664},
  year={2025}
}

@article{zheng2026unleashing,
  title={Unleashing the Potential of Diffusion Models for End-to-End Autonomous Driving},
  author={Zheng, Yinan and Tan, Tianyi and Huang, Bin and Liu, Enguang and Liang, Ruiming and Zhang, Jianlin and Cui, Jianwei and Chen, Guang and Ma, Kun and Ye, Hangjun and others},
  journal={arXiv preprint arXiv:2602.22801},
  year={2026}
}

@article{yao2026reflexdiffusion,
  title={ReflexDiffusion: Reflection-Enhanced Trajectory Planning for High-lateral-acceleration Scenarios in Autonomous Driving},
  author={Yao, Xuemei and Yang, Xiao and Sun, Jianbin and Xie, Liuwei and Shao, Xuebin and Fang, Xiyu and Su, Hang and Yang, Kewei},
  journal={arXiv preprint arXiv:2601.09377},
  year={2026}
}

@article{reddy2026rapid,
  title={RAPiD: Real-time Deterministic Trajectory Planning via Diffusion Behavior Priors for Safe and Efficient Autonomous Driving},
  author={Reddy, Ruturaj and Barua, Hrishav Bakul and Loo, Junn Yong and Nguyen, Thanh Thi and Krishnasamy, Ganesh},
  journal={arXiv preprint arXiv:2602.07339},
  year={2026}
}

@inproceedings{jiang2023motiondiffuser,
  title={Motiondiffuser: Controllable multi-agent motion prediction using diffusion},
  author={Jiang, Chiyu and Cornman, Andre and Park, Cheolho and Sapp, Benjamin and Zhou, Yin and Anguelov, Dragomir and others},
  booktitle={Proceedings of the IEEE/CVF conference on computer vision and pattern recognition},
  pages={9644--9653},
  year={2023}
}

@article{ren2025cosmos,
  title={Cosmos-drive-dreams: Scalable synthetic driving data generation with world foundation models},
  author={Ren, Xuanchi and Lu, Yifan and Cao, Tianshi and Gao, Ruiyuan and Huang, Shengyu and Sabour, Amirmojtaba and Shen, Tianchang and Pfaff, Tobias and Wu, Jay Zhangjie and Chen, Runjian and others},
  journal={arXiv preprint arXiv:2506.09042},
  year={2025}
}

@article{chen2024diffusion,
  title={Diffusion policy attacker: Crafting adversarial attacks for diffusion-based policies},
  author={Chen, Yipu and Xue, Haotian and Chen, Yongxin},
  journal={Advances in Neural Information Processing Systems},
  volume={37},
  pages={119614--119637},
  year={2024}
}

@article{gleave2019adversarial,
  title={Adversarial policies: Attacking deep reinforcement learning},
  author={Gleave, Adam and Dennis, Michael and Wild, Cody and Kant, Neel and Levine, Sergey and Russell, Stuart},
  journal={arXiv preprint arXiv:1905.10615},
  year={2019}
}

@inproceedings{madry2018towards,
  title={Towards Deep Learning Models Resistant to Adversarial Attacks},
  author={Madry, Aleksander and Makelov, Aleksandar and Schmidt, Ludwig and Tsipras, Dimitris and Vladu, Adrian},
  booktitle={International Conference on Learning Representations},
  year={2018}
}

@article{mo2022attacking,
  title={Attacking deep reinforcement learning with decoupled adversarial policy},
  author={Mo, Kanghua and Tang, Weixuan and Li, Jin and Yuan, Xu},
  journal={IEEE Transactions on Dependable and Secure Computing},
  volume={20},
  number={1},
  pages={758--768},
  year={2022},
  publisher={IEEE}
}

@inproceedings{pattanaik2018robust,
  title={Robust Deep Reinforcement Learning with Adversarial Attacks},
  author={Pattanaik, Anay and Tang, Zhenyi and Liu, Shuijing and Bommannan, Gautham and Chowdhary, Girish},
  booktitle={Proceedings of the 17th International Conference on Autonomous Agents and MultiAgent Systems},
  pages={2040--2042},
  year={2018}
}

@inproceedings{pan2018agile,
  title={Agile Autonomous Driving using End-to-End Deep Imitation Learning},
  author={Pan, Yunpeng and Cheng, Ching-An and Saigol, Kamil and Lee, Keuntak and Yan, Xinyan and Theodorou, Evangelos and Boots, Byron},
  booktitle={Robotics: science and systems},
  year={2018}
}

@article{schaal1999imitation,
  title={Is imitation learning the route to humanoid robots?},
  author={Schaal, Stefan},
  journal={Trends in cognitive sciences},
  volume={3},
  number={6},
  pages={233--242},
  year={1999},
  publisher={Elsevier}
}

@article{osa2018algorithmic,
  title={An algorithmic perspective on imitation learning},
  author={Osa, Takayuki and Pajarinen, Joni and Neumann, Gerhard and Bagnell, J Andrew and Abbeel, Pieter and Peters, Jan},
  journal={Foundations and Trends{\textregistered} in Robotics},
  volume={7},
  number={1-2},
  pages={1--179},
  year={2018},
  publisher={Emerald Publishing Limited}
}

@inproceedings{songdenoising,
  title={Denoising Diffusion Implicit Models},
  author={Song, Jiaming and Meng, Chenlin and Ermon, Stefano},
  booktitle={International Conference on Learning Representations}
}

@inproceedings{songscore,
  title={Score-Based Generative Modeling through Stochastic Differential Equations},
  author={Song, Yang and Sohl-Dickstein, Jascha and Kingma, Diederik P and Kumar, Abhishek and Ermon, Stefano and Poole, Ben},
  booktitle={International Conference on Learning Representations}
}

@inproceedings{sun2020stealthy,
  title={Stealthy and efficient adversarial attacks against deep reinforcement learning},
  author={Sun, Jianwen and Zhang, Tianwei and Xie, Xiaofei and Ma, Lei and Zheng, Yan and Chen, Kangjie and Liu, Yang},
  booktitle={Proceedings of the AAAI conference on artificial intelligence},
  volume={34},
  number={04},
  pages={5883--5891},
  year={2020}
}

@article{szegedy2013intriguing,
  title={Intriguing properties of neural networks},
  author={Szegedy, Christian and Zaremba, Wojciech and Sutskever, Ilya and Bruna, Joan and Erhan, Dumitru and Goodfellow, Ian and Fergus, Rob},
  journal={arXiv preprint arXiv:1312.6199},
  year={2013}
}

@article{brown2017adversarial,
  title={Adversarial patch},
  author={Brown, Tom B and Man{\'e}, Dandelion and Roy, Aurko and Abadi, Mart{\'\i}n and Gilmer, Justin},
  journal={arXiv preprint arXiv:1712.09665},
  year={2017}
}

@article{liu2018dpatch,
  title={Dpatch: An adversarial patch attack on object detectors},
  author={Liu, Xin and Yang, Huanrui and Liu, Ziwei and Song, Linghao and Li, Hai and Chen, Yiran},
  journal={arXiv preprint arXiv:1806.02299},
  year={2018}
}

@inproceedings{hu2023planning,
  title={Planning-oriented autonomous driving},
  author={Hu, Yihan and Yang, Jiazhi and Chen, Li and Li, Keyu and Sima, Chonghao and Zhu, Xizhou and Chai, Siqi and Du, Senyao and Lin, Tianwei and Wang, Wenhai and others},
  booktitle={Proceedings of the IEEE/CVF conference on computer vision and pattern recognition},
  pages={17853--17862},
  year={2023}
}

@inproceedings{jiang2023vad,
  title={Vad: Vectorized scene representation for efficient autonomous driving},
  author={Jiang, Bo and Chen, Shaoyu and Xu, Qing and Liao, Bencheng and Chen, Jiajie and Zhou, Helong and Zhang, Qian and Liu, Wenyu and Huang, Chang and Wang, Xinggang},
  booktitle={Proceedings of the IEEE/CVF International Conference on Computer Vision},
  pages={8340--8350},
  year={2023}
}

@inproceedings{wang2024attack,
  title={Attack end-to-end autonomous driving through module-wise noise},
  author={Wang, Lu and Zhang, Tianyuan and Han, Yikai and Fang, Muyang and Jin, Ting and Kang, Jiaqi},
  booktitle={Proceedings of the IEEE/CVF Conference on Computer Vision and Pattern Recognition},
  pages={8349--8352},
  year={2024}
}

@inproceedings{wu2023adversarial,
  title={Adversarial driving: Attacking end-to-end autonomous driving},
  author={Wu, Han and Yunas, Syed and Rowlands, Sareh and Ruan, Wenjie and Wahlstr{\"o}m, Johan},
  booktitle={2023 IEEE intelligent vehicles symposium (IV)},
  pages={1--7},
  year={2023},
  organization={IEEE}
}

@article{zhang2024uniada,
  title={Uniada: Universal adaptive multiobjective adversarial attack for end-to-end autonomous driving systems},
  author={Zhang, Jingyu and Keung, Jacky Wai and Xiao, Yan and Liao, Yihan and Li, Yishu and Ma, Xiaoxue},
  journal={IEEE Transactions on Reliability},
  volume={73},
  number={4},
  pages={1892--1906},
  year={2024},
  publisher={IEEE}
}

@article{chahe2023dynamic,
  title={Dynamic adversarial attacks on autonomous driving systems},
  author={Chahe, Amirhosein and Wang, Chenan and Jeyapratap, Abhishek and Xu, Kaidi and Zhou, Lifeng},
  journal={arXiv preprint arXiv:2312.06701},
  year={2023}
}

@inproceedings{athalye2018synthesizing,
  title={Synthesizing robust adversarial examples},
  author={Athalye, Anish and Engstrom, Logan and Ilyas, Andrew and Kwok, Kevin},
  booktitle={International conference on machine learning},
  pages={284--293},
  year={2018},
  organization={PMLR}
}

@article{kingma2014adam,
  title={Adam: A method for stochastic optimization},
  author={Kingma, Diederik P and Ba, Jimmy},
  journal={arXiv preprint arXiv:1412.6980},
  year={2014}
}

@article{dauner2024navsim,
  title={Navsim: Data-driven non-reactive autonomous vehicle simulation and benchmarking},
  author={Dauner, Daniel and Hallgarten, Marcel and Li, Tianyu and Weng, Xinshuo and Huang, Zhiyu and Yang, Zetong and Li, Hongyang and Gilitschenski, Igor and Ivanovic, Boris and Pavone, Marco and others},
  journal={Advances in Neural Information Processing Systems},
  volume={37},
  pages={28706--28719},
  year={2024}
}

@article{caesar2021nuplan,
  title={nuplan: A closed-loop ml-based planning benchmark for autonomous vehicles},
  author={Caesar, Holger and Kabzan, Juraj and Tan, Kok Seang and Fong, Whye Kit and Wolff, Eric and Lang, Alex and Fletcher, Luke and Beijbom, Oscar and Omari, Sammy},
  journal={arXiv preprint arXiv:2106.11810},
  year={2021}
}

@inproceedings{dauner2023parting,
  title={Parting with misconceptions about learning-based vehicle motion planning},
  author={Dauner, Daniel and Hallgarten, Marcel and Geiger, Andreas and Chitta, Kashyap},
  booktitle={Conference on Robot Learning},
  pages={1268--1281},
  year={2023},
  organization={PMLR}
}

@article{virmaux2018lipschitz,
  title={Lipschitz regularity of deep neural networks: analysis and efficient estimation},
  author={Virmaux, Aladin and Scaman, Kevin},
  journal={Advances in neural information processing systems},
  volume={31},
  year={2018}
}

@inproceedings{yang2024vidar,
  title={Visual Point Cloud Forecasting enables Scalable Autonomous Driving},
  author={Yang, Zetong and Chen, Li and Sun, Yanan and Li, Hongyang},
  booktitle={Proceedings of the IEEE/CVF Conference on Computer Vision and Pattern Recognition},
  year={2024}
}

@misc{openscene2023,
  title={OpenScene: The Largest Up-to-Date 3D Occupancy Prediction Benchmark in Autonomous Driving},
  author={OpenScene Contributors},
  howpublished={\url{https://github.com/OpenDriveLab/OpenScene}},
  year={2023}
}

@article{sima2023_occnet,
  title={Scene as Occupancy}, 
  author={Chonghao Sima and Wenwen Tong and Tai Wang and Li Chen and Silei Wu and Hanming Deng  and Yi Gu and Lewei Lu and Ping Luo and Dahua Lin and Hongyang Li},
  year={2023},
  eprint={2306.02851},
  archivePrefix={arXiv},
  primaryClass={cs.CV}
}

@article{li2025hydra,
  title={Hydra-mdp++: Advancing end-to-end driving via expert-guided hydra-distillation},
  author={Li, Kailin and Li, Zhenxin and Lan, Shiyi and Xie, Yuan and Zhang, Zhizhong and Liu, Jiayi and Wu, Zuxuan and Yu, Zhiding and Alvarez, Jose M},
  journal={arXiv preprint arXiv:2503.12820},
  year={2025}
}

@misc{xie2019improvingtransferabilityadversarialexamples,
      title={Improving Transferability of Adversarial Examples with Input Diversity}, 
      author={Cihang Xie and Zhishuai Zhang and Yuyin Zhou and Song Bai and Jianyu Wang and Zhou Ren and Alan Yuille},
      year={2019},
      eprint={1803.06978},
      archivePrefix={arXiv},
      primaryClass={cs.CV},
      url={https://arxiv.org/abs/1803.06978}, 
}

@inproceedings{bouzidi2026out,
  title={Out of Sight, Out of Track: Adversarial Attacks on Propagation-based Multi-Object Trackers via Query State Manipulation},
  author={Bouzidi, Halima and Liu, Haoyu and Achamyeleh, Yonatan and Iddamsetty, Praneetsai and Al Faruque, Mohammad},
  booktitle={Proceedings of the IEEE/CVF Conference on Computer Vision and Pattern Recognition},
  pages={13326--13335},
  year={2026}
}

@article{bouzidi2025see,
  title={See No Evil: Adversarial Attacks Against Linguistic-Visual Association in Referring Multi-Object Tracking Systems},
  author={Bouzidi, Halima and Liu, Haoyu and Faruque, Mohammad Abdullah Al},
  journal={arXiv preprint arXiv:2509.02028},
  year={2025}
}
